\documentclass{article}

\usepackage{microtype}
\usepackage{graphicx}
\usepackage{subcaption}
\usepackage{booktabs}
\usepackage{multirow}
\usepackage{microtype}
\usepackage{diagbox}
\usepackage[table]{xcolor}
\usepackage{adjustbox}
\usepackage{seqsplit}
\newcommand{\grantno}[1]{No.~\seqsplit{#1}}

\usepackage{xcolor}
\definecolor{alggray}{RGB}{110,110,110}
\definecolor{algblue}{RGB}{0,90,160}

\newcommand{\algbluetxt}[1]{\textcolor{algblue}{\footnotesize #1}}
\usepackage{hyperref}

\usepackage[accepted]{icml2026}

\usepackage{amsmath}
\usepackage{amssymb}
\usepackage{mathtools}
\usepackage{amsthm}

\usepackage{aliascnt}
\usepackage{soul}

\definecolor{fluoblue}{RGB}{0,255,255}
\sethlcolor{fluoblue}
\usepackage[capitalize,noabbrev]{cleveref}

\theoremstyle{plain}
\newtheorem{theorem}{Theorem}[section]
\crefname{theorem}{Theorem}{Theorems}

\newaliascnt{proposition}{theorem}          
\newtheorem{proposition}[proposition]{Proposition}
\aliascntresetthe{proposition}              
\crefname{proposition}{Proposition}{Propositions} 

\newaliascnt{lemma}{theorem}
\newtheorem{lemma}[lemma]{Lemma}
\aliascntresetthe{lemma}
\crefname{lemma}{Lemma}{Lemmas}

\newaliascnt{corollary}{theorem}

\aliascntresetthe{corollary}
\crefname{corollary}{Corollary}{Corollaries}

\newaliascnt{definition}{theorem}
\newtheorem{definition}[definition]{Definition}
\aliascntresetthe{definition}
\crefname{definition}{Definition}{Definitions}

\newaliascnt{assumption}{theorem}
\newtheorem{assumption}[assumption]{Assumption}
\aliascntresetthe{assumption}
\crefname{assumption}{Assumption}{Assumptions}

\theoremstyle{remark}
\newaliascnt{remark}{theorem}
\newtheorem{remark}[remark]{Remark}
\aliascntresetthe{remark}
\crefname{remark}{Remark}{Remarks}

\crefname{section}{Section}{Sections}

\usepackage[textsize=tiny]{todonotes}

\icmltitlerunning{World Models in Pieces: Structural Certification for General Agents
}

\begin{document}

\twocolumn[
  \icmltitle{World Models in Pieces: Structural Certification for General Agents}



  \icmlsetsymbol{equal}{*}

  \begin{icmlauthorlist}
    \icmlauthor{Yikai Lu}{yyy}
    \icmlauthor{Yifei Wu}{yyy}
    \icmlauthor{Xinyu Lu}{yyy}
    \icmlauthor{Tongxin Li}{yyy}
  \end{icmlauthorlist}

  \icmlaffiliation{yyy}{School of Data Science, The Chinese University of Hong Kong, Shenzhen, Shenzhen, China}

  \icmlcorrespondingauthor{Tongxin Li}{litongxin@cuhk.edu.cn}

  \icmlkeywords{Machine Learning, ICML}

  \vskip 0.3in
]



\printAffiliationsAndNotice{}  

\begin{abstract}

In the big-world regime, agents cannot be universally capable and their ability is inevitably specialized across a world model in pieces. Consequently, standard uniform guarantees fail to distinguish between the understanding of critical bottlenecks and irrelevant failures. We first formalize this limitation by proving that \textit{general agents are not universal}, rendering standard worst-case analysis uninformative. To overcome this, we introduce \textbf{structural certification}, a transition-local framework that maps bounded goal-conditioned performance to entry-wise guarantees on the agent's internal world model. Our main contribution is constructive. We provide algorithms that filter specific transitions using deep compositional goals and prove that a general agent on these goals has a structural world model with a  $\mathcal{O}(1/n)+\mathcal{O}(\delta)$ error bound.  Conversely, this bound is tight in the small-$\delta$ regime, whose existence is explicitly guaranteed by our certification. These results enable the certifiable deployment of general agents by localizing the specific transitions where long-horizon planning is reliable.
\end{abstract}

\section{Introduction}
\label{sec:introduction}

Reliable long-horizon decision-making requires more than immediate reaction. To plan under a distribution shift and reason counterfactually, an agent must predict how the environment changes under its actions. That is, it needs a causal world model of the dynamics \citep{sutton1990integrated,richens2024robust,lin2024bilinear,wang2025wm3c}. At the same time, general agents are deployed in settings where exhaustive interaction is infeasible and where failures concentrate around a small number of critical bottlenecks, for example, completing realistic web tasks often hinges on a single high-leverage step such as logging in, selecting the correct item variant, or submitting a checkout form \citep{zhou2024webarena,koh2024visualwebarena}. In such settings, overall success is often determined by a small number of decision-critical \textit{bottleneck} transitions, while inaccuracies elsewhere may be irrelevant because they are not relied on along trajectories to success \citep{frauenknecht2024trustmodeltrusts,zhou2024webarena}.

\begin{figure}[t]
    \centering
    
\includegraphics[width=0.48\textwidth]{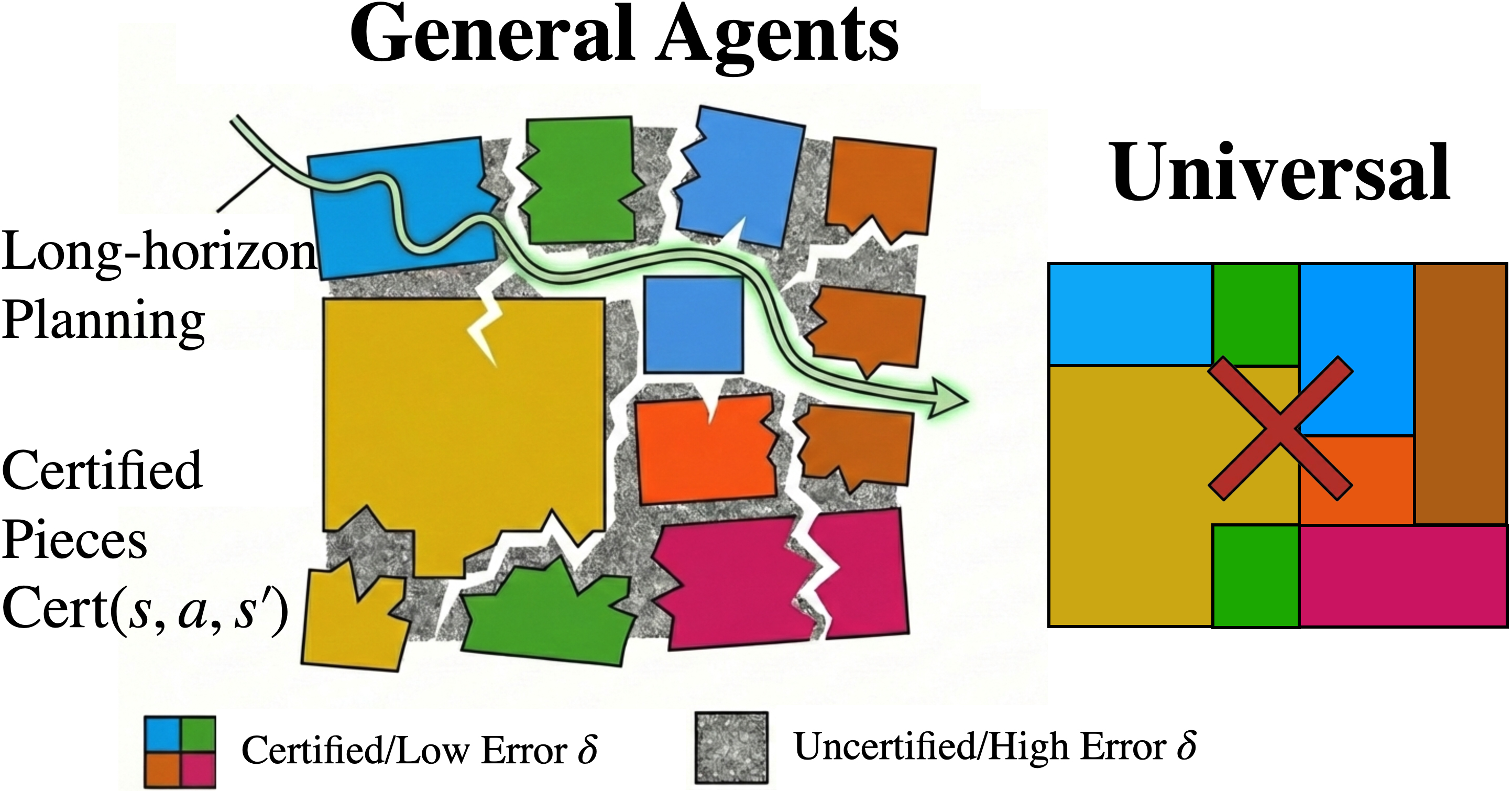} 
\caption{\textbf{World Models in Pieces.} The capability of a general agent is localized to specific \textit{certified pieces} (colored blocks) with some low error $\delta$ (see Definition~\ref{def:bounded-goal}) where its internal model provably aligns with reality. Reliable long-horizon planning (green arrow) succeeds by navigating these certified transitions rather than requiring global accuracy.}
\label{fig:world_in_pieces}
\vspace{-8pt}
\end{figure}

Under the big-world hypothesis, the effective long-horizon goal space is orders of magnitude larger than what any bounded agent can explore with a feasible interaction budget \citep{javed2024bigworld,elelimy2025rethinkingfoundationscontinualreinforcement}. Therefore, universal and uniform performance guarantees are not in a reasonable regime, and capability is inevitably uneven across the environment. A general agent may perform well on a set of tasks yet fail on another, and this gap may stay unnoticed until a long-horizon plan relies on those rare or bottleneck transitions \citep{he2025whowhen,park2024offlinebottleneck}. It is unrealistic to expect a general agent knows the world model everywhere, i.e., \textit{the general agent cannot be universal}. Therefore, to deploy a general agent on a set of specific goals, the key challenge is to identify which transitions (in other words, what pieces in the world model) the agent needs to know for long-horizon planning, and to certify that it has the required partial knowledge of the world model to achieve the goal.

While a universal assumption allows us to define a single, checkable condition of global success, it fails to capture the reality of general agents whose capabilities are inherently fragmented across a \textit{world model in pieces}.
Recently, \citep{richens2024robust,Richens2025General} show that if the performance of an agent is globally near-optimal over a large task space and a long horizon $n$, then an internal world model must exist and can be recovered from the agent's behavior. While conceptually appealing, certifying general agents requires moving beyond the assumption that the internal model must be globally valid, since a single weak transition can dominate a worst-case guarantee while being irrelevant on most well-performed trajectories \citep{gadot2024worstkernel,wang2024pessimism}. In a complex environment, an agent interacts with the world in pieces often succeeds despite having a completely wrong model of irrelevant dynamics.

Therefore, in this paper, we consider the following question for general agents:

\emph{\centering
Can we structurally certify a general agent for specific tasks when its view of the world is necessarily incomplete?}

\subsection{Contributions}
\label{sec:contributions}

We affirmatively answer the question above by presenting a transition-local notion of certification that treats an agent’s performance as a probe. Instead of assuming that a general agent is near-optimal everywhere, we assume that each general agent is a specialist of handling a subset of goals. 

\textit{General agents are not universal.} We first provide an impossibility result (\cref{prop:long_n_fails}), proving that there is no universal agent that maintains
a uniform failure rate $\delta$ (see Definition~\ref{def:bounded-goal}) over the entire goal space with
arbitrarily large goal depth $n$. This formalize the big-world regime considered in this work, necessitating a shift toward structural, transition-focused analysis.


\textit{Structural certification.} Next, we develop a constructive, transition-focused certification process that maps $(\delta, n)$-bounded performance on transition-specific goals to entry-wise alignment of the agent's predictive world model. Our main result (\cref{thm:one_algorithm and upper bound}) proves that for certified transitions, the agent’s behavior uniquely determines a dynamics estimate $\hat{P}_{ss'}(a)$ with an error bound of $\mathcal{O}(1/n) + \mathcal{O}(\delta)$. This provides a strictly tighter guarantee in finite-horizon regimes compared to existing universal analysis.


\textit{Constructive algorithms and fundamental limits.} We provide concrete filtering algorithms (\cref{alg:filter_nontrivial,alg:filter_trivial}) that utilize deep compositional goals to isolate specific transitions and distinguish mastered \textit{pieces} of the world from unreliable heuristics. Conversely, we establish a fundamental limit (\cref{thm:two_Collapse}) showing that outside these certified sets, any  construction of a consistent world model from the agents' behaviors must incur a non-trivial mismatch with the 
true world model dynamics.

These results, visualized in \cref{fig:world_in_pieces}, enable the certifiable deployment of general agents by localizing the specific regions where their internal planning is provably reliable. We discuss the related work in \cref{sec:app_related}, and \cref{fig:world_in_pieces} conceptualizes the core idea of this paper.



\section{Problem Formulation}
\label{sec:problem_formulation}

We use capital letters like $X$ to denote random variables, lower case letters like $x$ to denote a value or state, and bold letters like $\boldsymbol{X}$ for sets throughout this paper.

\subsection{Environment} 
\label{sub:env}

We consider a controlled Markov process (cMP) $\mathcal{M} = \{ \boldsymbol{S, A}, P_{ss'}(a)$ \} \citep{sutton2018rl, meyn2009markov}, which is a Markov decision process with no rewards. An agent takes an action $a \in \boldsymbol{A}$ at state $s \in \boldsymbol{S}$. The consequent transition to the next state $s' \in \boldsymbol{S}$ is governed by the transition probability $P_{ss'}(a)$. We index states and actions by the time step $t$ to denote their position in the sequence. Accordingly, a \textit{trajectory} is defined as a sequence of state-action pairs $\tau = (s_0, a_0, s_1, a_1, \ldots)$ and the \textit{history} is defined as $h_t = (s_0, a_0, \ldots, s_t)$. We use $\pi$ to denote the agent's policy. For simplicity, we make a standard assumption consistent with the literature \citep{puterman2014mdp, sutton2018rl}.

\begin{assumption}
\label{ass:cmp}
We assume the cMP environment satisfies:
(i) The state space $\boldsymbol{S}$ and the action space 
$\boldsymbol{A}$ are finite sets, and $|\boldsymbol{A}| \ge 2$.
(ii) For all 
$t \in \mathbb{N}$, $s, s' \in \boldsymbol{S}$ and $a \in \boldsymbol{A}$, the transition probability
$P_{ss'}(a) = P(s_{t+1} = s' \mid s_t = s, a_t = a)$.
(iii)  $\forall s_0, s' \in \boldsymbol{S}$, there always exists a finite sequence of actions$\text{ to reach } s' \text{  from } s_0$ with probability $1$.
\end{assumption}

With the structural properties of the environment established, we now turn to the specification of complex agent objectives. We use Linear Temporal Logic (LTL) expressions \citep{pnueli77,baierkatoen08} to model the sequential goals we expect an agent to reach. We write $\tau \models \varphi$ to indicate that the trajectory $\tau$ satisfies the LTL formula $\varphi$. Our primitive goal takes the form of $\varphi \coloneqq \mathcal{O}\big([\{s,a\}\in \boldsymbol{g}] \big)$, where $\boldsymbol{g} \subseteq \boldsymbol{S} \times \boldsymbol{A}$ and the temporal operator $\mathcal{O}$ is restricted to $\mathcal{O} \in \{\top, \bigcirc, \Diamond \}$, corresponding to \emph{Now}, \emph{Next}, and \emph{Eventually}, where \emph{Now} $(\top)$ indicates a goal action must be taken at the current time step $t$; \emph{Next} $(\bigcirc)$ expresses a goal state must be reached at the next time step and \emph{Eventually} $(\Diamond)$ formulates a goal state must be reached in the future \citep{pnueli77,baierkatoen08}. For example, $\varphi = \bigcirc ([S = s])$ indicates that the agent must reach state $s$ at the next time step.

To express an ordered sequence of goals, we use $\psi \coloneqq \langle \varphi_1, \varphi_2, \ldots \rangle$ to indicate the agent must satisfy goal $\varphi_1$ before satisfying $\varphi_2$, and so on \citep{kressgazit09}. We use $depth(\psi) = n$ to denote the number of ordered goals in $\psi$ (or temporal horizon \citep{DBLP:journals/iandc/DemriS02}). Moreover, $\psi=\bigvee_{k=1}^m \psi_k$ denotes a composite goal formed as the disjunction of $m$ sequential goals \citep{vardi_wolper86}. Collecting all possible composite goals into a set $\Psi \coloneqq \{\psi\}$ as the universal goal set for a given agent and environment, we use $\Psi_n \coloneqq \{ \psi \mid depth(\psi) \le n\}$ to denote the universal goal set with a maximum depth $n$.

\subsection{Universal Agents}

We are interested in goal-conditioned agents \citep{liu22gcrl,schaul15} where policies $\pi$ are determined by histories $h_t$ and goals $\psi$. For simplicity, we here follow the assumption that the environment is fully observed by the agent and the agent follows a deterministic policy \citep{Richens2025General}. While $\Psi$ allows arbitrarily complex composite goals, an optimal universal goal-conditioned agent is a policy $\pi(a_t \mid h_t; \psi)$ that maximizes the probability that $\psi$ is reached for all composite goals $\psi$ in the universal goal set $\Psi$. However, it is not realistic to require an agent to be simultaneously universal and optimal \citep{reichlin2024suboptimalgcrl}. Some previous works relax the performance requirement to satisfy a failure rate $\delta \in [0, 1]$ for a universal goal set $\Psi_n$ with a maximum depth $n$ compared to a universal optimal policy $\pi^*$ \citep{zhu2023offlinegcrl}. Formally, a universal goal-conditioned agent $\pi(a_t \mid h_t; \psi)$ satisfies,
\begin{equation}
\label{eq:goal-conditioned}
P(\tau \models \psi \mid \pi, s_0) \ge (1 - \delta) \max_{\pi} P(\tau \models \psi \mid \pi, s_0), 
\end{equation}
for all $\psi \in \Psi_n$. Here, $\delta$ denotes the uniform failure rate over $\Psi_n$ and $n$ is a maximum goal depth and $s_0$ is an arbitrary initial state.

As shown in \eqref{eq:goal-conditioned}, the universal bounded goal-conditioned agent assumes that the agent has a uniform failure rate for universal goals. In this scenario, the failure rate can be dominated by a few worst-performing transitions. Empirically, recent studies on LLM-based agents also demonstrate that performance varies significantly across environmental complexities \citep{li2025word,xi2024agentgymevolvinglargelanguage}. Therefore, a universal guarantee is too coarse to define general agents in reality, motivating structural analysis on goal-specific transitions.

\textbf{Impossibility of Universal Agents.}
We now advance the discussion from practical hurdles to a formal impossibility theorem. Specifically, we prove that a universal agent capable of maintaining a non-trivial performance guarantee uniformly over a universal goal set cannot exist (\cref{prop:long_n_fails}). This finding implies that seeking uniform bounds over universal tasks is futile. We need to shift our perspective from the unattainable  universal agents to a more structural paradigm for general agents.

\subsection{General Agents}

General agents cannot handle universal tasks with uniform performance. To tackle this issue, previous approaches rely on temporal-logic specifications to constrain policies for reinforcement learning  \citep{hasanbeig19cdc,voloshin22neurips} and employ model checking to analyze learned policies \citep{gross24icaart}. However, this does not directly yield which specific transitions the agent models accurately. We therefore focus on constructing goal subsets that isolate a specific transition $(s, a, s')$.

\begin{definition}[{$(s, a, s')$-specific goal set}]
\label{def:specific-goals}
Consider a universal goal set $\Psi_n$ with a maximum depth $n$ and a specific transition $(s, a, s')$ governed by the transition probability $P_{ss'}(a)$. We say a subset $\Psi_n(s, a, s') \subseteq \Psi_n$ is a \emph{$(s, a, s')$-specific goal set} if and only if for any goal $\psi \in \Psi_n(s, a, s')$, the optimal success probability of $\psi$ is invariant to all transition probabilities except $P_{ss'}(a)$.
Formally, $\forall \psi \in \Psi_n(s, a, s')$, 
\begin{equation}
    \max_{\pi}P(\tau \models \psi \mid \pi, s_0) = f_\psi(P_{ss'}(a))
\label{eq:specific_goal_condition}
\end{equation}
holds for some function $f_\psi$ that depends exclusively on the transition probability $P_{ss'}(a)$.
\end{definition}

Intuitively, this set of goals can isolate goals for a specific transition $(s, a, s')$, decoupling the optimal success probability from behavior in the rest of the environment, allowing us to shift from universal guarantees \citep{Richens2025General} over $\Psi_n$ to transition-local analysis driven only by $P_{ss'}(a)$. We therefore study performance guarantees on $\Psi_n(s,a,s')$, where the agent can plausibly achieve some better performance (i.e, a smaller failure rate $\delta$), compared with universal analysis \citep{schaul15,liu22gcrl}.

To formalize such transition-local guarantees, we define the specific transition $(s, a, s')$ in which a general agent reaches a bounded goal-conditioned performance with a fixed failure rate $\delta$ and maximum depth $n$ as below.

\begin{definition}[{$(\delta, n)$-bounded specific goal set}]
\label{def:bounded-goal}

Let $(s, a, s')$ be a specific transition and $\Psi_n(s, a, s')$ be its corresponding specific goal set with a maximum depth of $n$. Given an agent $\pi(a_t \mid h_t; \psi)$, we say a set of goals $\Psi_{\delta , n}(s, a, s') \subseteq \Psi_n(s, a, s')$ is a \emph{$(\delta, n)$-bounded specific goal set} if for all $\psi \in \Psi_{\delta ,n}(s, a, s')$, the following inequality holds:
\begin{equation}
P(\tau \models \psi \mid \pi, s_0) \ge (1 - \delta) \max_{\pi} P(\tau \models \psi \mid \pi, s_0).
\end{equation}
\end{definition}

For a given environment and a given general agent $\pi(a_t \mid h_t; \psi)$, \cref{def:bounded-goal} defines a class of specific transition goal set for each transition on which a general bounded goal-conditioned agent satisfies a performance guarantee. Based on this, we are interested in whether a general agent's high performance on a $(\delta, n)$-bounded specific goal set recovers a near-accurate predictive world model.

Hence, rather than relying on a loose $\delta$ over an asymptotic $n$, we investigate whether general agents contain accurate internal representations by demonstrating high capabilities (or small $\delta$) over a small horizon $n$. We provide an explicit construction (see \cref{app:thm1proof}) of a non-trivial specific $(\delta, n)$-bounded goal set $\Psi_{\delta ,n}(s, a, s')$ with $|\Psi_{\delta ,n}(s, a, s')| \ge n$. We will further discuss how to certify agent's performance over this construction of goals and further derive an upper bound for the error of predictive world models $\hat{P}_{ss'}(a)$ implied by agent's behaviors on the constructive goal set with given $\delta$ and $n$ in the following section. 

In \cref{tab:agent_comparison} (see Appendix~\ref{app:table}), we summarize the key differences between our setup and previous works.

\section{Main Results}

Before presenting our main results, we first characterize the impossibility of assuming a universal agent maintains a uniform failure rate $\delta$ over the entire goal space $\Psi_n$ with arbitrarily large goal depth $n$. Our aim is to construct a composite goal $\psi_{\text{fail}}$ where a non-optimal general agent $\pi$ is not bounded goal-conditioned on this goal for any non-trivial uniform failure rate $\delta \in (0, 1)$, which directly demonstrates that pursuing uniform guarantees over universal goals is impossible. We now formalize this construction as follows.

\begin{proposition}[General agents are not universal]
\label{prop:long_n_fails}
Let the environment satisfy \cref{ass:cmp} and let $\pi$ be any non-optimal deterministic Markovian policy $\pi(a_t \mid h_t; \psi) = \pi(a_t \mid s_t; \psi)$. For any $\delta \in (0, 1)$, there exists a maximum goal depth $N$ and some goal $\psi_{\text{fail}} \in \Psi_N$ such that:
\[
P(\tau \models \psi_{\text{fail}} \mid \pi, s_0) < (1 - \delta) \max_{\pi} P(\tau \models \psi_{\text{fail}} \mid \pi, s_0).
\]
\end{proposition}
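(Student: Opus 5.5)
Since $\pi$ is non-optimal, there is some goal $\psi_0$, of some depth $n_0$, and a start state $s_0$ on which $\pi$ underperforms. Write $p \coloneqq P(\tau \models \psi_0 \mid \pi, s_0)$ and $p^* \coloneqq \max_{\pi'} P(\tau \models \psi_0 \mid \pi', s_0)$, so that $p < p^*$ and the ratio $\rho \coloneqq p/p^* < 1$ (with $\rho = 0$ if $p = 0 < p^*$). The plan is to amplify this constant-factor gap by repetition: a deep compositional goal chains $k$ copies of $\psi_0$, and $\pi$'s success probability should decay like $\rho^k$ relative to the optimum. Choosing $k$ with $\rho^k < 1-\delta$ and setting $N \coloneqq k\,n_0 + (\text{overhead})$ then gives the claim.

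To make the repetitions independent, I would insert a reset between copies. By \cref{ass:cmp}(iii), from any state there is a finite action sequence reaching $s_0$ with probability $1$. The goal $\psi_{\text{fail}}$ is therefore the sequential concatenation
\[
\psi_{\text{fail}} = \langle \psi_0, \Diamond([S=s_0]), \psi_0, \Diamond([S=s_0]), \ldots, \psi_0 \rangle ,
\]
with $k$ copies of $\psi_0$. When $\psi_0$ is itself a disjunction, I distribute the concatenation over the disjuncts so that $\psi_{\text{fail}}$ still has the form allowed in $\Psi$. Its depth is at most $k(n_0+1)$.

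The key steps, in order, are as follows.
\begin{enumerate}
\item \textbf{Lower bound on the optimum.} An optimal history-dependent policy can run an optimal strategy for each block and return to $s_0$ almost surely in between. This gives $\max_{\pi'} P(\tau\models\psi_{\text{fail}}) \ge (p^*)^k$.
\item \textbf{Upper bound on $\pi$.} Because $\pi$ is deterministic and Markov in $(s_t,\psi)$, conditioning on each block starting from $s_0$ and applying the strong Markov property at the block boundaries should give $P(\tau\models\psi_{\text{fail}}\mid\pi) \le p^k$.
\item \textbf{Conclusion.} Combining the two bounds gives a ratio of at most $\rho^k$, which is below $1-\delta$ for large $k$.
\end{enumerate}

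The main obstacle is step 2. The goal argument $\psi_{\text{fail}}$ differs from $\psi_0$, so $\pi(\cdot\mid s;\psi_{\text{fail}})$ need not coincide with $\pi(\cdot\mid s;\psi_0)$. Moreover, a Markov policy cannot tell which block it is in, so it behaves identically on every visit to a state and cannot ``remember'' block progress. My first attempt would be to read the non-optimality hypothesis as saying that memorylessness itself costs something on some goal. Concretely, I would pick $\psi_0$ requiring the agent to take different actions at the same state at different stages, for example $\langle \top([\{s,a\}]), \Diamond([S=s]), \top([\{s,a'\}])\rangle$ with $a\neq a'$, which is possible since $|\boldsymbol{A}|\ge 2$. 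A deterministic Markov policy must then fail one of the two stages with probability bounded away from zero for every goal label. This yields a per-block success factor $\rho<1$ that is uniform in the goal label, and chaining $k$ such blocks gives the $\rho^k$ decay. Meanwhile, a history-dependent optimal policy, which is what the $\max_\pi$ ranges over, succeeds with probability $1$ by \cref{ass:cmp}(iii). In fact this gadget already forces ratio $0$ when $s$ must be revisited deterministically, which would make the argument essentially immediate.
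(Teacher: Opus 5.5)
Your final argument is correct, and it takes a genuinely different route from the paper.

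\textbf{The paper's route.} The paper follows your initial plan. It starts from a goal $\psi$ on which $\pi$ has gap $\gamma$. If $\delta\le\gamma$ it uses $\psi$ directly. Otherwise it concatenates $N$ copies of $\psi$ separated by returns to $s_0$, writes $P(\tau\models\psi_{\text{fail}}\mid\pi,s_0)$ as a product of per-block success probabilities of $\pi$ on $\psi$, and bounds the ratio by $(1-\gamma)^N<1-\delta$ for $N\ge\lceil\log(1-\delta)/\log(1-\gamma)\rceil$. The difficulty you flag in step 2 is exactly the factorization the paper asserts ``by the Markovian property of $\pi$''. That difficulty is that the agent conditioned on $\psi_{\text{fail}}$ need not act as it does under $\psi$, and a memoryless agent cannot track block progress. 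The paper does not address this change of goal label.

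\textbf{Your route, stated cleanly.} Your memorylessness gadget removes that issue, and you should present it as the entire proof.
\begin{enumerate}
\item Take $\psi_{\text{fail}}=\langle\top([\{s_0,a\}]),\Diamond([\{s_0,a'\}])\rangle\in\Psi_2$ with $a\neq a'$. Putting $\Diamond$ on the state-action pair avoids any ambiguity about where a $\top$ following a $\Diamond([S=s])$ is anchored.
\item $\pi(\cdot\mid s_0;\psi_{\text{fail}})$ prescribes one fixed action at every visit to $s_0$. So no trajectory of $\pi$ contains $(s_0,a)$ followed later by $(s_0,a')$, and $\pi$ succeeds with probability exactly $0$, however revisits happen.
\item A history-dependent policy takes $a$, returns to $s_0$ almost surely by \cref{ass:cmp}(iii), then takes $a'$. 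So the maximum is $1$, though positivity is all that is needed.
\end{enumerate}
The ratio is then $0<1-\delta$ at constant depth. The discussion of a per-block factor $\rho$ and $\rho^k$ chaining is unnecessary.

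\textbf{What each route buys.} Yours is elementary and rigorous. However, it never uses non-optimality; in fact it shows every deterministic Markov policy is non-optimal. It also rests entirely on $\max_\pi$ ranging over history-dependent policies. So it says nothing about agents with memory, which solve the gadget perfectly. The paper's amplification argument targets the phenomenon the proposition is meant to convey: a fixed per-goal gap compounding with depth. It could in principle reach beyond memoryless agents, but only once the per-block factorization is actually justified.
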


The proof of \Cref{prop:long_n_fails} is given in \cref{app:proof_prop}.
\cref{prop:long_n_fails} reveals a fundamental impossibility of seeking universal guarantees in complex environments. However, this negative result does not contradict the existence of general agents, as specialists of a particular set of goals. In other words, a general agent, while not capable of completing all goal-conditioned tasks, may still exhibit optimal or near-optimal behavior on specific problems.

This again necessitates our focus on general agents, but not universal agents. Rather than requiring universal guarantees, we investigate whether a bounded goal-conditioned performance for a subset of goals on a specific transition is sufficient to certify the accuracy of the predictive world model. Specifically, when the agent's performance satisfies a given failure rate $\delta$ on a specific subset of goals $\Psi_{\delta ,n}(s,a,s')$ with maximum depth $n$, its predictive world model $\hat{P}_{ss'}(a)$ must necessarily align with reality. The following theorem formalizes this intuition by giving constructive certification algorithms and a tight alignment bound for the induced transition probability estimate for certified entries.

\begin{theorem}[Structural certification]
\label{thm:one_algorithm and upper bound}

Let $P_{ss'}(a)$ be the transition probabilities of an environment satisfying \cref{ass:cmp}. Consider an agent with a deterministic policy $\pi$. Fix $\delta\in[0,0.5)$ and a maximum goal depth $n > 1$. There exist filtering algorithms (for example, \cref{alg:simplified_filter_recover}) that filter a specific $(\delta, n)$-bounded goal set with any given $(\delta, n)$.

Moreover, for each certified transition $(s,a,s')$, the agent's policy $\pi$ fully determines a transition probability estimate $\hat{P}_{ss'}(a)$ with:
\begin{equation}
\begin{aligned}
\left| \hat{P}_{ss'}(a) - P_{ss'}(a) \right|
\le \; & \frac{1}{2(n+1)} \left( \frac{1}{1-2\delta} \right) \\
&+ \; \frac{\delta}{1-2\delta}\,P_{ss'}(a)\big(1-P_{ss'}(a)\big).
\end{aligned} \notag
\end{equation}

In particular, when $\delta\ll 1$, the error scales as:
\[ \left| \hat{P}_{ss'}(a) - P_{ss'}(a) \right|
\; = \mathcal{O} \left(\frac{1}{n}\right)+\mathcal{O}(\delta).
\]
\end{theorem}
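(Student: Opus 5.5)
We will follow the strategy of \citet{Richens2025General}, localized to one transition. Fix $(s,a,s')$ and write $p=P_{ss'}(a)$. By \cref{ass:cmp}(iii) the agent can return to $s$ with probability one, so repeated trials of $(s,a)$ are available. For $k\in\{0,1,\dots,n\}$ we build composite goals $\psi_k^{\le}$ and $\psi_k^{>}$. The first requires that, over $n$ ordered sub-goals of the form $\langle \top([\{s,a\}]), \bigcirc([S=s']) \text{ or } \bigcirc([S\neq s'])\rangle$ with returns in between via $\Diamond([S=s])$, the number of transitions landing in $s'$ is at most $k$. The second requires that this number exceed $k$. Each is a disjunction of sequential goals of depth at most $n$. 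Choose the goals so that, at each trial, the agent also has a ``safe'' alternative action (using $|\boldsymbol{A}|\ge2$) whose outcome is fixed deterministically by the goal.

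\textbf{Step 1 (specificity).} We check that $\max_\pi P(\tau\models\psi)$ depends only on $p$. The returns succeed with probability one by \cref{ass:cmp}(iii), and only action $a$ at $s$ carries randomness relevant to the goal. So these goals lie in $\Psi_n(s,a,s')$ with $f_\psi(p)=\sum_{j\le k}\binom{n}{j}p^j(1-p)^{n-j}$ or its complement. This gives the explicit construction with $|\Psi_{\delta,n}|\ge n$.

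\textbf{Step 2 (filtering algorithms).} \cref{alg:simplified_filter_recover} queries $\pi$ on these goals and retains exactly those satisfying the $(1-\delta)$ inequality, yielding $\Psi_{\delta,n}(s,a,s')$. It certifies $(s,a,s')$ when the retained family contains a complementary pair $\psi_k^{\le},\psi_k^{>}$ for every $k$. We split into the nontrivial case $p\in(0,1)$ and the trivial case $p\in\{0,1\}$, where the agent's choice is deterministic and the estimate is exact. This is the content of \cref{alg:filter_nontrivial,alg:filter_trivial}.

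\textbf{Step 3 (estimation and bound).} For deterministic $\pi$ facing the choice ``take $a$ now versus the safe action'' at the decision point where $k$ successes are needed among the remaining trials, $\pi$ picks $a$ iff it judges that the binomial tail favors it. Define $\hat k$ as the threshold index where the agent's choice switches, and set $\hat P_{ss'}(a)=(\hat k+1/2)/(n+1)$. This is the midpoint of the switching cell, which produces the $1/(2(n+1))$ term. Optimality of the switch at the true $p$ would place $p$ within the same cell, since binomial-tail comparisons switch at $p\approx k/(n+1)$, by the standard identity relating adjacent binomial terms. With $\delta$-suboptimality, the agent can choose the wrong action only when the values of the two options are within a factor $(1-\delta)$ of each other. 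Using both the $\le$ and $>$ goals, the two near-optimality inequalities add up, giving a denominator $1-2\delta$. This is where $\delta<1/2$ is needed.

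\textbf{Main obstacle.} Converting the multiplicative slack into a bound on $p$. We plan to write the value gap as a difference of success probabilities that is linear in $p-k/(n+1)$ with slope normalized against a variance term $p(1-p)$. A misclassification then forces $|p-\hat p|(1-2\delta)\le \frac{1}{2(n+1)}+\delta\,p(1-p)$, which is the stated bound. The delicate point is obtaining exactly the $p(1-p)$ factor rather than a cruder constant. We will first try comparing the two goals via $f_{\le}+f_{>}=1$, which makes the $\delta$-error of the pair proportional to $f_{\le}f_{>}$, and then bounding that product by $p(1-p)$ at the switching index. The $\mathcal{O}(1/n)+\mathcal{O}(\delta)$ form follows immediately for $\delta\ll1$.
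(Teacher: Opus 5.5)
There is a genuine gap, and it comes from your choice of probe goals. You compare cumulative tails (``at most $k$'' versus ``more than $k$'' successes), so an exactly optimal agent switches at the \emph{median} of $\mathrm{Bin}(n,p)$, not at $p\approx k/(n+1)$. The adjacent-term identity you cite compares two \emph{single} binomial terms: the ratio of $\binom{n}{k+1}p^{k+1}(1-p)^{n-k-1}$ to $\binom{n}{k}p^{k}(1-p)^{n-k}$ is $\frac{(n-k)p}{(k+1)(1-p)}$. That locates the mode, not the point where $P(X\le k)=\tfrac12$.

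The median cells do not have width $\frac{1}{n+1}$. At $\delta=0$ the agent takes ``at most $0$'' iff $(1-p)^n\ge\tfrac12$, so the bottom cell is $[0,1-2^{-1/n}]$; symmetrically the top cell is $(2^{-1/n},1]$. Since $(1+\frac1n)^n>2$ for $n\ge 2$, both end cells are narrower than $\frac{1}{n+1}$. Hence some interior cell is wider than $\frac{1}{n+1}$, and \emph{no} estimator read off the switch can meet the $\delta=0$ bound $\frac{1}{2(n+1)}$ on transitions your rule certifies (an optimal agent passes every probe). Concretely, for $n=2$ and $p=0.3$, an optimal agent prefers ``more than $0$'' (since $0.7^2<\tfrac12$) and ``at most $1$''. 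So $\hat k=1$, $\hat P_{ss'}(a)=\tfrac12$, and the error $0.2$ exceeds $\tfrac16$.

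The $\delta$-part of your plan also fails:
\begin{itemize}
\item At the switch $f_{\le}\approx f_{>}\approx\tfrac12$, so $f_{\le}f_{>}\approx\tfrac14$ for every $p$ and cannot be bounded by $p(1-p)$.
\item Since $\frac{d}{dp}P(X\le k)=-n\binom{n-1}{k}p^k(1-p)^{n-1-k}$, a multiplicative slack on a tail becomes (for $np(1-p)\gg1$) a slack in $p$ of order $\delta\sqrt{p(1-p)/n}$, not $\delta\,p(1-p)$.
\item The factor $1-2\delta$ does not arise from adding the two near-optimality inequalities.
\end{itemize}

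The missing idea is the paper's exact-count construction. It disjoins $\psi_a(r,n)$ (first action $a$, then exactly $r$ of the $n$ trials of $(s,a)$ land in $s'$) with $\psi_b(r+1,n)$ (first action $b$, exactly $r+1$). The agent's first action then compares \emph{adjacent} binomial terms, and $(1-\delta)$-boundedness gives linear-fractional constraints: $p\le\frac{r+1}{n+1-\delta(n-r)}$ if $\pi$ plays $a$, and $p\ge\frac{(r+1)(1-\delta)}{n+1-(r+1)\delta}$ if it plays $b$. At $\delta=0$ these pin $p$ to $[\frac{k}{n+1},\frac{k+1}{n+1}]$ at the switch $k$. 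That is what makes the midpoint $\frac{k+1/2}{n+1}$ and the term $\frac{1}{2(n+1)}$ correct; for $n=2$, $p=0.3$ it returns $\tfrac16$.

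For $\delta>0$, each endpoint distance is $\frac{1}{2(n+1)}$ plus a bias such as $\frac{\delta k(n+1-k)}{(n+1)(n+1-k\delta)}\le\frac{\delta}{1-\delta}\,q(1-q)$ with $q=\frac{k}{n+1}$ (analogously with $q=\frac{k+1}{n+1}$). The $p(1-p)$ factor appears by replacing $q(1-q)$ with $p(1-p)+\frac{1}{2(n+1)}+\varepsilon$, using that $x(1-x)$ is $1$-Lipschitz on $[0,1]$. Solving $\varepsilon\le\frac{1}{2(n+1)}+\frac{\delta}{1-\delta}\bigl(p(1-p)+\frac{1}{2(n+1)}+\varepsilon\bigr)$ for $\varepsilon$ gives the coefficient $1-\frac{\delta}{1-\delta}=\frac{1-2\delta}{1-\delta}$. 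This is where $1-2\delta$ and the restriction $\delta<\tfrac12$ come from.
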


\begin{algorithm}[ht!]
\caption{Structural Certification for $(s,a,s')$}
\label{alg:simplified_filter_recover}
\begin{algorithmic}[1]
\REQUIRE Deterministic policy $\pi(\cdot \mid h_t;\psi)$
\REQUIRE Target specific transition $(s,a,s')$, along with an alternative action $b\neq a$
\REQUIRE Horizon $n \in \mathbb N$, failure rate $\delta \in [0,0.5)$
\REQUIRE Anchor transition probability $P_{ss'}(a)$
\ENSURE Certificate flag $\mathrm{Cert}(s,a,s')$ and estimate $\hat P_{ss'}(a)$

\STATE Initialize $p_{\min}\gets 0$, $p_{\max}\gets 1$, $r^\star\gets \mathrm{Null}$.
\STATE Initialize $\mathrm{Cert}(s,a,s')\gets \mathrm{False}$ and $\hat P_{ss'}(a)\gets \mathrm{Null}$.

\STATE \textbf{Goal construction.} Construct a family of probe goals by repeating the transition pattern $s \xrightarrow{a} s'$ for $n$ trials.

\FOR{$r=0$ \textbf{to} $n-1$}

    \STATE \textbf{Policy query.} Construct two competing probe goals:

    \STATE $\psi_a(r,n): $ exactly $r$ occurrences of $s\xrightarrow{a}s'$.

    $\psi_b(r+1,n): $ exactly $r + 1$ occurrences of $s\xrightarrow{a}s'$.
    
    \STATE Query the preferred first action:
    \[
    a_r \gets 
    \arg\max_{x\in\{a,b\}}
    \pi(x\mid s;\, \psi_a(r,n)\vee \psi_b(r+1,n)).
    \]

    \STATE \textbf{Certification update.}
    \IF{$a_r=a$}
        \STATE $p_{\max}\gets \min\left\{ p_{\max}, \dfrac{r+1}{n+1-\delta(n-r)} \right\}$.
        \IF{$r^\star=\mathrm{Null}$}
            \STATE $r^\star\gets r$.
        \ENDIF
    \ELSE
        \STATE $p_{\min} \gets \max\left\{p_{\min}, \dfrac{(r+1)(1-\delta)}{n+1-(r+1)\delta} \right\}$.
    \ENDIF

\ENDFOR

\STATE \textbf{Certification.} Check whether
\[
P_{ss'}(a)\in[p_{\min},p_{\max}].
\]

\IF{$P_{ss'}(a)\in[p_{\min},p_{\max}]$ \AND $r^\star\neq\mathrm{Null}$}
    \STATE $\mathrm{Cert}(s,a,s')\gets \mathrm{True}$.
    \STATE \textbf{Estimate.} Set
    \[
    \hat P_{ss'}(a)\gets \frac{r^\star+0.5}{n+1}.
    \]
\ENDIF

\STATE \textbf{return} $\big(\mathrm{Cert}(s,a,s'),\hat P_{ss'}(a)\big)$.
\end{algorithmic}
\end{algorithm}

The proof of \Cref{thm:one_algorithm and upper bound} is in \cref{app:thm1proof} and the structural certification algorithm is shown in \cref{alg:simplified_filter_recover} (see \cref{alg:filter_nontrivial} and \cref{alg:filter_trivial} in \cref{app:algorithm} for details). Intuitively, the filter constructs a transition-isolating family of probe goals whose pass/fail pattern induces a switching threshold around the true value $P_{ss'}(a)$. By dividing $[0,1]$ into $n+1$ levels, the algorithm localizes $P_{ss'}(a)$ between neighboring grid points. This gives an ideal resolution of order $\frac{1}{n+1}$ in the optimal case, and hence an error scale of roughly $\frac{1}{n+1}$. The role of the failure rate $\delta$ is to blur this threshold through bounded sub-optimality, yielding the additional $\mathcal{O}(\delta)$ term in \cref{thm:one_algorithm and upper bound}.

\cref{thm:one_algorithm and upper bound} shows how strong goal-conditioned performance translates into an accuracy guarantee for the agent’s predictive world model. It demonstrates that universal performance, especially uniform guarantees on planning over large horizon $n$, are not a necessary condition for determining an accurate predictive world model $\hat{P}_{ss'}(a)$. While planning capabilities are not controllable and often governed by highly non-linear dynamics, the failure rate $\delta$ often collapses through discontinuous leaps rather than predictable evolutions \citep{janner2019when,lambert2020objective}. Therefore, requiring a large horizon $n$ may be a significant challenge. While prior works \citep{Richens2025General} rely on some large horizon $n$ to guarantee the existence of accurate predictive world model since the upper bound of the approximation error scales to $\mathcal{O}({1}/{\sqrt{n}})$, our result shows that an agent could have a precise predictive world model on specific transitions with small failure rate $\delta$ even if horizon $n$ is not that large since our upper bound scales as $\mathcal{O}({1}/{n})$. This confirms that general agents allow for highly efficient and structural identification on some specific transitions with high performance (or small $\delta$).

This result also motivates a constructive approach to performance certification. Accordingly, we consider \cref{alg:filter_nontrivial} and \cref{alg:filter_trivial} as certification mechanisms that algorithmically certify a specific $(\delta ,n)$-bounded goal set for each specific transition. In particular, for each candidate transition $(s, a, s')$ and given parameters $(\delta, n)$, our algorithms construct a transition-isolating goal set and output a \textit{certificate flag} $\text{Cert}(s,a,s')$ using an anchored transition probability $P_{ss'}(a)$. Conditioning on $\text{Cert}(s,a,s') = \text{True}$, the agent’s behavior can uniquely determine an induced estimate (or predictive world model) $\hat{P}_{ss'}(a)$ with the error bound presented in~\Cref{thm:one_algorithm and upper bound}.

\paragraph{Properties of certification algorithms.} The construction of 
\cref{alg:filter_nontrivial} and \cref{alg:filter_trivial} involves a filtering step that requires access to the transition probability $P_{ss'}(a)$ or a high-confidence empirical estimate thereof. This requirement is plausible in many settings where a simulator \citep{ha2018worldmodels,hafner2020dreamer}, logged interaction data \citep{levine2020offline}, or targeted system identification \citep{ljung1999sysid} can provide reliable local transition statistics. In order to make the result concise, we here directly use real transition probabilities. Crucially, only filtering requires prior knowledge of $P_{ss'}(a)$, and the subsequent recovery step for determining predictive transition probabilities $\hat{P}_{ss'}(a)$ from the agent's behavior is unsupervised. Hence, it is worth noting that, while \cref{thm:one_algorithm and upper bound} yields a tighter upper bound when $\delta$ is small and $n$ is moderate, our goal is not to reconstruct the full environment dynamics. Rather, we aim to characterize the accuracy of the agent's predictive world model on transitions where we can verify $(\delta,n)$-boundedness on the constructed goal sets. Consequently, certified transitions are precisely those that the agent can reliably exploit for compositional planning. In this sense, $P_{ss'}(a)$ is an external reference that anchors certification, rather than an ingredient for model recovery. We will discuss more consequences of \cref{thm:one_algorithm and upper bound} in \cref{sec:disc}.

Furthermore, a policy-only approach that infers a dynamics model by simply \textit{rationalizing} the agent's behavior cannot reliably recover the real world model or guarantee alignment with an insufficient goal depth $n$. We define \textit{rationalization} as the construction of a model $\widehat{\mathcal M}$ where the policy $\pi$ satisfies the same behavioral specifications on a subset of universal goals as it does in the true environment (e.g., $\pi$ is $(\delta,n)$-bounded). The following theorem formalizes this limitation.


\begin{theorem}
\label{thm:two_Collapse}
Let $\hat{\Psi}_n \subseteq \Psi_n$ be any set of goals with a maximum depth $n$. Fix an expected failure rate $\delta$ and suppose $\gamma>\delta$. Assume $\pi$ is not bounded goal-conditioned on $\hat{\Psi}_n$ under the true model. Formally, for all $\psi\in\hat{\Psi}_n$,
\[
P(\tau \models \psi \mid \pi, s_0)
< (1-\gamma)\max_{\pi} P(\tau \models \psi \mid \pi, s_0).
\]
We can construct a world model $\widehat{\mathcal M}$ that rationalizes the agent's performance on $\hat{\Psi}_n$. That is, 
for all $\psi\in \hat{\Psi}_n$,
\[
P_{\widehat{\mathcal M}}(\tau \models \psi \mid \pi, s_0)
\ge (1-\delta)\max_{\pi} P_{\widehat{\mathcal M}}(\tau \models \psi \mid \pi, s_0),
\]
and moreover there exists a transition $(s,a,s')$ such that
\[
\left| P_{\widehat{\mathcal M}ss'}(a) - P_{ss'}(a) \right|
>
\frac{(\gamma-\delta)\max_{\pi}P(\tau \models \psi \mid \pi, s_0)}{(2-\delta)n}
\]
where $P_{\widehat{\mathcal M}ss'}(a)$ denotes the transition probability of $\widehat{\mathcal M}$.
\end{theorem}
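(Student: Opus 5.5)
The plan is to build $\widehat{\mathcal M}$ by perturbing the true dynamics so that the agent's fixed behavior looks $(1-\delta)$-near-optimal. The perturbation has two effects. It lowers the optimal success value of each goal in $\hat{\Psi}_n$ toward what $\pi$ actually achieves. At the same time, it keeps $\pi$'s own success probability essentially unchanged, or lowers it less. The lower bound on the parameter gap then comes from a sensitivity (Lipschitz) argument. Success probabilities of depth-$\le n$ goals change by at most roughly $n$ times the per-transition perturbation. So closing a performance gap of size $(\gamma-\delta)\max_\pi P$ forces some entry to move by more than that gap divided by a factor of order $n$.

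The construction, concretely, fixes the optimal policy $\pi^*_\psi$ in the true model. Along the transitions where $\pi^*_\psi$ departs from $\pi$, I shift mass in $\widehat{\mathcal M}$ away from the goal-satisfying successor toward a non-goal successor. The goal is $\max_{\pi'} P_{\widehat{\mathcal M}}(\psi) \le P(\tau\models\psi\mid\pi)/(1-\delta)$, while the transitions along $\pi$'s own trajectories are left intact. If $\pi$ and $\pi^*$ share transitions, an alternative is to degrade the model more globally, e.g.\ by mixing toward a "sink" behavior. In that case I would check directly that $\pi$'s success is at least $(1-\delta)$ times the new optimum. The mechanism for that check is a continuity/intermediate-value argument on a one-parameter family $\widehat{\mathcal M}_\lambda$ interpolating from $\mathcal M$ to a degenerate model in which every policy achieves the same value, so that $\pi$ is trivially optimal. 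The ratio $P_\lambda(\pi)/\max P_\lambda$ moves continuously from below $1-\gamma$ up to $1$, so some $\lambda$ attains exactly $1-\delta$. Taking the first such $\lambda$ handles all finitely many $\psi\in\hat\Psi_n$ simultaneously, via a common $\lambda$ equal to the maximum over goals; I still need to verify monotonicity of the ratio in $\lambda$ so that larger $\lambda$ preserves the inequality.

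For the gap bound I use a telescoping/simulation-lemma estimate. A goal of depth at most $n$ is decided through at most $n$ relevant transition events. Hence for any policy $|P_{\widehat{\mathcal M}}(\psi)-P(\psi)|\le n\,\epsilon$, where $\epsilon=\max|P_{\widehat{\mathcal M}ss'}(a)-P_{ss'}(a)|$; this needs care with the "eventually" operator, and I would either restrict the simulation bound to the goal-relevant events or accept a constant factor. Write $V=\max_\pi P(\psi)$ and $v=P(\psi\mid\pi)<(1-\gamma)V$. Combining $v+n\epsilon\ge(1-\delta)(V-n\epsilon)$ gives $n\epsilon(2-\delta)\ge (1-\delta)V-v>(\gamma-\delta)V$. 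This yields exactly $\epsilon>(\gamma-\delta)V/((2-\delta)n)$, matching the stated denominator, which confirms this is the intended route. Taking the entry attaining $\epsilon$ gives the required $(s,a,s')$.

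The main obstacle is the $n$-step sensitivity bound with Eventually operators, since unbounded-time events can amplify perturbations beyond $n\epsilon$. I would first try to argue that only the $n$ ordered subgoal-achievement transitions matter. Failing that, I would design the perturbation of $\widehat{\mathcal M}$ to touch only transitions entering goal states, so that the per-goal effect is bounded by (number of subgoals) $\times\,\epsilon$.
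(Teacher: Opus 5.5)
Your lower-bound half is exactly the paper's argument. The appendix proves $|P(\tau\models\psi\mid\pi,s_0)-P_{\widehat{\mathcal M}}(\tau\models\psi\mid\pi,s_0)|\le n\Delta$, with $\Delta=\max_{s,a}\mathrm{TV}\big(P(\cdot\mid s,a),P_{\widehat{\mathcal M}}(\cdot\mid s,a)\big)$, by a telescoping recursion over histories. It then runs your chain to get $\Delta>(\gamma-\delta)V/((2-\delta)n)$, and reads off an entry by letting the discrepancy be a two-point shift at one state--action pair.

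Two repairs are needed in your version. First, the sensitivity bound must be stated in terms of $\Delta$, not the largest entrywise change $\epsilon$. A single step can move an event probability by the full TV distance, which can be as large as $|\boldsymbol S|\epsilon/2$. The bound transfers to one entry only because your perturbations (two-point shifts, or mixing toward a single sink) have TV equal to their largest entry change, which is precisely the paper's device. Second, your fallback for $\Diamond$ fails. Even if only goal-entering transitions are perturbed, retry loops amplify the change: if a state enters the goal w.p.\ $p$, a trap w.p.\ $q$, and otherwise stays, eventual success is $p/(p+q)$, whose sensitivity $q/(p+q)^2$ is unbounded. The paper gets around this only by asserting that length-$n$ prefixes suffice. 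You would have to adopt that bounded-horizon semantics as an explicit assumption.

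The genuine gap is the existence half, which the paper's proof does not supply either. It \emph{assumes} $\pi$ is $(1-\delta)$-bounded in $\widehat{\mathcal M}$ and never checks that its two-point-shift model rationalizes $\pi$. Your interpolation argument cannot fill this, because the degenerate endpoint need not exist. In fact no construction works in the stated generality. Take $\hat\Psi_n=\{\langle[A=a]\rangle\}$, the depth-one \emph{Now} goal, with $\pi$ choosing $b\neq a$ at $s_0$. Satisfaction of this goal does not depend on the dynamics, so in every model $\pi$ succeeds with probability $0$ while the optimum is $1$. The hypothesis holds for every $\gamma<1$, yet no $\widehat{\mathcal M}$ satisfies $0\ge 1-\delta$. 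Such time-$0$ action constraints are exactly the $[A=a]$ first sub-goals of the paper's own probe goals.

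What is provable is the conditional version. Every model rationalizing $\pi$ on $\hat\Psi_n$ has $\Delta$ above the bound. Hence some entry exceeds the bound if the discrepancy at the maximizing pair is a two-point shift, and exceeds $2/|\boldsymbol S|$ times the bound in general. To get existence you need an extra hypothesis, e.g.\ that each goal in $\hat\Psi_n$ can be made unsatisfiable by altering the dynamics. Then simply take such a model, since $0\ge(1-\delta)\cdot 0$ holds. Because the lower bound applies to every rationalizing model, your continuity, ``first $\lambda$'' and monotonicity steps are unnecessary, as is the ratio that becomes $0/0$ at that endpoint.
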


The proof is provided in \cref{app:thm2proof}. \Cref{thm:two_Collapse} shows that when the agent is not bounded goal-conditioned on a reachable goal set $\hat{\Psi}_n$ under the real dynamics, one can construct a recovered model $\widehat{\mathcal M}$ that nonetheless renders the same policy bounded performance on $\hat{\Psi}_n$ that differs from the true predictive world model $\mathcal{M}$. Moreover, this difference is not merely an artifact of a particular construction. For \emph{any} model $\widehat{\mathcal M}$ that rationalizes $\pi$ on $\hat{\Psi}_n$, the argument in \cref{app:thm2proof} implies a necessary non-trivial lower bound (incurring an additional $|\boldsymbol S|$ factor, since the mismatch can be spread across $|\boldsymbol S|$ possible next states). In other words, observing a policy is \emph{not sufficient} to identify the underlying entry-wise world models. To isolate a specific transition probability $P_{ss'}(a)$, we require some external probes as defined in \cref{def:bounded-goal} and \cref{def:specific-goals}). Without such external verification, we risk recovering a predictive model that merely rationalizes the observed behavior but fails to match the true dynamics.

\paragraph{Interpreting upper and lower bounds.}
\cref{thm:two_Collapse} lower bounds a transition-level \emph{dynamics mismatch} between the true model and a rationalized model $\widehat{\mathcal M}$. The bound becomes tight when $\delta \ll 1$. This focus on small-$\delta$ is well-justified for general agents, as the primary challenge in model identification often stems from bottleneck states or rare transitions. Meanwhile, the small-$\delta$ condition is guaranteed by our structural certification.

Consider any inference procedure that takes as input only the depth $n$ behavioral specification on $\hat{\Psi}_n$ (e.g., $\pi$ is $(\delta,n)$-bounded on $\hat{\Psi}_n$) and outputs an entry-wise claim for $P_{ss'}(a)$. Any such claim must hold uniformly over all world models compatible with the same input specification. \cref{thm:two_Collapse} shows that, whenever $\pi$ is not bounded goal-conditioned on $\hat{\Psi}_n$ under the true dynamics, one can construct a  model $\widehat{\mathcal M}$ that satisfies the same depth-$n$ behavioral constraints on $\hat{\Psi}_n$, yet differs from $\mathcal M$ on at least one transition entry by $o({1}/{n})$.

Hence, no policy-only, behavior-rationalizing recovery algorithm can guarantee in general entry-wise transition accuracy at a uniform rate ${o}({1}/{n})$ over behaviorally consistent environments. This is why the leading $\mathcal O(1/n)$ term in \Cref{thm:one_algorithm and upper bound} is tight: it matches the fundamental resolution limit imposed by depth-$n$ behavioral constraints, while the remaining $\mathcal O(\delta)$ term quantifies the additional slack permitted by bounded goal-conditioning (vanishing as $\delta \to 0$). Focusing on small $\delta$ on specific transitions for general agents is therefore a critical verification requirement. Since general agents are typically near-optimal only on localized decision-critical or bottleneck transitions, our \textit{structural certification} (see~\Cref{thm:one_algorithm and upper bound}) ensures $\delta$ is small enough for our bounds to provide near-optimal guarantees.

\section{Experiments}
\label{sec:exp}

We conduct experiments to verify the effectiveness of our filtering algorithm on identifying transitions where the agent is bounded goal-conditioned on a specific goal set. Specifically, we want to investigate how accuracy of the predictive world model increases as the agent learns to generalize to longer horizon goals on some specific transitions.

\begin{figure}[h]
  \begin{center}
\centerline{\includegraphics[width=\columnwidth]{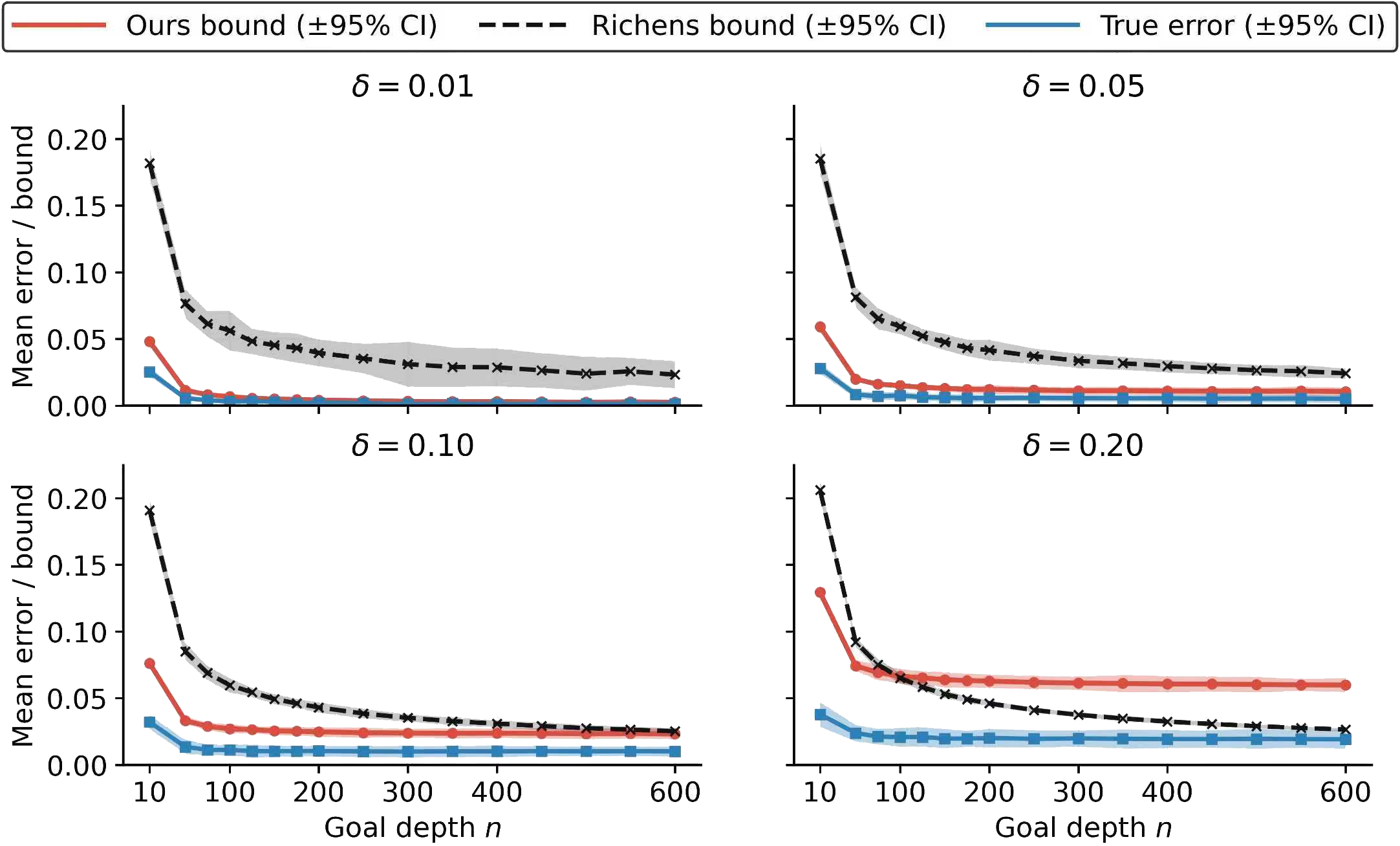}}
    \caption{
    For each panel (fixed certification parameter $\delta \in \{0.01,0.05,0.10,0.20\}$), we report the empirical mean recovery error (blue, solid) together with the corresponding certified upper bounds: ours (red, solid) and \citet{Richens2025General} (black, dashed), as a function of goal depth $n$.
    Shaded bands denote $\pm 95\%$ confidence intervals across independently trained agents; for visual clarity, the displayed uncertainty bands (error bars) are magnified by a factor of $3\times$.
    }
    \label{fig:bound_co}
  \end{center}
  \vspace{-25pt}
\end{figure}

\textbf{Experimental setup.} We use a stochastic grid world with 20 states and 5 actions $(|\boldsymbol{S}| = 20, |\boldsymbol{A}| = 5)$. The transition matrix is generated randomly but always follows \cref{ass:cmp}. To provide a concrete analysis, we frame the goal as a key-door navigation task. Specifically, there are some specific transitions corresponding to \textit{picking up a key} and \textit{opening a door} which need precise dynamics knowledge.

To simulate the big world hypothesis, we train 10 general agents independently using random walks with finite samples $N_\text{samples}$. We then build an empirical world model for agent using $N_\text{samples}$. Crucially, for unvisited transitions $(s,a,s')$, we initialize the empirical world model $\widetilde{P}_{ss'}(a)$ as a uniform distribution. Unless otherwise stated, we will use $N_\text{samples} = 4000$ to make sure $\widetilde{P}_{ss'}(a)$ is sparse. The agent's policy $\pi$ is derived by planning an optimal path over its incomplete empirical world model for the LTL goals.

Given parameters $\delta$ and $n$, we use \cref{alg:filter_nontrivial} and \cref{alg:filter_trivial} to filter the specific $(\delta, n)$-bounded goal set for specific transitions, and approximate the world model to get $\hat{P}_{ss'}(a)$. We denote empirical mean error as a statistical average of the absolute difference $|\hat{P}_{ss'}(a) - P_{ss'}(a)|$ computed over all certified transition entries collected from 100 randomly selected non-trivial entries $P_{ss'}(a) \in (0.05, 0.95)$. The theoretical bound is a deterministic upper limit derived from \cref{thm:one_algorithm and upper bound} and corresponding upper bound from \citet{Richens2025General}. Importantly, the bound is calculated by directly substituting the fixed failure rate $\delta$ and the specific horizon $n$ into the theoretical formulas, without any averaging. This highlights that our bound is a theoretical guarantee for any transition satisfying the $\delta$ failure constraint, regardless of the specific realization.

\textbf{Result analysis.} We first verify the theoretical convergence rate of our error bound. To test the accuracy of approximated world model and tightness of our bound, we here focus on non-trivial transitions entries $P_{ss'}(a)$. \cref{fig:bound_co} reveals the significant advantage of our approach in terms of convergence efficiency. As the goal depth $n$ increases, our certified bound decays linearly, closely tracking the empirical true error. In contrast, the baseline bound remains significantly looser due to its slower $\mathcal{O}(1/\sqrt{n})$ convergence rate, especially when $n$ is small.

To contextualize the practical implication of this numerical gap, consider the key-door scenario in our case study. Successful navigation through bottleneck transitions, for example, unlocking a door with a key requires high-fidelity dynamics knowledge. As shown in \cref{tab:tightness_gap_ours_vs_richens_0_400_compact} (see Appendix~\ref{app:experiment}), at a planning horizon of $n=100$, the baseline upper bound overestimates the error to approximately $5.31 \%$. In sensitive dynamical systems, this bound-to-error ratio is too coarse to distinguish between a heuristic and genuine physical mastery. In contrast, by tightening the upper bound to a high precision $0.67 \%$, our framework certifies that the agent’s predictive world model structurally aligns with the environment’s true dynamics.

To illustrate the effectiveness of our algorithms, we provide a comprehensive empirical validation for our proposed filtering algorithm \cref{alg:filter_nontrivial} by comparing the estimation error of certified transitions with the uncertified transitions. Specifically, \cref{fig:filter_result_line} shows how the Mean Absolute Error $|\hat{P}_{ss'}(a) - P_{ss'}(a)|$ decreases as the maximum goal depth $n$ increases. As expected, the uncertified error (dashed orange line) remains high and constant regardless of $n$. In contrast, the certified error (solid lines) decreases faster as $n$ increases, confirming that our mechanism successfully identifies and makes high-quality estimates. Moreover, tighter confidence failure rates (or smaller $\delta$) result in strictly lower empirical errors. More numerical results are given in \cref{app:experiment}.

To demonstrate the practical utility of our certification framework, we design two compositional tasks in a larger maze environment $|\boldsymbol{S} | = 625$ to evaluate the reliability of certified transitions. The action set $\boldsymbol{A}$ corresponds to five moves the agent can take, which are \{\textsc{Up}, \textsc{Down}, \textsc{Left}, \textsc{Right}, \textsc{Stay}\}. Those black cells shown in \cref{fig:ex_maze} denote walls, and any action that attempts to cross a wall results in the agent staying in the current cell. We use $\delta = 0.1$ and $n = 100$ to run filtering algorithms and obtain certified transitions. \cref{fig:ex_filter_result} visualizes the filtering results, showing that the procedure identifies transitions on which the predictive model meets a desired accuracy level. In the first task, we position critical bottleneck entities, specifically, a key and a door at transitions successfully certified by our algorithm. In contrast, we placed distinct objects (a pen and paper) at transitions that failed to be certified.

\begin{figure}[t]    \begin{center}\centerline{\includegraphics[width=0.85\columnwidth]{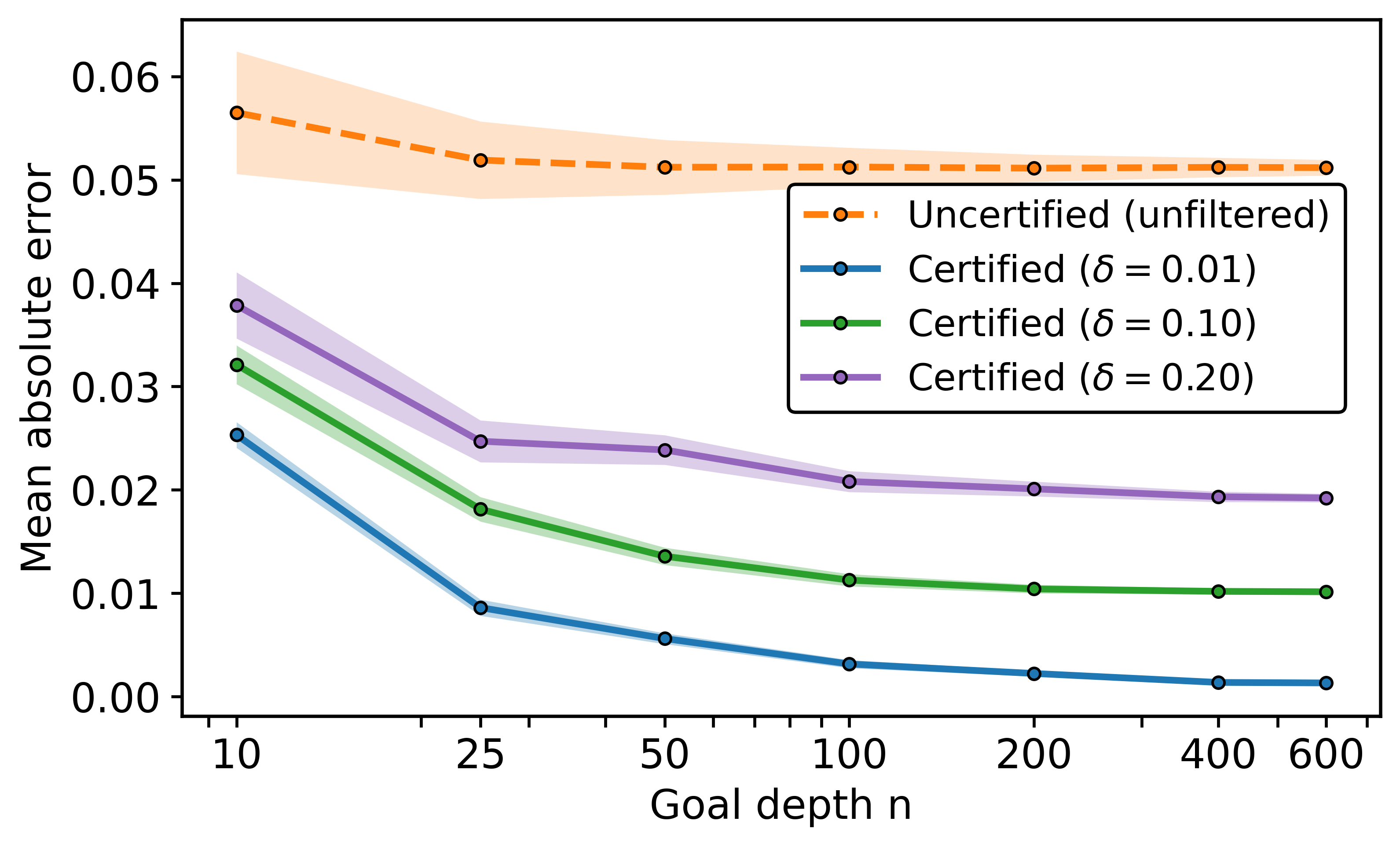}}
  \caption{\textbf{Certified filtering reduces recovery error via goal depth $n$.}
  We report the mean absolute recovery error $|\hat{P}_{ss'}(a)-P_{ss'}(a)|$ as a function of goal depth $n$, comparing the \textit{uncertified entries} (unfiltered, orange dashed) with \textit{certified entries} obtained under $\delta\in\{0.01,0.10,0.20\}$ (solid curves). Markers denote means across independently trained agents and shaded bands indicate $\pm 95\%$ confidence intervals across agents.
  }
    \label{fig:filter_result_line}
  \end{center}
  \vspace{-30pt}
\end{figure}

We derive $\pi$ by computing a shortest path on the maze induced by the empirical model $\widetilde{P}_{ss'}(a)$ using $A^*$ algorithm \citep{hart1968}. The behavioral divergence between these two tasks is striking. As shown in \cref{fig:ex_maze}, when guided to unlock the door, the agent could consistently generate near-optimal trajectories, successfully retrieving the key and unlocking the door. However, when guided to write on a paper, the agent exhibits significant erratic behavior and often gets stuck in local loops (see \cref{fig:twomazeinapp} in Appendix~\ref{app:experiment}). This validates that our filtering algorithm effectively distinguishes between information mastery and fragile heuristics, thus enabling the safe composition of general agents for long horizon goals. The detailed setup of our case study and more numerical results are provided in \cref{app:experiment}.

\begin{figure}[t]
  \centering
  \begin{subfigure}[t]{0.5\columnwidth}
    \centering
    \includegraphics[width=\linewidth]{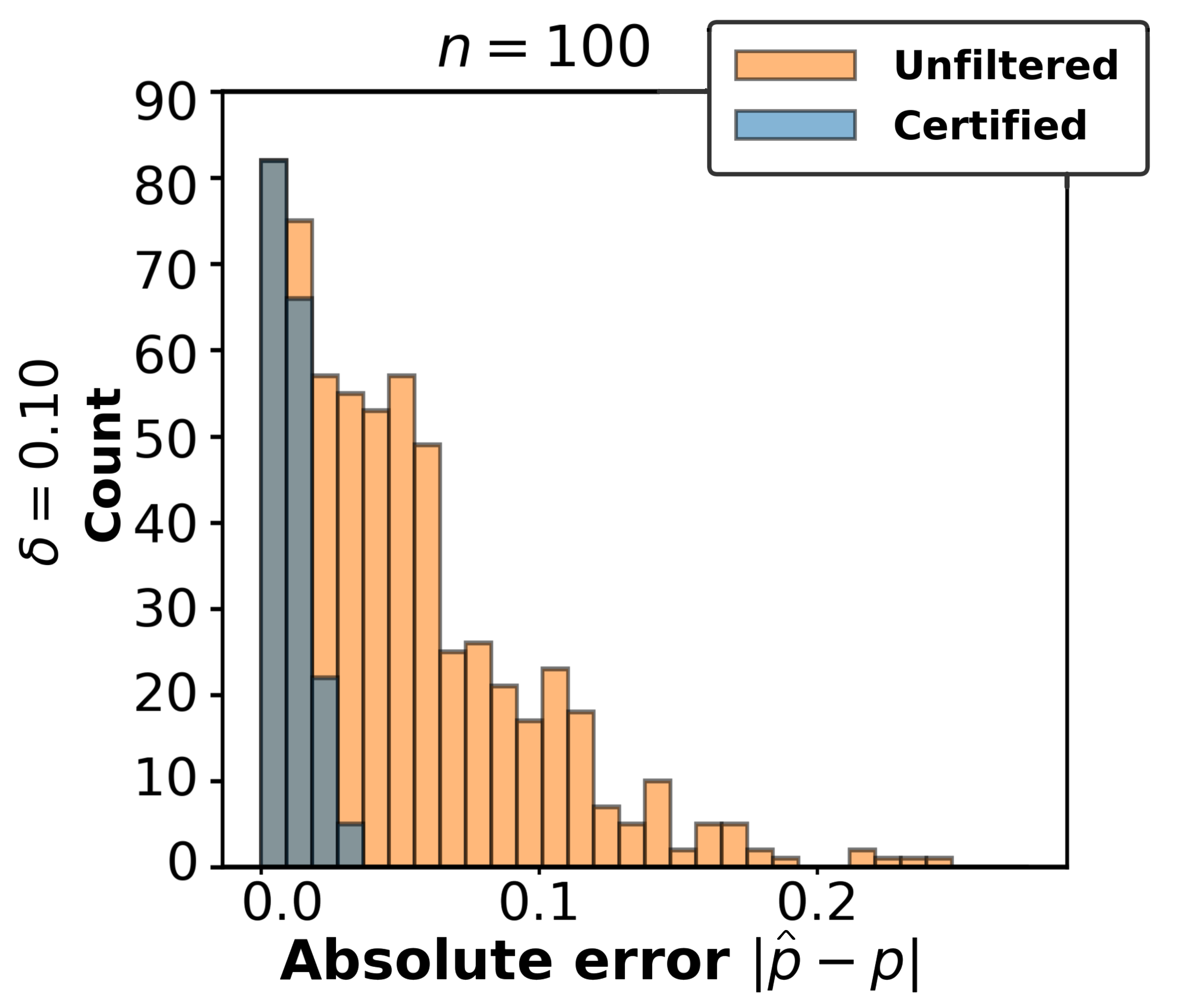}
    \caption{}
    \label{fig:ex_filter_result}
  \end{subfigure}\hfill
  \begin{subfigure}[t]{0.5\columnwidth}
    \centering    \includegraphics[width=\linewidth]{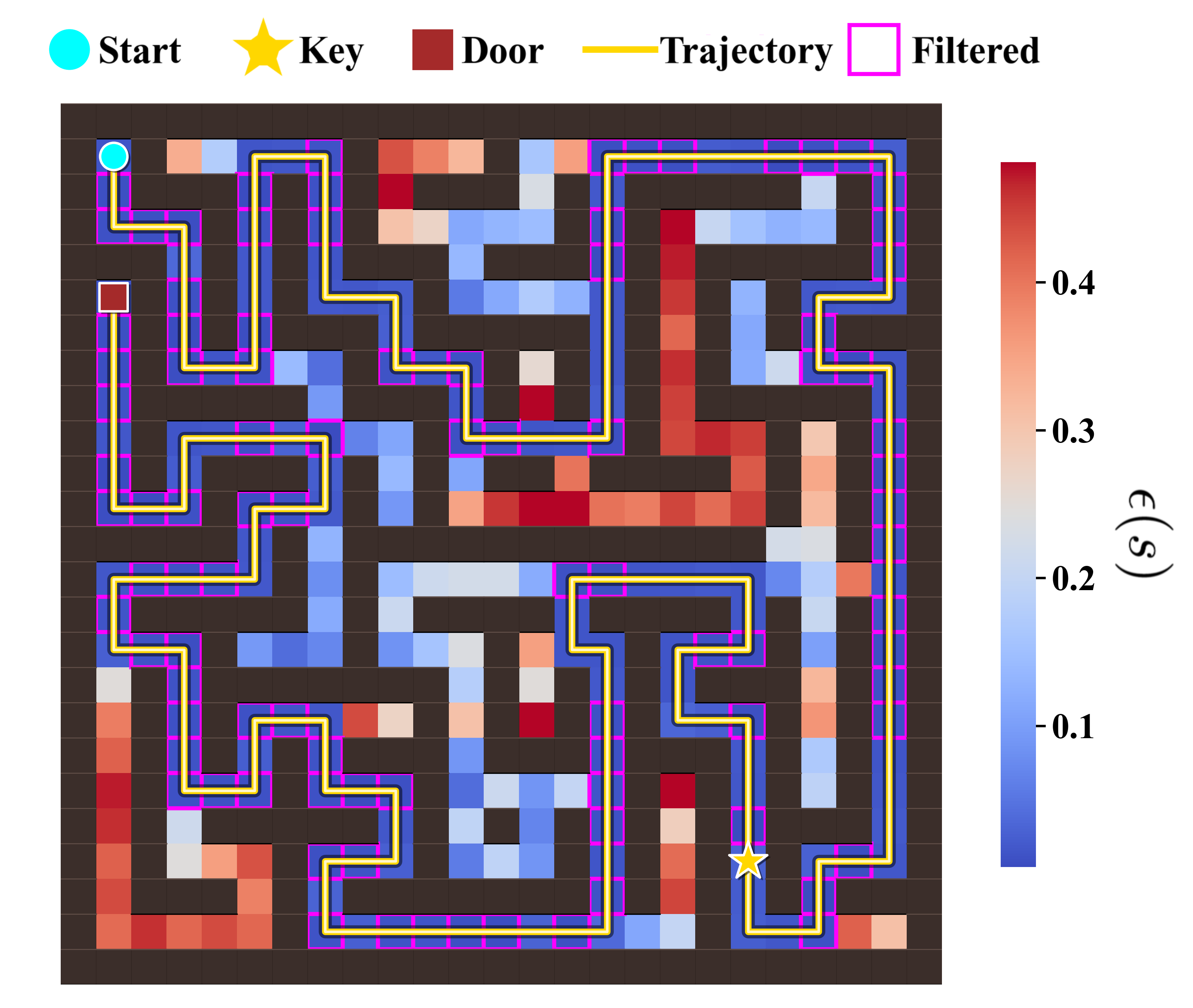}
    \caption{}
    \label{fig:ex_maze}
  \end{subfigure}

  \caption{\textbf{Filtering localizes trustworthy dynamics in a maze.}
  \emph{(a)} shows a histogram of absolute recovery error $ |\hat{P}_{ss'}(a)-P_{ss'}(a)|$ over transition entries at $(\delta,n)=(0.10,100)$, comparing all recovered entries (unfiltered, orange) with the subset retained by certification (certified, blue).
  \emph{(b)} visualizes a per-state aggregated error $\epsilon(s)$ (cell color), which sums absolute transition errors over the five actions and local next-state neighborhood (see \cref{app:experiment} for details). Magenta outlines indicate \textit{certified transitions} and the yellow curve shows \textit{the most frequent trajectory agent generates} from the start (circle) to the key (star) and then to the door (square), illustrating how long-horizon planning routes through certified regions.
  }
  \label{fig:filter_maze}
  \vspace{-10pt}
\end{figure}

\section{Conclusion and Discussion}
\label{sec:disc}

\textbf{Impossibility of universal agents.} We establish an obstruction to universal goal-conditioned guarantees. Under \cref{ass:cmp}, \cref{prop:long_n_fails} shows that any non-optimal agent inevitably violates every non-trivial uniform failure rate once composite goal depth is allowed to be arbitrarily large. This is a fundamentally worst case phenomenon where we define the failure rate $\delta$ via a supremum over the universal goal space $\Psi$ turns a small set of hard goals into a global bottleneck, making uniform guarantees vacuous in large goal space. This mirrors a limitation for uniform performance guarantee across goal classes \citep{wolpert97nfl} and the well-known conservatism of mini-max performance guarantees in uncertain Markov Decision Process (MDP) formulations \citep{iyengar05robustdp,nilim05robustmdp,xu10drmdp}. Therefore, the universal agent idealization is not merely strong but generically unattainable in the big world hypothesis \citep{pinon2025a}, motivating our shift to general agents whose capability is inherently local and structured \citep{amortila2024reinforcement}.

\textbf{Structure of internal representations.}
World model is crucial for a goal-conditioned agent to exhibit a high performance on long horizon goals, and internal world models are often proposed to explain high planning capabilities for model-free agents \citep{li22emergent,pmlr-v267-liu25p,saanum2024simplifyinglatentdynamicssoftly}. Recently, results of \citet{richens2024robust,Richens2025General} support the model-based architectures \citep{hafner2025worldmodels,lecun2022path} by showing agent policy $\pi$ can determine an approximation of world models with some upper bound. This view is consistent with representation theorems for general agents \citep{savage1972foundations,halpern_piermont24}, model-based control and learned latent dynamics enable planning and generalization \citep{ha2018worldmodels,hafner2025worldmodels}, and analysis of emergent planning mechanisms in systems that are otherwise treated as model-free \citep{bush2025interpreting,hao2023reasoning}.

Under big world hypothesis, however, such predictive models should not be expected to be uniform across the entire environment \citep{xu10drmdp}. Instead, the agent is behaviorally constrained to an accurate model only on transitions that it exploits with consistently high capabilities. Beyond these transitions, \cref{thm:two_Collapse} shows that behavior can provide less constraint \citep{pinon2025a,skalse2024partialidentifiabilityinversereinforcement} and the implied world model can be recovered inaccurately based on observed behavior without verifications \citep{bellot25limits,halpern_piermont24}.

\cref{thm:one_algorithm and upper bound} and \cref{alg:filter_nontrivial} make this intuition operational. When an agent is certified to be $(\delta,n)$-bounded goal-conditioned on a subset of $\Psi_n$, its behavior forces a quantitatively accurate transition estimate on the corresponding certified entries. This yields a sparse set of planning relevant transitions where predictive alignment is provably high-fidelity \citep{sutton99options,mcgovern01subgoals}. We can audit the transitions where a fixed black-box agent has high performance, and therefore, the agent can learn an accurate world model on these transition entries \citep{bellot25limits,alshiekh18shielding}. Intuitively, in a large sparse decision process where only a small number of bottleneck transitions govern long horizon success, its performance typically concentrates on a small set of transitions \citep{mcgovern01subgoals,simsek08betweenness,wang2024probabilisticsubgoalrepresentationshierarchical}. In our key-door case study, the important transitions are concentrated on those trajectories to find the key and open the door. Hence, the agent is only required to learn world model on those important transitions. Crucially, we do not make any claim on those not certified transitions since we can't guarantee whether the bad performance of observed behavior is caused by planning error or inaccuracy of internal world model. Our certification procedure is designed to identify these transitions and to report where the agent's implied world model is quantitatively accurate.

\textbf{Generalization \& AI safety.} Our main contribution is a capability boundary characterization for a black-box general agents. Concretely, \cref{thm:one_algorithm and upper bound} shows that bounded goal-conditioned performance on a small, verifiable set of LTL goals suffices to certify a subset of transitions. On every certified transition, the agent's behavior determines an implied transition estimate $\hat P_{ss'}(a)$ that highly aligns with the true dynamics. In other words, on these specific transitions, high performance enforces predictive alignment.

This enables testing and safe scoping. Instead of relying on universal guarantees or unverifiable claims of uniform performance, our results let us map out a high-performance region, yielding a conservative envelope for deployment in practice. Our certification should be interpreted as a \emph{scoped} audit rather than a black-box guarantee. Accordingly, the resulting safety envelope is conditional: it certifies predictive alignment only on the subset of transitions that is externally verifiable in a trusted testbed \citep{wu2024verifiedsafereinforcementlearning}. This is consistent with verification safety pipelines \citep{banerjee2024dynamicmodelpredictiveshielding,10571622}, which typically require either a known model or conservative bounds on the dynamics to provide runtime guarantees, and therefore do not apply in fully unknown world deployment without additional assumptions. Moreover, if we believe agent uses its internal representations to plan mentioned above, it can further support safe complex task composition. The transitions certified by \cref{alg:filter_nontrivial} can serve as reliable sub-goals and we can build more complex compositional goals or constrained objectives on top of them with controllable success probabilities, which is consistent with verification driven Reinforcement Learning and runtime assurance pipelines \citep{hasanbeig19cdc,voloshin22neurips,shao23ijcai,gross24icaart}. Consequently, \cref{thm:two_Collapse} indicates that we can't make any theoretical guarantee for success probabilities outside our certified regions only dependent on agent's behavior \citep{li2025word}. Therefore, applying \cref{thm:one_algorithm and upper bound} and \cref{alg:filter_nontrivial}  conservatively determines the subset of transitions on which a general agent is expected to generalize reliably, and thereby enable safe and critical deployment with verifiable boundaries.

\textbf{Implications for agent training.} Our results naturally suggest a strategy to train general agents in big worlds. Instead of optimizing for overall performance over universal goals, one can treat the goal as expanding a \emph{certified transition set}: train the agent to achieve $(\delta,n)$-bounded goal-conditioned performance on transition-isolating probe goals (\cref{def:bounded-goal,def:specific-goals}), thereby turning demonstrated capabilities into reliable, planning-relevant building sub-goals \citep{sutton99options,machado2023temporal}. In practice, this suggest a training recipe that prioritizes suspected bottleneck transitions \citep{sunel2024faster}, where transitions that are likely to control long-horizon success because many trajectories to reach the goal must pass through them. Within those transitions, we have to train the agent more aggressively to fit the predictive world model more accurately.

\textbf{Future Directions.} The certification procedure in \cref{alg:filter_nontrivial} and \cref{alg:filter_trivial} are suited to auditing regimes where one can query the transition probability $P_{ss'}(a)$ for candidate transitions, which enables a supervised filter that identifies a reliable capable domain. Within this domain, \cref{thm:one_algorithm and upper bound} shows that demonstrated goal-conditioned performance is sufficient to recover a partial world model with a strictly tighter upper bound than \citet{Richens2025General} when failure rate $\delta$ is small and $n$ is moderate. An immediate next step is to generalize the analysis from deterministic goal-conditioned policies to stochastic agents, for example, maximum-entropy and entropy-regularized RL by replacing hard switching logic with likelihood- or occupancy-based tests that certify transitions under randomized action selection, while preserving entry-wise guarantees. In parallel, we plan to relax supervision by developing purely behavioral certification criteria that identify transitions in the agent’s capable set without requiring prior access to $P_{ss'}(a)$, and then coupling these criteria with \cref{alg:filter_nontrivial} and \cref{alg:filter_trivial} to recover a partial model from logged interaction. Finally, a multi-agent extension is appealing. When different agents possess complementary partial knowledge, one can study principled information exchange protocols , i.e., sharing certified transition information so that agents can expand the set of certified transitions and improve compositional planning by aggregating trustworthy knowledge across individuals.

\section*{Acknowledgment}

We sincerely thank all the anonymous area chairs and reviewers for their time and valuable feedback. The first author is grateful to Zihan Huang and Sizhe Li for insightful discussions on the manuscript, and to Yujie Chen for helpful suggestions to improve experiments.


The authors were supported by the Shenzhen Natural Science Foundation in Basic
Research Fund under Grant \grantno{JCYJ20250604141200001}; the 1+1+1
CUHK--CUHK(SZ)--GDSTC Joint Collaboration Fund \grantno{2025A0505000047};
the Guangdong Basic and Applied Basic Research Foundation
\grantno{2025A1515011311}; the National Natural Science Foundation of China
(NSFC) \grantno{72301234}; the PengCheng Peacock Scientific Research Fund;
the Guangdong Key Lab of Math Foundations for AI \grantno{2023B1212010001};
NSFC \grantno{62336005}; and the Shenzhen Key Lab of Crowd Intelligence
Empowered Low-Carbon Energy Network \grantno{ZDSYS20220606100601002}.

\section*{Impact Statement}

\textbf{Ethical considerations.} We believe our framework raises no material ethical concerns.

\textbf{Societal implications.} Our transition-local certification framework identifies where a general agent’s internal world model is reliably supported by its behavior, with potential implications for AI safety and reliable reinforcement learning.

\appendix
\crefalias{section}{appendix} 
\onecolumn

\section{Related Work}
\label{sec:app_related}

\paragraph{Predictive world models.}

Nowadays, world models are a central component of modern model-based reinforcement learning \citep{hafner2025worldmodels}, where agents learn a predictive dynamics model and use it for planning and control  \citep{hansen2024tdmpc2,saanum2024simplifyinglatentdynamicssoftly}. In complex environments, robust out-of-distribution performance suggests a stricter requirement: the learned agent should capture the causal mechanisms that remain invariant under interventions \citep{richens2024robust}. Moreover, \citet{lin2024bilinear} make this view precise by formalizing universal, goal-conditioned robustness. At the same time, many recent works study world model learning as a key driver of memory \citep{samsami2024mastering}, planning \citep{hansen2024tdmpc2}, and general control across diverse domains \citep{hafner2025worldmodels}, positioning them as the fundamental substrate on which flexible behavior is implemented. Naturally, there is growing interest in whether large language models already function as implicit world models in text, with recent empirical studies supporting the view that sequence models can learn structured latent representations of environments from text \citep{li2025word}. 

\paragraph{General goal-conditioned agents.}

Goal-conditioned RL \citep{schaul15} extends the fixed-reward formulation \citep{sutton2018rl} by training policies that adapt their behavior based on goals. A major upgrade is to use temporal goals, which rules over a whole horizon rather than a terminal state \citep{jackermeier2025deepltl}, for example, tasks like reaching the key then opening the door, often expressed by LTL expressions \citep{pnueli77}. Recent work gives an \textit{optimality-preserving reduction} from LTL goals to limit average rewards via finite-memory reward machines, enabling standard average-reward RL to learn optimal LTL policies, which lets existing RL tools apply more directly. In parallel, a broader perspective views general agents as goal-conditioned systems \citep{Richens2025General} and asks what kind of internal world model is implied by their performance \citep{richens2024robust}.

\paragraph{Bottleneck transitions.} 

Recent general agents use large language models as controllers \citep{luo2025llm_agent_survey} with tool access (browsers \citep{he2024webvoyager}, operating systems \citep{xie2024osworld}, and code execution \citep{wang2024codeact}). Benchmarks have shifted from toy websites to realistic interfaces, including WebArena \citep{zhou2024webarena}, VisualWebArena \citep{koh2024visualwebarena}, and OSWorld \citep{xie2024osworld}. Empirical analyses show that most failures in complex tasks are due to a small number of bottleneck decision-critical steps \citep{abhyankar2025efficiency_computer_use}, and small errors at those steps can ruin the whole run \citep{ma2025critical_step}. This aligns naturally with  temporal goals: many tasks behave like a set of temporal rules, even when the task is specified in natural language \citep{english2025graft}. It also reframes the world modeling requirement: agents may not require uniformly accurate models, but do require high fidelity on a subset of transitions that determines the task success \citep{chen2025atlas}.

\paragraph{Structural certification.}
Prior works in RL largely optimize return and report overall performance (or success probabilities) \citep{sutton2018rl,ghasemi2024survey},
which does not imply identifiable or certified correctness of the underlying dynamics representation
\citep{ng2000irl,jin2024linearcrl,zhang2024identifiabledyn}. Model learning and system identification infer dynamics from data
\citep{yu2024physicsguided,zolman2025sindy,ding2024worldmodels}, but they typically aim to fit a global model: they do not explicitly localize which transition entries are actually constrained by the agent’s demonstrated behavior, nor which entries are decision-critical for the plans the agent reliably executes
\citep{frauenknecht2025rollouts,liu2025lookbeforeleap}.

\section{Universal Agents vs. General Agents}
\label{app:table}

\begin{table*}[ht]
\caption{Comparison of Universal vs. General Agents}
\label{tab:agent_comparison}
\centering
\begin{small}
\begin{tabular}{@{} l p{0.38\textwidth} p{0.38\textwidth} @{}}
\toprule
\textbf{Feature} & \textbf{General Agents for Universal Goals} \citep{Richens2025General,richens2024robust} & \textbf{General Agents as Specialists} (This Work) \\
\midrule
\textbf{Modeling Target} 
    & Worst-case guarantees over universal goal space 
    & \textbf{Capability-aware:} Certifies competence on specific transitions\\
\addlinespace[0.5em]
\textbf{Guarantee Form} 
    & Uniform: $\delta=\sup_{\psi\in\Psi_n}\delta(\psi)$ 
    & \textbf{Transition-Specific:} $\delta, n$ defined per certified transition \\
\addlinespace[0.5em]
\textbf{Model Accuracy} 
    & Dominated by worst-case regions $\Rightarrow$ loose bounds 
    & \textbf{Locally Tight:} High fidelity on certified transitions even for finite $n$ \\
\addlinespace[0.5em]
\textbf{Practicability} 
    & Requires uniform performance guarantees \& large horizon $n$ 
    & \textbf{Local Sufficiency:} Works with small $\delta$ on subsets of transitions \\
\bottomrule
\end{tabular}
\end{small}
\end{table*}

We here provide \cref{tab:agent_comparison} to summarize the key differences between our work and existing results.

\section{Proof of \cref{prop:long_n_fails}}
\label{app:proof_prop}

In this section, we will prove our warm-up result. For clarity, we will show a proof sketch first.

\paragraph{Proof Sketch.}

Since $\pi$ is non-optimal, there exists a reachable base goal $\psi$ on which $\pi$ has a non-trivial failure rate $\gamma$.
When $\delta \le \gamma$, then $\psi_{fail} \coloneqq \psi$ naturally gives the desired inequality. Otherwise, we amplify this gap by constructing a depth $N$ sequential goal that requires completing $\psi$ repeatedly, with a return to the same initial state $s_0$ between repetitions. Under a stationary deterministic policy, each repetition contributes a multiplicative failure probability, so the failure rate ratio shrinks to $(1-\gamma)^N$, and we choose sufficiently $N$ large so that $(1-\gamma)^N<1-\delta$ which gives the desired inequality.

\begin{proposition}
Let the environment satisfy \cref{ass:cmp} and let $\pi$ be any non-optimal deterministic Markovian policy $\pi(a_t \mid h_t; \psi) = \pi(a_t \mid s_t; \psi)$. For any $\delta \in (0, 1)$, there exists a maximum goal depth $N$ and some goal $\psi_{\text{fail}} \in \Psi_N$ such that:
\[
P(\tau \models \psi_{\text{fail}} \mid \pi, s_0) < (1 - \delta) \max_{\pi} P(\tau \models \psi_{\text{fail}} \mid \pi, s_0).
\]
\end{proposition}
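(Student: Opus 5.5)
We follow the amplification strategy from the proof sketch. First, we fix what ``non-optimal'' buys us. Since $\pi$ is non-optimal, there exist a goal $\psi$ and an initial state $s_0$ with
\[
P(\tau\models\psi\mid\pi,s_0)=(1-\gamma)\max_{\pi'}P(\tau\models\psi\mid\pi',s_0)
\]
for some $\gamma\in(0,1]$, where the optimum $p^*:=\max_{\pi'}P(\tau\models\psi\mid\pi',s_0)>0$. If $p^*$ were zero, every policy would be optimal on $\psi$, so a witness of non-optimality must have $p^*>0$. If $\delta<\gamma$, we take $\psi_{\text{fail}}:=\psi$ and $N:=depth(\psi)$, and we are done. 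The interesting case is $\delta\ge\gamma$.

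For that case, we build a repeated goal. Using \cref{ass:cmp}(iii), a finite action sequence returns from any state to $s_0$ with probability one. We define $\psi^{(k)}$ as the ordered concatenation $\langle \psi,\ \Diamond[S=s_0],\ \psi,\ \Diamond[S=s_0],\dots\rangle$ with $k$ copies of $\psi$. This is again a composite goal of finite depth $N_k\le k(depth(\psi)+1)$. To keep $\psi^{(k)}$ inside the syntax (disjunctions of sequences), we distribute the disjuncts of $\psi$ over the concatenation; this yields a finite disjunction of sequential goals.

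We then compare the two sides. For the optimum, an optimal policy can run an optimal strategy for $\psi$ from $s_0$, then return to $s_0$ almost surely, and repeat. By the Markov property, this gives
\[
\max_{\pi'}P(\tau\models\psi^{(k)}\mid\pi',s_0)\ \ge\ (p^*)^k .
\]
For $\pi$, we aim to show $P(\tau\models\psi^{(k)}\mid\pi,s_0)\le \bigl((1-\gamma)p^*\bigr)^k$, so that the ratio is at most $(1-\gamma)^k$. We then choose $k$ with $(1-\gamma)^k<1-\delta$; this is possible because $\gamma>0$ and $\delta<1$.

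The main obstacle is this upper bound on $\pi$'s performance. The policy is conditioned on the whole goal $\psi^{(k)}$, not on $\psi$, so its behaviour during each phase is not literally its behaviour on $\psi$. Also, the return to $s_0$ may not be exact, and phases may overlap in time. We handle this in two steps.

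\begin{itemize}
\item First, we use stationarity and determinism of the Markovian policy to argue that each phase starts at $s_0$. We force this by choosing the witness pair $(\psi,s_0)$ so that the return subgoal pins the state, and if necessary we replace ``Eventually'' by an exact sequence of Next-steps along the almost-sure return path.
\item Second, we decompose by the strong Markov property at the phase boundaries. This expresses the success probability as a product of per-phase success probabilities, each taken from $s_0$ under a fixed Markov policy.
\end{itemize}

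To make each factor at most $(1-\gamma)p^*$, we apply the non-optimality hypothesis to the goal actually in force. That is, we read ``non-optimal'' as: for the goal-conditioned policy there is some $(\psi,s_0)$ with $\gamma>0$ that persists under this repetition. If needed, we instead take $\psi_{\text{fail}}$ to be the repeated goal, pick $\gamma$ as the minimal per-phase gap among the finitely many phase-policies, and note that it is positive.

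Once the product bound holds, the inequality $(1-\gamma)^k(p^*)^k<(1-\delta)(p^*)^k\le(1-\delta)\max_{\pi'}P(\tau\models\psi^{(k)}\mid\pi',s_0)$ concludes the proof with $N:=N_k$.
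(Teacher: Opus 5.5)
Your route is the paper's: take a witness $\psi$ with gap $\gamma$, finish at once if $\delta<\gamma$, and otherwise repeat $\psi$ with returns to $s_0$ and choose $k$ with $(1-\gamma)^k<1-\delta$. Your lower bound $\max_{\pi'}P(\tau\models\psi^{(k)}\mid\pi',s_0)\ge (p^*)^k$ via \cref{ass:cmp}(iii) is fine. The gap is exactly the step you flag as the main obstacle, and your resolution of it does not work. On $\psi^{(k)}$ the agent executes the stationary map $\sigma_k:=\pi(\cdot\mid\cdot\,;\psi^{(k)})$, whereas non-optimality constrains only the different map $\pi(\cdot\mid\cdot\,;\psi)$. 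Even granting your strong-Markov factorization, the $\psi$-factors are $P(\tau\models\psi\mid\sigma_k,s_0)$, and nothing prevents these from equalling $p^*$. Reading non-optimality as ``persisting under repetition'' assumes the conclusion. The assertion that the minimal per-phase gap is positive is unsupported, and even if it were positive it would depend on $k$ through $\sigma_k$, so you could not fix $\gamma$ first and then pick $k$. Indeed, a gap on one goal does not propagate: a history-dependent agent that is optimal on every goal except $\psi$, with ratio $1-\gamma>1-\delta$ on $\psi$, is non-optimal yet meets the $(1-\delta)$ bound everywhere. So the only thing that can rule out a counterexample here is the memorylessness hypothesis, not $\gamma$. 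The paper's proof has the same hole. Its factorization \eqref{eq:pi_amp} silently uses the agent's behaviour on $\psi$ and on $\Diamond([S=s_0])$ phase by phase. That in general requires knowing the current phase, i.e.\ memory that a policy depending only on $(s_t,\psi_{\text{fail}})$ does not have.

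Use memorylessness directly instead. Fix $a\neq b$ (\cref{ass:cmp}(i)) and set $\psi_{\text{fail}}:=\langle [S=s_0,A=a],\ \Diamond([S=s_0,A=b])\rangle\in\Psi_2$. Take the maximum over history-dependent policies, which is the paper's convention and is needed e.g.\ for \cref{lem:richens_lemma6}. Then a policy can play $a$, return to $s_0$ almost surely by \cref{ass:cmp}(iii), and then play $b$, so the optimum is $1$. By contrast, $\pi(\cdot\mid s_0;\psi_{\text{fail}})$ is one fixed action at every visit to $s_0$, since it sees neither $t$ nor the history. Hence $\pi$ either fails the first sub-goal or never satisfies the second, and its success probability is $0<1-\delta$. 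This proves the statement with $N=2$ for every $\delta\in(0,1)$, using neither non-optimality nor amplification.
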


\begin{proof}

Since $\pi$ is deterministic and non-optimal, there exist some reachable goal $\psi \in \Psi$ and some $\gamma \in (0, 1)$ such that
\begin{equation}
P(\tau \models \psi \mid \pi, s_0) < (1 - \gamma) \max_{\pi} P(\tau \models \psi \mid \pi, s_0).
\label{eq:one_sub_opt_re}
\end{equation}

If $\delta \le \gamma$, pick $\psi = \psi_{\text{fail}}$ and any $N \ge depth(\psi)$ such that $\psi\in\Psi_N$. Then \eqref{eq:one_sub_opt_re} implies
\[
P(\tau \models \psi_{\text{fail}} \mid \pi, s_0) < (1 - \gamma) \max_{\pi} P(\tau \models \psi_{\text{fail}} \mid \pi, s_0)
\le (1 - \delta)\max_{\pi} P(\tau \models \psi_{\text{fail}} \mid \pi, s_0),
\]
which proves the claim.

Now assume $\delta>\gamma$. Immediately, since $\max_{\pi} P(\tau \models \psi \mid \pi, s_0) > 0$, we have
\begin{equation}
\frac{P(\tau \models \psi \mid \pi, s_0)}{\max_{\pi} P(\tau \models \psi \mid \pi, s_0)}
\;<\; 1-\gamma.
\label{eq:ratio_base_re}
\end{equation}

Fix an integer $N\ge 1$. Under a deterministic policy $\pi$, we define a depth $N$ sequential goal $\psi_{\text{fail}} \in \Psi_N$ as follows: the goal $\psi_{\text{fail}}$ requires completing $\psi$ exactly $N$ times, and between any two consecutive completions it requires returning to $s_0$. The explicit form of this sequential goal $\psi_{\text{fail}}$ is:
\[ 
\psi_{\text{fail}} \;\coloneqq \; 
\underbrace{
\big(\Diamond([S=s_0]) \wedge \psi \big)\; , 
\big(\Diamond([S=s_0]) \wedge \psi \big)\; , 
\cdots\;,\;
\big(\Diamond([S=s_0]) \wedge \psi)}_{N\ \text{times}} 
\]

Since this sequential goal needs to be reached by order, by Markovian property of $\pi$,
\begin{equation} 
P(\tau \models \psi_{\text{fail}} \mid \pi, s_0) \;=\; \big(P(\tau \models \psi \mid \pi, s_0)\big)^N \big(P(\tau \models \Diamond([s= s_0]) \mid \pi, s')\big)^{N- 1} 
\label{eq:pi_amp} 
\end{equation}

since the initial state is $s_0$.

Consider an optimal policy $\pi^\star$ achieving $\max_{\pi} P(\tau \models \psi \mid \pi, s_0)$ for $\psi$ from $s_0$. After each successful completion of $\psi$, an optimal policy $\pi^*$ guarantees the return goal $\Diamond ([ S= s_0 ])$ with a success probability no less than any non-optimal policy $\pi$. Therefore,

\begin{equation} 
\max_{\pi} P(\tau \models \Diamond([s = s_0]) \mid \pi, s') \; \ge \; P(\tau \models \Diamond([s = s_0]) \mid \pi, s'). 
\label{eq:noless} 
\end{equation}

Combining \eqref{eq:pi_amp} and \eqref{eq:noless}, we obtain

\begin{equation}
\begin{aligned}
\frac{P(\tau \models \psi_{\text{fail}} \mid \pi, s_0)}{\max_{\pi} P(\tau \models \psi_{\text{fail}} \mid \pi, s_0)}
& =
\Big(\frac{P(\tau \models \psi \mid \pi, s_0)}{\max_{\pi} P(\tau \models \psi \mid \pi, s_0)}\Big)^N
\Big(\frac{P(\tau \models \Diamond([s = s_0]) \mid \pi, s')}{\max_{\pi} P(\tau \models \Diamond([s = s_0]) \mid \pi, s')}\Big)^{N-1} \\
\vspace{8pt}
&< (1-\gamma)^N \times 1 \\
&= (1-\gamma)^N,
\label{eq:ratio_amp}
\end{aligned}
\end{equation}
where the strict inequality uses \eqref{eq:ratio_base_re}.

Since $\delta>\gamma$, we have $0<1-\delta<1-\gamma<1$, hence $(1-\gamma)^N \to 0$ as $N \to \infty$. Consequently, there exists an integer $N$ such that

\begin{equation}
(1-\gamma)^N \;<\; 1-\delta. 
\label{eq:chooseK} 
\end{equation}

It suffices to take 
\[ N \;\ge\; \left\lceil \frac{\log(1-\delta)}{\log(1-\gamma)} \right\rceil, 
\]
noting that both logarithms are negative.

With such a choice of $N$, \eqref{eq:ratio_amp} and \eqref{eq:chooseK} imply
\[
P(\tau \models \psi_{\text{fail}} \mid \pi, s_0) \;<\; (1-\delta)\max_{\pi} P(\tau \models \psi_{\text{fail}} \mid \pi, s_0), 
\]
which completes the proof.

\end{proof}

\section{Proof of \cref{thm:one_algorithm and upper bound}}
\label{app:thm1proof}

We now prove our main theorem \cref{thm:one_algorithm and upper bound}. We first refer the construction of composite goals provided in \citet{Richens2025General}.

\begin{lemma}
\label{lem:richens_lemma6}
Let $\psi_a(r, n)$ be the composite goal defined as:
\[
\psi_a(r, n) \;\coloneqq\; 
\langle 
\varphi_1,
\underbrace{\varphi_2,\varphi_3,\varphi_2,\varphi_3,\ldots,\varphi_2,\varphi_3'}_{n\ \text{times}}
\rangle,
\]
where the agent
\begin{enumerate}
\item[(i)] takes action $A=a$, $\varphi_1 = [A = a]$, and then transitions eventually to $S=s$ and takes action $A=a$, $\varphi_2 \;=\; \Diamond\big([S=s,\,A=a]\big),$
\item[(ii)] transitions next to a goal state which is either $S=s'$, $ \varphi_3 \;=\; \bigcirc[S=s'],$ or $S \neq s'$, $ \varphi_3' \;=\; \bigcirc[S\neq s'],$
\item[(iii)] returns eventually to $S=s$ and takes action $A=a$, and repeats the cycle (ii)-(iii) a total of $n$ times, with the transition $\varphi_3 = [S' = s]$ occurring totally $r$ times and the transition $\varphi_3' = [S \neq s']$ occurring totally $n-r$ times.
\end{enumerate}

For $s \neq s'$, the optimal policy achieves this goal with probability
\begin{equation}
\max_{\pi}\; P\big(\tau \models \psi_a(r,n)\mid \pi, s_0\big)
\;=\;
\frac{n!}{(n-r)!\,r!}\;P_{ss'}(a)^r\big(1-P_{ss'}(a)\big)^{n-r}.
\end{equation}
\end{lemma}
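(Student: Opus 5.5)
The plan is to show that the optimal success probability of $\psi_a(r,n)$ splits into $n$ independent Bernoulli trials of the transition $s\xrightarrow{a}\cdot$, each succeeding with probability $P_{ss'}(a)$. The binomial count then follows by summing over orderings.

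\textbf{Step 1 (reducing the goal to a count).} We read $\psi_a(r,n)$ as a disjunction over the $\binom{n}{r}$ placements of the $r$ occurrences of $\varphi_3$ among the $n$ trials. Equivalently, it is the event that the first $n$ executions of $(s,a)$ after $\varphi_1$ produce next state $s'$ exactly $r$ times. The different placements are disjoint events, since a trajectory determines the outcome sequence of its trials.

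\textbf{Step 2 (the upper bound).} Fix any policy $\pi$. The $k$-th execution of $a$ at $s$ happens at a stopping time $T_k$. By the Markov property (\cref{ass:cmp}(ii)), conditional on the history up to $T_k$, the next state equals $s'$ with probability exactly $P_{ss'}(a)$, whatever the history. The outcomes $X_k=\mathbf{1}[S_{T_k+1}=s']$ are therefore i.i.d.\ Bernoulli$(P_{ss'}(a))$ on the event that all $n$ trials occur. A policy can only lose probability by failing to return to $s$ and so truncating the sequence of trials. Hence the success probability is at most $\binom{n}{r}P_{ss'}(a)^r(1-P_{ss'}(a))^{n-r}$. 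The decision of whether to continue may depend on past outcomes, but this does not help: the event ``exactly $r$ ones among $X_1,\dots,X_n$'' requires all $n$ trials, and to bound it I would couple the trials to an i.i.d.\ sequence drawn in advance.

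\textbf{Step 3 (achievability).} By \cref{ass:cmp}(iii), from every state there is a finite action sequence reaching $s$ with probability one. The optimal policy first takes $a$, which satisfies $\varphi_1$. After each transition it then navigates back to $s$ almost surely and plays $a$ there, and it avoids playing $a$ at $s$ during the intermediate navigation unless that counts as a trial. This completes all $n$ trials with probability one and attains the bound. The hypothesis $s\neq s'$ is needed so that landing at $s'$ is not itself a visit to $s$ that muddles the counting.

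\textbf{Main obstacle.} The hardest step is making the LTL semantics precise. ``Eventually $[S=s,A=a]$'' must match the \emph{first} subsequent execution of $a$ at $s$, and not allow the agent to skip over unfavourable trials. If matching is existential, a policy could pick which executions count as trials and the probability would exceed the binomial value. I would therefore fix the semantics so that each $\varphi_2$ binds to the next occurrence of $(s,a)$, as in \citet{Richens2025General}. With that convention the stopping-time argument in Step 2 goes through and gives the lemma.
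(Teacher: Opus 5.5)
The paper gives no proof of this lemma; it states it and defers to Richens et al. Your argument is the natural one behind that citation, and it is correct. By \cref{ass:cmp}(iii) the optimal policy returns to $s$ almost surely, so all $n$ trials occur, and the Markov property makes their outcomes independent Bernoulli$(P_{ss'}(a))$ draws. Beyond the paper, you add an upper bound against arbitrary history-dependent policies and an explicit semantic convention; both are worth keeping.

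On the upper bound, the sentence saying the $X_k$ are i.i.d.\ on the event that all $n$ trials occur is false as written. A policy that stops returning to $s$ after its first success forces $X_1=\dots=X_{n-1}=0$ on that event. Your coupling repairs this. An equivalent clean form is the inductive bound $P(T_n<\infty,\,X_{1:n}=x)\le\prod_{k=1}^{n}P_{ss'}(a)^{x_k}\bigl(1-P_{ss'}(a)\bigr)^{1-x_k}$ for each $x\in\{0,1\}^n$. It follows from the strong Markov property at each $T_k$, using that $X_{1:k-1}$ is $\mathcal{F}_{T_k}$-measurable; then sum over the $\binom{n}{r}$ patterns with $r$ ones.

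On the semantics, your caveat is not cosmetic. Suppose $\langle\varphi_1,\varphi_2,\ldots\rangle$ only means ``satisfy $\varphi_1$ before $\varphi_2$'' with existential witnesses, as the paper's problem formulation describes it. Then already $\psi_a(0,1)$ is achieved with probability $1$ whenever $P_{ss'}(a)<1$, by playing $a$ at $s$ until the next state differs from $s'$, rather than with probability $1-P_{ss'}(a)$. So the lemma genuinely needs the first-match binding you impose.

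Finally, your argument never uses $s\neq s'$. Under first-match binding, landing at $s'=s$ only means the next trial may start immediately, which does not affect the stopping-time argument. The reason you give for that hypothesis is therefore not needed.
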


Since authors of \citet{Richens2025General} have already provided a detailed analysis, we will omit the proof process here.

The above sequential goals $\psi_a(r, n)$ has a goal depth $depth \big( \psi_a(r,n) \big) = 2n + 1$. Consider a sequential goals $\psi_b(r + 1, n)$ identical to $\psi_a(r, n)$ provided in \cref{lem:richens_lemma6} except that we require the first sub-goal (i) to take action $A = b$ instead of $A = a$ at time $t = 0$, and in (iii) we require the success times to be $r \coloneqq r + 1$. We define composite goal $\psi_{a, b}(r, n) \coloneqq \psi_a(r, n) \vee \psi_b(r + 1, n)$ where $r \in \{0, 1, \ldots, n - 1\}$.

We collect all composite goals $\psi_{a, b}(r, n)$ with a maximum depth $2n+1$ to a minimal subset $ \{\psi_{a,b}(r,n)\}_{r=0}^{n-1} \subseteq \Psi_{2 n + 1}$. Our aim is to filter the transitions $(s, a, s')$ where the agent could be bounded goal-conditioned on all goals $\psi_{a, b}(r, n) \in \{\psi_{a,b}(r,n)\}_{r=0}^{n-1}$ for a given $n$ and $\delta$.

\begin{lemma}
\label{lem:choose_goals_one}
Let the environment satisfy \cref{ass:cmp} and the agent $\pi$ follows a deterministic policy. Assume transition probability $P_{ss'}(a) \in (0, 1)$. If agent is bounded goal-conditioned on any $\psi_{a, b}(r, n) \in \{\psi_{a,b}(r,n)\}_{r=0}^{n-1}$ and we set $r = k$, then the following inequalities hold:

\begin{equation}
\label{eq:choose_one}
P_{ss'}(a)
\begin{cases}
\le \frac{k+1}{n + 1 - \delta (n - k)}, & \text{if } \pi\big(a_0 \mid s_0; \psi_{a, b}(k ,n)\big) = \mathbf{1}([a_0 = a])\\
\ge \frac{(k+1)(1-\delta)}{\,n+1-(k+1)\delta} ,  & \text{if } \pi\big(a_0 \mid s_0; \psi_{a, b}(k ,n)\big) = \mathbf{1}\{(a_0 = b)\}.
\end{cases}
\end{equation}
\end{lemma}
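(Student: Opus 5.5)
The plan is to compute both sides of the bounded goal-conditioning inequality for $\psi_{a,b}(k,n)=\psi_a(k,n)\vee\psi_b(k+1,n)$ explicitly in terms of $p\coloneqq P_{ss'}(a)$, and then turn the inequality into a bound on $p$ through a likelihood-ratio computation. Throughout, write $B(r)\coloneqq\binom{n}{r}p^r(1-p)^{n-r}$; this is the value that \cref{lem:richens_lemma6} gives for $\max_\pi P(\tau\models\psi_a(r,n)\mid\pi,s_0)$. The probe is queried at $s_0=s$, as in \cref{alg:simplified_filter_recover}.

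\textbf{Step 1: split the disjunction by the first action.} The first sub-goal of $\psi_a(k,n)$ is $[A=a]$ at $t=0$, and the first sub-goal of $\psi_b(k+1,n)$ is $[A=b]$ at $t=0$. Hence the two disjuncts are mutually exclusive, and the first action decides which one can still be satisfied. For a deterministic $\pi$, the success probability therefore equals the success probability of the single branch selected by $a_0$.

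\textbf{Step 2: bound each branch by $B(\cdot)$.} After $a_0$ is taken, the rest of $\psi_b(k+1,n)$ is the same cyclic pattern as in $\psi_a$: repeatedly reach $(s,a)$ and record whether the next state is $s'$, with $k+1$ successes required. By \cref{ass:cmp}(iii) the agent can return to $s$ with probability one, whatever $a_0$ was. So the optimal value of this branch is $B(k+1)$, by the same argument as \cref{lem:richens_lemma6}, and the $a$-branch has optimal value $B(k)$. This gives
\[
\max_\pi P(\tau\models\psi_{a,b}(k,n)\mid\pi,s_0)=\max\{B(k),B(k+1)\}.
\]
Moreover, an agent that commits to $a$ succeeds with probability at most $B(k)$, and one that commits to $b$ succeeds with probability at most $B(k+1)$.

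I expect the main obstacle to be this reduction. One must check that the initial action, and the state it leads to, has no effect on the optimal continuation value. Reachability with probability one in \cref{ass:cmp}(iii) is what makes this work, and $s\neq s'$ is needed to invoke \cref{lem:richens_lemma6}.

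\textbf{Step 3: translate boundedness into a ratio inequality and solve for $p$.} Suppose $a_0=a$. Boundedness gives $B(k)\ge(1-\delta)\max\{B(k),B(k+1)\}\ge(1-\delta)B(k+1)$. Since $p\in(0,1)$, every $B(r)>0$, so we may divide. Using
\[
\frac{B(k+1)}{B(k)}=\frac{n-k}{k+1}\cdot\frac{p}{1-p},
\]
we obtain $(1-\delta)(n-k)p\le(k+1)(1-p)$, which rearranges to
\[
p\le\frac{k+1}{n+1-\delta(n-k)}.
\]
Now suppose $a_0=b$. Boundedness gives $B(k+1)\ge(1-\delta)B(k)$, hence $(n-k)p\ge(1-\delta)(k+1)(1-p)$, which rearranges to
\[
p\ge\frac{(k+1)(1-\delta)}{n+1-(k+1)\delta}.
\]
Both denominators are positive for $\delta<1$, so the rearrangements are legitimate. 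These are exactly the two cases of \eqref{eq:choose_one}.
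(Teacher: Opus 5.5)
Your proposal is correct and follows essentially the same route as the paper. Mutual exclusivity of the two disjuncts gives an optimal value of $\max\{B(k),B(k+1)\}$; boundedness, combined with the trivial bound that a policy committing to $a$ (resp.\ $b$) achieves at most $B(k)$ (resp.\ $B(k+1)$), then gives the ratio inequality, and the binomial formula of \cref{lem:richens_lemma6} rearranges into the two bounds. Your explicit appeal to \cref{ass:cmp}(iii), to justify that the $b$-branch has optimal value $B(k+1)$ regardless of the first action, is slightly more careful than the paper, which uses this implicitly.
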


\begin{proof}
Since $\psi_a(k,n)$ and $\psi_b(k+1,n)$ are mutually exclusive, the optimal success probability for the disjunctive goal
$\psi_{a,b}(k, n)$ is:
\begin{equation}
\max_{\pi} P(\tau \models \psi_{a,b}(k, n) \mid \pi, s_0)
\;=\;
\max\{\max_{\pi}\; P\big(\tau \models \psi_a(k,n)\mid \pi, s_0\big), \, \max_{\pi}\; P\big(\tau \models \psi_b(k + 1,n)\mid \pi, s_0\big) \}.
\label{eq:composite_max_def}
\end{equation}

By definition, when a bounded goal-conditioned agent pursues $\varphi_1 = [A = a]$, or we could say $\psi_a(k,n)$, can achieve success probability
$P(\tau \models \psi_a(k,n)\mid \pi, s_0)$.
By definition of bounded goal-conditioned agent, applying \eqref{eq:goal-conditioned} to \eqref{eq:composite_max_def} and we get:

\begin{align}
P(\tau \models \psi_a(k,n)\mid \pi, s_0) 
& \ge (1 - \delta) \max_{\pi} P(\tau \models \psi_{a,b}(k, n) \mid \pi, s_0)
\nonumber\\
& =
(1 - \delta)\max\{\max_{\pi} P(\tau \models \psi_a(k,n)\mid \pi, s_0),\,
\max_{\pi} P(\tau \models \psi_b(k+1,n)\mid \pi, s_0)\}.
\label{eq:agent_obligation}
\end{align}

On the other hand, the agent's success probability for $\psi_a(k,n)$ is upper bounded by the success probability of optimal policy:
\begin{equation}
P(\tau \models \psi_a(k,n)\mid \pi, s_0)
\le
\max_{\pi} P(\tau \models \psi(k,n)\mid \pi, s_0).
\label{eq:physical_limit}
\end{equation}

Combining \eqref{eq:agent_obligation} and \eqref{eq:physical_limit} yields

\begin{align}
\label{eq:greater_than}
\max_{\pi} P(\tau \models \psi(k,n)\mid \pi, s_0)
&\ge (1-\delta)\,
\max\!\Big\{
\max_{\pi} P(\tau \models \psi(k,n)\mid \pi, s_0),\,
\max_{\pi} P(\tau \models \psi(k+1,n)\mid \pi, s_0)
\Big\} \nonumber\\
&\ge (1-\delta)\,
\max_{\pi} P(\tau \models \psi(k+1,n)\mid \pi, s_0).
\end{align}

Similarly, if a bounded goal-conditioned agent chooses $\varphi_1 = [A = b]$, then we have
\begin{equation}
\label{eq:less_than}
\max_{\pi} P(\tau \models \psi(k + 1,n)\mid \pi, s_0)
\ge (1-\delta)\,
\max_{\pi} P(\tau \models \psi(k,n)\mid \pi, s_0).
\end{equation}

By expanding \eqref{eq:greater_than} using \cref{lem:richens_lemma6}, we get
\begin{equation}
\binom{n}{k} \big(P_{ss'}(a)\big)^k \big(1-P_{ss'}(a)\big)^{n-k}
\ge
(1-\delta) \binom {n}{k+1} \big(P_{ss'}(a)\big)^{k+1} \big(1-P_{ss'}(a)\big)^{n-k-1}.
\label{eq:binomialformofinequality}
\end{equation}

Since $P_{ss'}(a) \in (0, 1)$ and $k \in \{0, 1, \ldots, n-1 \}$. Here, we can divide \eqref{eq:binomialformofinequality} by $\big( P_{ss'}(a)\big)^ k \big(1 - P_{ss'}(a)\big) ^ {n-k-1}$,

\begin{equation}
\binom{n}{k}\big(1-P_{ss'}(a)\big) \ge (1-\delta)\binom{n}{k+1}P_{ss'}(a).
\label{eq:rearr1}
\end{equation}

Rearranging \eqref{eq:rearr1} gives
\begin{equation}
P_{ss'}(a) \le \frac{k+1}{n + 1 - \delta (n - k)} \nonumber,
\end{equation}
which proves the first case of the claim.

When $\varphi_1=[A=b]$, the agent pursues $\psi_b(k+1,n)$. Similarly,
\begin{equation}
\label{eq:anothercase}
\binom{n}{k+1} \big(P_{ss'}(a)\big)^{k+1}\big(1-P_{ss'}(a)\big)^{n-k-1}
\ge
(1-\delta)\binom{n}{k} \big(P_{ss'}(a)\big)^{k} \big(1-P_{ss'}(a)\big)^{n-k}.    
\end{equation}

Rearranging \eqref{eq:anothercase} gives
\begin{equation}
P_{ss'}(a)\ge \frac{(k+1)(1-\delta)}{\,n+1-(k+1)\delta\,} \nonumber,
\end{equation}

which proves the second case of the claim. Thus, this proves \cref{lem:choose_goals_one}.

\end{proof}

\begin{remark}
\label{rem:degenerate_p}
\cref{lem:choose_goals_one} assumes $P_{ss'}(a) \in (0,1)$ so that the simplification process is well-defined. To handle the trivial case where $P_{ss'}(a) = \{ 0, 1 \}$, we will use another different goal construction in our algorithms.
\end{remark}

Based on \cref{lem:choose_goals_one}, we first provide an algorithm \cref{alg:filter_nontrivial} to certify whether an agent is bounded goal-conditioned on $\{\psi_{a,b}(r,n)\}_{r=0}^{n-1}$ when $P_{ss'}(a) \in (0,1)$. For those transition probabilities $P_{ss'}(a) \in \{ 0, 1\}$, we construct $\widetilde{\psi}_a(r, n)$ to require $\varphi_2 = \bigcirc[S' = s]$ occurring $k$ times for all $k \le r$ and $\widetilde{\psi}_b(r, n)$ to require $\varphi_2 = \bigcirc[S' = s]$ occurring $k$ times for all $k > r$. We construct $\widetilde{\psi}_{a, b}(r, n) = \widetilde{\psi}_a(r, n) \vee \widetilde{\psi}_a(r, n)$ \citep{Richens2025General}. Applying this set of goals $\{\widetilde{\psi}_{a, b}(r, n)\}_{r = 0}^{n}$ to \cref{alg:filter_trivial} gives us a certification for transitions where its corresponding transition probabilities $P_{ss'}(a) \in \{0, 1\}$. Moreover, we can prove that the agent policy $\pi$ fully determines an approximation of world model $\hat{P}_{ss'}(a)$ once the $(\delta, n)$-bounded specific goal set on $(s, a, s')$ is certified. The following theorem details the above statement and provide an upper bound for it.

\begin{theorem}
Let $P_{ss'}(a)=P(S_{t+1}=s'\mid A_t=a,S_t=s)$ be the transition probabilities of an environment satisfying \cref{ass:cmp} and let agent $\pi$ follows a deterministic policy. Fix $\delta\in[0,0.5)$ and a goal depth $n \in \mathbb{N}$, then there exists an filtering algorithm (see \cref{alg:filter_nontrivial}) that certificates whether a minimal subset $\{\psi_{a,b}(r, n)\}_{r = 0}^ {n -1} \subseteq \Psi_{2n + 1}(s, a, s')$ with a given $(\delta, n)$ is bounded goal-conditioned.

Moreover, for each certified transitions $(s,a,s')$, the agent's policy $\pi$ fully determines the a transition probability $\hat{P}_{ss'}(a)$ with:
\begin{equation}
\begin{aligned}
\left| \hat{P}_{ss'}(a) - P_{ss'}(a) \right|
\;\le\;&
\frac{1}{2(n+1)} \left( \frac{1}{1-2\delta} \right) \;+\;
\frac{\delta}{1-2\delta}\,P_{ss'}(a)\big(1-P_{ss'}(a)\big) \nonumber.
\end{aligned}
\end{equation}

In particular, for highly capable transitions ($\delta \ll 1$), the error of internal representations on the certified transitions scales as:
\[
\left| \hat{P}_{ss'}(a) - P_{ss'}(a) \right|
\;=\;
\mathcal{O}\left(\frac{1}{n}\right)+\mathcal{O}(\delta).
\]

\end{theorem}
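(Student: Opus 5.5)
We use \cref{lem:choose_goals_one} to turn each query of the deterministic policy into a one-sided bound on $P_{ss'}(a)$, and then show that the switching index $r^\star$ read off from the queries localizes $P_{ss'}(a)$ within one cell of a grid of mesh $1/(n+1)$, widened by an $\mathcal O(\delta)$ margin. We first treat the non-trivial case $P_{ss'}(a)\in(0,1)$ with the goals $\{\psi_{a,b}(r,n)\}_{r=0}^{n-1}$. The degenerate case $P_{ss'}(a)\in\{0,1\}$ is handled by the same threshold argument with the goals $\widetilde\psi_{a,b}(r,n)$ (see \cref{rem:degenerate_p}); there the optimal values are either $0$ or $1$, so the switching point sits at an endpoint and the bound is immediate.

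\textbf{Filtering step (existence of the algorithm).} For each $r$, query the first action $a_r$ of $\pi$ on $\psi_{a,b}(r,n)$. If the agent is $(\delta,n)$-bounded on that goal, \cref{lem:choose_goals_one} gives either $P_{ss'}(a)\le u_r:=\frac{r+1}{n+1-\delta(n-r)}$ (when $a_r=a$) or $P_{ss'}(a)\ge \ell_r:=\frac{(r+1)(1-\delta)}{n+1-(r+1)\delta}$ (when $a_r=b$). The algorithm keeps $p_{\max}=\min u_r$ and $p_{\min}=\max \ell_r$ over the respective queries. It certifies exactly when the anchor $P_{ss'}(a)$ lies in $[p_{\min},p_{\max}]$ and some query returned $a$. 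Certification is therefore the consistency check that every behavioral constraint implied by boundedness is satisfied by the true value. Only this check uses $P_{ss'}(a)$; the quantity $r^\star$ depends on $\pi$ alone. This establishes the first claim.

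\textbf{Recovery step.} Let $r^\star$ be the first $r$ with $a_r=a$, and set $\hat P=(r^\star+\tfrac12)/(n+1)$. Certification gives the two-sided bound
\[
\ell_{r^\star-1}\le P_{ss'}(a)\le u_{r^\star},
\]
with the convention $\ell_{-1}=0$. The left inequality holds because $a_{r^\star-1}=b$ when $r^\star>0$. For $\delta=0$ these endpoints are $r^\star/(n+1)$ and $(r^\star+1)/(n+1)$, so the midpoint estimate is off by at most $\frac{1}{2(n+1)}$.

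For $\delta>0$, write $k=r^\star$ and $p=P_{ss'}(a)$. The inequality $p\le u_k$ is equivalent to
\[
p(n+1)\le (k+1)+\delta p(n-k),
\]
and $p\ge \ell_{k-1}$ is equivalent to
\[
p(n+1)\ge k-\delta(k-pk)=k-\delta k(1-p).
\]
Subtracting $(k+\tfrac12)$ gives
\[
|(n+1)(p-\hat P)|\le \tfrac12+\delta\max\{p(n-k),\,k(1-p)\}.
\]

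\textbf{Main obstacle.} The hard step is converting the $\delta$-terms into the stated form. The bound must carry the factor $p(1-p)$ with constant $\frac{\delta}{1-2\delta}$ rather than a crude $\delta n$. The plan is to replace $k$ using $k\approx (n+1)\hat P$ and then substitute $\hat P=p+(\hat P-p)$. This writes $p(n-k)$ and $k(1-p)$ as $(n+1)p(1-p)$ plus terms of size at most $(n+1)|\hat P-p|$ and $O(1)$. Dividing by $n+1$ yields a self-referential inequality of the form
\[
e\le \frac{1}{2(n+1)}+\delta p(1-p)+2\delta e,\qquad e=|\hat P-p|.
\]
Solving for $e$, which needs $\delta<1/2$, gives exactly
\[
e\le \frac{1}{1-2\delta}\Big(\frac{1}{2(n+1)}+\delta p(1-p)\Big).
\]
The delicate bookkeeping is the constant-order remainder (for instance, the $p$ versus $1-p$ boundary terms). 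I expect these to be absorbed into the factor $2\delta e$ or into $\frac{1}{2(n+1)}$, possibly after a case split on which of the two constraints is active. Finally, letting $\delta\to0$ gives $\mathcal O(1/n)+\mathcal O(\delta)$, since $p(1-p)\le 1/4$.
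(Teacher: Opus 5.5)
Your proposal is correct, and for the error bound it takes a different and sharper route than the paper.

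\textbf{Shared skeleton.} Both arguments use the switching index $k=r^\star$, the interval $\ell_{k-1}\le P_{ss'}(a)\le u_k$ (which you read off the certificate check, and the paper obtains from \cref{lem:choose_goals_one}), and the midpoint estimate $\hat P=(k+\tfrac12)/(n+1)$.

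\textbf{The paper's route.} It computes the endpoint widths $u_k-\hat P$ and $\hat P-\ell_{k-1}$ as $\frac{1}{2(n+1)}$ plus bias terms. It lower-bounds their denominators by $(n+1)(1-\delta)$, which gives $\frac{\delta}{1-\delta}\,q(1-q)$ at the grid points $q_R=\frac{k+1}{n+1}$ and $q_L=\frac{k}{n+1}$. It then passes from $q(1-q)$ to $P_{ss'}(a)(1-P_{ss'}(a))$ using the $1$-Lipschitz property of $x(1-x)$, paying $\frac{1}{2(n+1)}+\varepsilon$. Solving the resulting implicit inequality, with coefficient $\frac{\delta}{1-\delta}$ on $\varepsilon$, is what produces the factor $\frac{1}{1-2\delta}$.

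\textbf{Your route.} You clear denominators first, so the $\delta$-terms already contain $p$. Neither the denominator bound nor the Lipschitz transfer is needed.

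\textbf{The step you flagged closes directly.} With $k=(n+1)\hat P-\tfrac12$,
\[
p(n-k)=(n+1)\,p(1-\hat P)-\tfrac{p}{2},\qquad k(1-p)=(n+1)\,\hat P(1-p)-\tfrac{1-p}{2}.
\]
The remainders are nonpositive and can be dropped. Using $1-\hat P\le 1-p+e$ and $\hat P\le p+e$ then gives
\[
e\le \frac{1}{2(n+1)}+\delta\,p(1-p)+\delta\max\{p,1-p\}\,e .
\]
So the coefficient of $e$ is at most $\delta$, not $2\delta$. You obtain
\[
e\le\frac{1}{1-\delta}\Bigl(\frac{1}{2(n+1)}+\delta\,p(1-p)\Bigr),
\]
which implies the stated bound since $\frac{1}{1-\delta}\le\frac{1}{1-2\delta}$. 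Several things come for free here:
\begin{itemize}
\item no case split is needed;
\item the final division only requires $\delta<1$;
\item the convention $\ell_{-1}=0$ covers $r^\star=0$, which the paper's proof does not treat separately (it takes $k\ge 1$).
\end{itemize}

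\textbf{Comparison.} The paper's route is more modular but loses constants twice; yours is shorter and tighter.

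\textbf{Remaining parts.} Your treatment of the filtering claim and of the degenerate case is at the paper's level of detail. In the degenerate case, the trivial filter (\cref{alg:filter_trivial}) simply outputs the anchor on certification, so the error there is zero and no threshold argument is required.
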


\begin{proof}

When $n$ is sufficiently large in \cref{alg:filter_nontrivial}, the bounded goal-conditioned agent's choices over $\{\psi_{a,b}(r,n)\}_{r=0}^{n-1}$ exhibit a switching point when $P_{ss'}(a) \in (0, 1)$. That is, there exists some $k\in\{1, 2, \ldots, n-1\}$ such that the agent chooses $\varphi_1=[A=b]$ on $\psi_{a,b}(k - 1,n)$ but chooses $\varphi_1=[A=a]$ on $\psi_{a,b}(k, n)$ for this transition $(s, a, s')$. We fix such switching point an index $k$ when we first observe this switch throughout this proof. For clarity, we denote $p_{min} \coloneqq \frac{k (1 - \delta)}{n + 1 - k \delta}$ by applying $r = k - 1$ to \cref{lem:choose_goals_one} and we denote $p_{max} \coloneqq \frac{k + 1}{n + 1 - \delta (n - k)}$.

We pick the approximation for the world model as $\hat{P}_{ss'}(a) \coloneqq \frac{k + 0.5}{n + 1}$. We can show that $\hat{P}_{ss'}(a) \in [p_{min}, p_{max}]$ since
\begin{align}
& p_{max} - \hat{P}_{ss'}(a) = \frac{k + 1}{n + 1 - \delta (n - k)} - \frac{k + 0.5}{n + 1}\; \ge 0, \nonumber\\
& \hat{P}_{ss'}(a) - p_{min} = \frac{k + 0.5}{n + 1} - \frac{(k+1)(1-\delta)}{\,n+1-(k+1)\delta} \; \ge 0 \nonumber.
\end{align}

By \cref{lem:choose_goals_one}, $P_{ss'}(a) \in [p_{\min}, p_{\max}]$ when $P_{ss'}(a) \notin \{ 0, 1 \}$ . Then, the error of approximation will be bounded by:

\begin{equation}
\label{eq:bound_hat}
|\hat{P}_{ss'}(a) - P_{ss'}(a)| \le \max \{\hat{P}_{ss'}(a) - p_{min}, p_{max} - \hat{P}_{ss'}(a)\}
\end{equation}

We decompose $\hat{P}_{ss'}(a)$ into $\frac{k+1}{n+1} - \frac{0.5}{n+1}$ to obtain
\begin{align}
    p_{\max} - \hat{P}_{ss'}(a) & = \frac{k+1}{n+1 - \delta(n-k)} - \left( \frac{k+1}{n+1} - \frac{0.5}{n+1} \right) \nonumber\\
    &= \frac{1}{2(n+1)} + \underbrace{\left( \frac{k+1}{n+1 - \delta(n-k)} - \frac{k+1}{n+1} \right)}_{\text{Bias}_R}.
    \label{eq:UPB}
\end{align}

Simplifying $\text{Bias}_R$ yields
\begin{align}
\text{Bias}_R &= \frac{(k+1)\big[(n+1) - (n+1-\delta(n-k))\big]}{(n+1)(n+1-\delta(n-k))} \nonumber\\
&= \frac{\delta(k+1)(n-k)}{(n+1)(n+1-\delta(n-k))}.
\label{eq:biasr}
\end{align}

Since $n-k < n+1$, the denominator $n+1-\delta(n-k) > (n+1)(1-\delta)$. Thus, we can scale \eqref{eq:biasr} to
\begin{align}
    \text{Bias}_R &\le \frac{\delta}{1-\delta} \frac{(k+1)(n-k)}{(n+1)^2} \nonumber\\
    &= \frac{\delta}{1-\delta} q_R(1-q_R), \quad \text{where } q_R = \frac{k+1}{n+1}.
    \label{eq:BiasR}
\end{align}

We decompose $\hat{P}_{ss'}(a)$ into $\frac{k}{n+1} + \frac{0.5}{n+1}$ and obtain
\begin{align}
    \hat{P}_{ss'}(a) - p_{\min} &= \left( \frac{k}{n+1} + \frac{0.5}{n+1} \right) - \frac{k(1-\delta)}{n+1 - k\delta} \nonumber \\
    &= \frac{1}{2(n+1)} + \underbrace{\left( \frac{k}{n+1} - \frac{k(1-\delta)}{n+1 - k\delta} \right)}_{\text{Bias}_L}.
    \label{eq:LOB}
\end{align}
Simplifying $\text{Bias}_L$,
\begin{align}
\text{Bias}_L &= \frac{k\big[(n+1-k\delta) - (n+1)(1-\delta)\big]}{(n+1)(n+1-k\delta)} \nonumber\\
&= \frac{\delta k(n+1-k)}{(n+1)(n+1-k\delta)}\label{eq:biasl}.
\end{align}

Since $k < n+1$, the denominator $n+1-k\delta > (n+1)(1-\delta)$. Thus, we can scale \eqref{eq:biasl} to
\begin{align}
    \text{Bias}_L &\le \frac{\delta}{1-\delta} \frac{k(n+1-k)}{(n+1)^2} \nonumber \\
    &= \frac{\delta}{1-\delta} q_L(1-q_L), \quad \text{where } q_L = \frac{k}{n+1}.
    \label{eq:BiasL}
\end{align}

Let $\varepsilon = |\hat{P}_{ss'}(a) - P_{ss'}(a)|$ be the absolute error. Let $f(x) = x(1-x)$ and $C_\delta = \frac{\delta}{1-\delta}$ denote the failure coefficient.
We now substitute the discrete variance terms $f(q_R)$ and $f(q_L)$ with the true variance $f\big( P_{ss'}(a)\big)$ the Lipchitz continuity

\begin{equation}
\label{eq:L-con}
|f(x)-f(y)| \le |x-y|.
\end{equation}

Applying \eqref{eq:BiasR} to \eqref{eq:UPB}, the error is bounded by $\varepsilon \le \frac{1}{2(n+1)} + C_\delta f(q_R)$, where $q_R = \frac{k+1}{n+1}$.
The distance between the discrete point $q_R$ and our estimator $\hat{P}_{ss'}(a)$ is:
\[
|q_R - \hat{P}_{ss'}(a)| = \left| \frac{k+1}{n+1} - \frac{k+0.5}{n+1} \right| = \frac{1}{2(n+1)}.
\]
Using \eqref{eq:L-con} and the triangle inequality,
\begin{align}
    f(q_R) &\le f\big(P_{ss'}(a)\big) + |q_R - P_{ss'}(a)| \nonumber\\
    &\le f\big(P_{ss'}(a)\big) + |q_R - \hat{P}_{ss'}(a)| + |\hat{P}_{ss'}(a) - P_{ss'}(a)| \nonumber\\
    &= f(P_{ss'}(a)) + \frac{1}{2(n+1)} + \varepsilon.
    \label{eq:qr}
\end{align}
Substituting \eqref{eq:qr} back into the right-deviation bound implies
\begin{equation}
    \varepsilon \le \frac{1}{2(n+1)} + C_\delta \left( f\big(P_{ss'}(a)\big) + \frac{1}{2(n+1)} + \varepsilon \right).
    \label{eq:implicit_bound_R}
\end{equation}

From \eqref{eq:BiasL}, the error is bounded by $\varepsilon \le \frac{1}{2(n+1)} + C_\delta f(q_L)$, where $q_L = \frac{k}{n+1}$.
The distance between $q_L$ and $\hat{P}$ is symmetric,
\[
|q_L - \hat{P}_{ss'}(a)| = \left| \frac{k}{n+1} - \frac{k+0.5}{n+1} \right| = \frac{1}{2(n+1)}.
\]
Similarly, applying Lipchitz continuity,
\begin{align}
    f(q_L) &\le f\big(P_{ss'}(a)\big) + |q_L - P_{ss'}(a)| \nonumber\\
    &\le f\big(P_{ss'}(a)\big) + |q_L - \hat{P}_{ss'}(a)| + |\hat{P}_{ss'}(a) - P_{ss'}(a)| \nonumber\\
    &= f\big(P_{ss'}(a)\big) + \frac{1}{2(n+1)} + \varepsilon.
    \label{eq:ql}
\end{align}
Substituting \eqref{eq:ql} into the left-deviation bound yields an identical inequality to \eqref{eq:implicit_bound_R} such that
\begin{equation}
    \varepsilon \le \frac{1}{2(n+1)} + C_\delta \left( f\big(P_{ss'}(a)\big) + \frac{1}{2(n+1)} + \varepsilon \right).
\end{equation}

Since both cases yield the same implicit inequality for $\varepsilon$, we solve for $\varepsilon$ by grouping terms,
\begin{align}  
    \varepsilon (1 - C_\delta) &\le \frac{1}{2(n+1)} (1 + C_\delta) + C_\delta f\big(P_{ss'}(a)\big).
    \label{eq:mid}
\end{align}
Substituting $C_\delta = \frac{\delta}{1-\delta}$ back into the coefficient,
\[
1 - C_\delta = 1 - \frac{\delta}{1-\delta} = \frac{1-2\delta}{1-\delta}, \quad
1 + C_\delta = 1 + \frac{\delta}{1-\delta} = \frac{1}{1-\delta}.
\]
Dividing both sides by $(1-C_\delta)$, if $\delta < 0.5$,
\begin{equation}
\label{eq:eps_bound_factorized}
\varepsilon \le \frac{1}{2(n+1)}
\left( \frac{1}{1-\delta}\cdot\frac{1-\delta}{1-2\delta} \right)
+\left( \frac{\delta}{1-\delta}\cdot\frac{1-\delta}{1-2\delta} \right)
f\big(P_{ss'}(a)\big).
\end{equation}

Simplifying \cref{eq:eps_bound_factorized} and substituting
$f\!\left(P_{ss'}(a)\right)=P_{ss'}(a)\big(1-P_{ss'}(a)\big)$, we can get the desired result
\begin{equation}
\varepsilon \le \frac{1}{2(n+1)}\frac{1}{1-2\delta}
+\frac{\delta}{1-2\delta}\,P_{ss'}(a)\big(1-P_{ss'}(a)\big).\nonumber
\end{equation}

When $P_{ss'}(a) = \{0, 1\}$, using \cref{alg:filter_trivial} and we can get an unbiased estimation which has been already proved by \citet{Richens2025General}.

Finally, in the limit of a perfectly rational agent ($\delta \to 0$), the failure rate-induced bias term vanishes, and the recovery error scales linearly with the task horizon, i.e,
\[
|\hat{P}_{ss'}(a) - P_{ss'}(a)| \sim \mathcal{O}(n^{-1}).
\]
This completes the proof.
\end{proof}

Note that although we used supervised algorithms to filter the high performance transitions, the choice of $k$ and the estimate of predictive world model $\hat{P}_{ss'}(a)$ could be unsupervised once $\{\psi_{a, b}(r, n)\}_{r = 0}^{n - 1}$ is guaranteed to be $(\delta,n)$-bounded set of goals and $P_{ss'}(a)$ is non-trivial.

\section{Proof of \cref{thm:two_Collapse}}
Before we prove \cref{thm:two_Collapse}, we first prove the following auxiliary lemma. Although $\hat{\Psi}_n$ may syntactically allow eventuality operators (e.g., $\Diamond$), such goals can defer satisfaction arbitrarily far into the future. Under our finite interaction perspective, this does not yield additional, entry-wise identifiable constraints on single step transition probabilities beyond what is already witnessed by immediate next-step goals ($\top$ or $\bigcirc$). Therefore, for the purpose of constructing an entry-wise distortion lower bound, it suffices to select goals from
the one-step sub-goals $\varphi$ contained in $\Psi_n$. Therefore, it suffices to evaluate trajectory prefixes of length at most $n$.

\label{app:thm2proof}

\begin{lemma}
\label{lem:sim}

For any goal $\psi \in \Psi_n$ and any deterministic policy $\pi$,
\begin{equation}
\label{eq:sim}
    | P(\tau \models \psi \mid \pi, s_0) - P_{\widehat{\mathcal{M}}}(\tau \models \psi \mid \pi, s_0) | \le n \Delta
\end{equation}

where $\Delta \coloneqq \max_{s \in \boldsymbol{S}, a \in \boldsymbol{A}} \mathrm{TV}\big(P(\cdot \mid s, a) , P_{\widehat{\mathcal{M}}}(\cdot \mid s, a)\big)$ and $\mathrm{TV}(\mu, \nu) = \frac{1}{2} \| \mu - \nu \|_1$. Here, we use $P(\cdot \mid s, a)$ to denote the distribution over next states.
\end{lemma}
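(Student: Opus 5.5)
We will prove \cref{lem:sim} with the standard simulation-lemma argument: couple the two processes step by step and charge one total-variation discrepancy to each step. As the preceding discussion notes, goals in $\Psi_n$ are only evaluated on trajectory prefixes of length at most $n$. So the event $\{\tau \models \psi\}$ is determined by the first $n$ transitions, i.e. by the history $h_n=(s_0,a_0,\ldots,s_n)$. Because $\pi$ is deterministic (possibly history- and goal-dependent), each action is a function of the history. Hence the law of $h_n$ is a product of $n$ transition kernels evaluated along the path, under either $P$ or $P_{\widehat{\mathcal M}}$. It therefore suffices to bound $\mathrm{TV}(\mathbb P^{\pi}_{n},\widehat{\mathbb P}^{\pi}_{n})$, where these denote the two laws of $h_n$. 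The reason is that $|P(E)-\widehat P(E)|\le \mathrm{TV}$ for any event $E$, including $E=\{\tau\models\psi\}$.

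The key step is a telescoping (hybrid) decomposition. For $t=0,\ldots,n$, let $\mathbb Q_t$ be the law of $h_n$ obtained by using the true kernel $P$ for the first $t$ transitions and $P_{\widehat{\mathcal M}}$ for the remaining ones. Then $\mathbb Q_n=\mathbb P^\pi_n$ and $\mathbb Q_0=\widehat{\mathbb P}^\pi_n$, so the triangle inequality gives
\[
\mathrm{TV}(\mathbb P^\pi_n,\widehat{\mathbb P}^\pi_n)\le \sum_{t=1}^{n}\mathrm{TV}(\mathbb Q_t,\mathbb Q_{t-1}).
\]
The laws $\mathbb Q_t$ and $\mathbb Q_{t-1}$ differ only in the kernel used at step $t$. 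They share the law of $h_{t-1}$ (and hence of $a_{t-1}=\pi(h_{t-1};\psi)$), and they share the continuation kernels after step $t$.

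Applying a fixed Markov kernel never increases total variation (the data-processing inequality). Therefore
\[
\mathrm{TV}(\mathbb Q_t,\mathbb Q_{t-1})\le \mathbb E_{h_{t-1}}\big[\mathrm{TV}(P(\cdot\mid s_{t-1},a_{t-1}),P_{\widehat{\mathcal M}}(\cdot\mid s_{t-1},a_{t-1}))\big]\le\Delta.
\]
Summing over the $n$ steps yields the bound $n\Delta$ in \eqref{eq:sim}.

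The main obstacle is the first reduction rather than the telescoping itself. We must justify that satisfaction of $\psi\in\Psi_n$ depends only on a prefix of length $n$: eventuality operators $\Diamond$ could in principle look arbitrarily far ahead. We will rely on the paper's stated convention that goals are evaluated on prefixes of length at most $n$ (one step per sub-goal). We will also note that for genuinely unbounded $\Diamond$ the bound must be read with the prefix horizon in place of $n$. A second minor point is disjunctive goals $\psi=\bigvee_k\psi_k$. This causes no difficulty, because the argument bounds $|P(E)-\widehat P(E)|$ for an arbitrary prefix-measurable event $E$, so no structure of $\psi$ is used.
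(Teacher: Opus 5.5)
Your proof is correct and is essentially the paper's argument. The paper proves the recursion $d_{t+1}\le d_t+\Delta$, where $d_t$ is the total-variation distance between the two laws of $h_t$. It does so by inserting the hybrid term (model law of $h_t$ followed by the true kernel), so that one piece equals $d_t$ and the other is at most $\Delta$. Your hybrid chain $\mathbb Q_n,\dots,\mathbb Q_0$ with the data-processing step is exactly this recursion unrolled, with the order of true and model kernels mirrored.

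Both versions rest on the same convention that satisfaction of $\psi\in\Psi_n$ is determined by the length-$n$ prefix, and your caveat is warranted, since without it the bound fails. For example, take the depth-one goal $\Diamond[S=s^*]$ under a policy that keeps retrying a transition into $s^*$ of probability $\Delta$ in one model and $0$ in the other: the success probabilities are $1$ and $0$.
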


\begin{proof}[Proof of \cref{lem:sim}]
Fix any $\psi\in\Psi_n$ and policy $\pi$.

By the property of total variation distance, we have
\begin{equation}
\label{eq:step1-tv}
\big|P(\tau \models \psi \mid \pi,s_0)-P_{\widehat{\mathcal{M}}}(\tau \models \psi\mid \pi,s_0)\big|
\;\le\;
\mathrm{TV}\!\Big(P(\tau_{0: n}\in\cdot \mid \pi,s_0),\, P_{\widehat{\mathcal{M}}}(\tau_{0: n}\in\cdot \mid \pi,s_0)\Big),
\end{equation}
where we use $P(\tau\in\cdot \mid \pi,s_0)$ to denote the probability measure over trajectory space induced by 
$\pi$ from $s_0$.

Hence it suffices to show
\begin{equation}
\label{eq:traj-tv-target}
\mathrm{TV}\!\Big(P(\tau_{0:n}\in\cdot \mid \pi,s_0),\, P_{\widehat{\mathcal{M}}}(\tau_{0:n} \in\cdot \mid \pi,s_0)\Big)
\;\le\; n\Delta
\end{equation}
by \eqref{eq:step1-tv}.

For $t=0,1,\dots,n$, let the history trajectory be $h_t \;\coloneqq\; (s_0,a_0,s_1,a_1,\dots,s_t)$,
and define
\begin{equation}
\label{eq:dt-def}
\begin{aligned}
    d_t &\;\coloneqq\; \mathrm{TV}\!\Big(P(\tau_{0 : t} \models h_t\in\cdot \mid \pi,s_0),\, P_{\widehat{\mathcal{M}}}(\tau_{0 : t} \models h_t\in\cdot \mid \pi,s_0)\Big)\\&
\;=\;\frac12\sum_{h_t}\Big|P(\tau_{0 : t} \models h_t\mid \pi,s_0)-P_{\widehat{\mathcal{M}}}(\tau_{0 : t} \models h_t\mid \pi,s_0)\Big|.
\end{aligned}
\end{equation}
Here, $P(\tau_{0 : t} \models h_t\in\cdot \mid \pi,s_0)$ is the probability measure over histories induced by $\pi$.

Since the initial states $s_0$ are the same under both models, $d_0 = 0$.

We claim that for each $t < n$,
\begin{equation}
\label{eq:recursion}
d_{t+1}\;\le\; d_t+\Delta.
\end{equation}
To prove this, write any $h_{t+1}$ uniquely as $(h_t,a,s')$, where $h_t$ ends at state $s_t$,
$a$ is the action at time $t$, and $s'$ is the next state.
Under the deterministic oracle, given $h_t$ the action is $a_t=\pi(h_t; \psi)$, so

\begin{align}
P(\tau_{0 : t + 1} \models h_{t+1} \mid \pi,s_0)
&=
P(\tau_{0 : t} \models h_t \mid \pi,s_0)\,
\sum_{a\in\boldsymbol A}\sum_{s'\in\boldsymbol S}
\mathbf{1}\!\big\{a=\pi(\psi,h_t)\big\}\,
P_{s_t s'}(a),
\label{eq:decomp-P-summed}
\\ \text{and similarly, }
P_{\widehat{\mathcal{M}}}(\tau_{0 : t + 1} \models  h_{t+1} \mid \pi,s_0)
&=
P_{\widehat{\mathcal{M}}}(\tau_{0 : t} \models  h_t \mid \pi,s_0)\,
\sum_{a\in\boldsymbol A}\sum_{s'\in\boldsymbol S}
\mathbf{1}\!\big\{a=\pi(\psi,h_t)\big\}\,
\hat{P}_{s_t s'}(a).
\label{eq:decomp-Phat-summed}
\end{align}
where $\mathbf{1}\!\big\{(a=b)\big\}$ is indicator function taking the value $1$ if $a=b$ is true otherwise $0$.

Applying \eqref{eq:decomp-P-summed} and \eqref{eq:decomp-Phat-summed} to $d_{t + 1}$,
\begin{align}
2d_{t+1}
&=\sum_{h_{t+1}}\Big|P(\tau_{0 : t + 1} \models h_{t+1}\mid \pi,s_0)-P_{\widehat{\mathcal{M}}}(\tau_{0 : t + 1} \models h_{t+1}\mid \pi,s_0)\Big| \nonumber\\
&=\sum_{h_t}\sum_{a\in\boldsymbol{A}}\sum_{s'\in\boldsymbol{S}}\Big|
P(\tau_{0 : t} \models h_t \mid \pi,s_0)\mathbf{1}\!\{a=\pi(\psi,h_t)\}P_{s_ts'}(a) \nonumber\\
&\hspace{28mm}-P_{\widehat{\mathcal{M}}}(\tau_{0 : t} \models h_t\mid \pi,s_0)\mathbf{1}\!\{a=\pi(\psi,h_t)\}\hat{P}_{ s_t s'}(a)
\Big|.
\label{eq:dt+1}
\end{align}

Insert the intermediate term
$P_{\widehat{\mathcal{M}}}(\tau_{0 : t} \models h_t \mid \pi,s_0)\mathbf{1}\!\{a=\pi(\psi,h_t)\}P_{s_ts'}(a)$
and apply the triangle inequality to \eqref{eq:dt+1}:
\begin{align*}
2d_{t+1}
&\le \underbrace{\sum_{h_t}\sum_{a \in \boldsymbol{A}}\sum_{s' \in \boldsymbol{S}}
\Big|\big(P(\tau_{0 : t} \models h_t\mid \pi,s_0)-P_{\widehat{\mathcal{M}}}(\tau_{0 : t} \models h_t\mid \pi,s_0)\big)\mathbf{1}\!\{a=\pi(\psi,h_t)\}P_{s_ts'}(a)\Big|}_{\text{(I)}}\\
&\quad+\underbrace{\sum_{h_t}\sum_{a \in \boldsymbol{A}}\sum_{s' \in \boldsymbol{S}}
P_{\widehat{\mathcal{M}}}(\tau_{0 : t} \models h_t\mid \pi,s_0)\mathbf{1}\!\{a=\pi(\psi,h_t)\}\Big|P_{s_ts'}(a)-\hat{P}_{s_ts'}(a)\Big|}_{\text{(II)}}.
\end{align*}

For \text{(I)}, using $\sum_{a \in \boldsymbol{A}} \mathbf{1}\{a=\pi(\psi,h_t)\}=1$ and $\sum_{s' \in \boldsymbol{S}}P_{s_ts'}(a)=1$,
\[
\text{(I)}=\sum_{h_t}\Big|P(\tau_{0 : t} \models h_t \mid \pi,s_0)-P_{\widehat{\mathcal{M}}}(\tau_{0 : t} \models h_t \mid \pi,s_0)\Big| \;=\; 2d_t.
\]
For \text{(II)}, note that for each $(s_t,a)$,
\[
\sum_{s' \in \boldsymbol{S}}\Big|P_{s_ts'}(a)-\hat{P}_{s_ts'}(a) \Big|
\;=\; 2\,\mathrm{TV}\!\Big(P(\cdot\mid s_t,a),\,P_{\widehat{\mathcal{M}}}(\cdot\mid s_t,a)\Big)
\;\le\; 2\Delta.
\]
Hence
\[
\text{(II)}\le \sum_{h_t} P_{\widehat{\mathcal{M}}}(\tau_{0 : t} \models h_t\mid \pi,s_0)\sum_{a \in \boldsymbol{A}} \mathbf{1}\{a=\pi(\psi,h_t)\}\cdot 2\Delta \;=\; 2\Delta.
\]
Combining, we obtain $2d_{t+1}\le 2d_t+2\Delta$, i.e.\ \eqref{eq:recursion}.

From $d_0=0$ and $d_{t+1}\le d_t+\Delta$ for $t=0,\dots,n-1$, we get by induction $d_n\le n\Delta$.
Since $h_n=(s_0,a_0,\dots,s_n)$ contains the full length-$n$ trajectory information, we have
\[
d_n=\mathrm{TV}\!\Big(P(\tau_{0:n}\in\cdot \mid \pi,s_0),\, P_{\widehat{\mathcal{M}}}(\tau_{0:n}\in\cdot \mid \pi,s_0)\Big) \le n\Delta,
\]
which proves \eqref{eq:traj-tv-target}, thus proves the desired inequality.
\end{proof}

Then, We present another auxiliary results for the proof,

\begin{theorem}
Let $\hat{\Psi}_n \subseteq \Psi_n$ be any set of goals with a maximum depth $n$. Fix an expected failure rate $\delta$ and suppose $\gamma>\delta$. Assume $\pi$ is not bounded goal-conditioned on $\hat{\Psi}_n$ under the true model. Formally, for all $\psi\in\hat{\Psi}_n$,
\[
P(\tau \models \psi \mid \pi, s_0)
< (1-\gamma)\max_{\pi} P(\tau \models \psi \mid \pi, s_0).
\]
We can construct a world model $\widehat{\mathcal M}$ that rationalizes the agent's performance on $\hat{\Psi}_n$. That is, 
for all $\psi\in \hat{\Psi}_n$,
\[
P_{\widehat{\mathcal M}}(\tau \models \psi \mid \pi, s_0)
\ge (1-\delta)\max_{\pi} P_{\widehat{\mathcal M}}(\tau \models \psi \mid \pi, s_0),
\]
and moreover there exists a transition $(s,a,s')$ such that
\[
\left| P_{\widehat{\mathcal M}ss'}(a) - P_{ss'}(a) \right|
>
\frac{(\gamma-\delta)\max_{\pi}P(\tau \models \psi \mid \pi, s_0)}{(2-\delta)n}
\]
where $P_{\widehat{\mathcal M}ss'}(a)$ denotes the transition probability of $\widehat{\mathcal M}$.
\end{theorem}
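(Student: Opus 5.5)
We prove the two claims separately. First we exhibit a model $\widehat{\mathcal M}$ under which $\pi$ is optimal on every goal of $\hat{\Psi}_n$. Then we use \cref{lem:sim} to show that any such model must be far from $\mathcal M$ in at least one entry.

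\textbf{Construction.} The simplest rationalizing model makes $\pi$'s own choices optimal. For each state $s$ we pick an action $a_s$ that $\pi$ uses at $s$ on the relevant goals. We then let $\widehat{\mathcal M}$ make every action at $s$ behave like $a_s$, i.e.\ we set $P_{\widehat{\mathcal M}}(\cdot\mid s,b)\coloneqq P(\cdot\mid s,a_s)$ for all $b$. The trajectory law is then policy-independent, so $P_{\widehat{\mathcal M}}(\tau\models\psi\mid\pi,s_0)=\max_{\pi'}P_{\widehat{\mathcal M}}(\tau\models\psi\mid\pi',s_0)$. This gives bounded goal-conditioning with failure rate $0\le\delta$. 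Goals containing $\top$-type action constraints are unaffected, since they depend on actions and not on dynamics.

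If the policy depends on history and goal, so that a single $a_s$ is not well defined, we fall back to an existence argument. We take any model in which $\pi$ is $(\delta)$-bounded on $\hat\Psi_n$; for instance, a model with $|\boldsymbol S|=1$-like collapsed dynamics in which every goal probability is insensitive to the policy. The lower bound below holds for \emph{any} rationalizing model, so only existence matters here.

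\textbf{Lower bound via \cref{lem:sim}.} Fix $\psi\in\hat\Psi_n$ and write $p=P(\tau\models\psi\mid\pi,s_0)$, $p^\star=\max_\pi P(\tau\models\psi\mid\pi,s_0)$, and $\hat p,\hat p^\star$ for the corresponding quantities under $\widehat{\mathcal M}$. Applying \cref{lem:sim} to $\pi$ and to the true optimal policy $\pi^\star$ gives $\hat p\le p+n\Delta$ and $\hat p^\star\ge p^\star-n\Delta$. Combining these with the rationalization $\hat p\ge(1-\delta)\hat p^\star$ and the hypothesis $p<(1-\gamma)p^\star$ yields
\[
(1-\gamma)p^\star+n\Delta> p+n\Delta\ge \hat p\ge(1-\delta)\hat p^\star\ge(1-\delta)(p^\star-n\Delta).
\]
Hence $(2-\delta)n\Delta>(\gamma-\delta)p^\star$, which gives $\Delta>\frac{(\gamma-\delta)p^\star}{(2-\delta)n}$. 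Since $\Delta$ is a maximum over $(s,a)$, there is a pair $(s,a)$ with $\mathrm{TV}>\frac{(\gamma-\delta)p^\star}{(2-\delta)n}$.

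\textbf{Main obstacle.} The remaining step is passing from total variation to a single entry $(s,a,s')$, which the statement asserts without a $|\boldsymbol S|$ factor. The relation $\mathrm{TV}=\tfrac12\sum_{s'}|\hat P_{ss'}(a)-P_{ss'}(a)|$ only guarantees some entry exceeding $2\Delta/|\boldsymbol S|$; the surrounding text already concedes this factor for arbitrary rationalizing models.

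For the constructed $\widehat{\mathcal M}$ we can do better. I would perturb only one row, and within it move mass between only two next-states, $s'$ and a designated alternative. Then $\mathrm{TV}$ equals the single entry difference, and the displayed bound transfers directly to $|P_{\widehat{\mathcal M}ss'}(a)-P_{ss'}(a)|$. The delicate part is checking that such a two-entry perturbation can still rationalize $\pi$. Failing that, I would state the entrywise bound with the extra $2/|\boldsymbol S|$ factor, consistent with the remark after the theorem.
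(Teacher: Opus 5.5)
Your chain of inequalities for $\Delta$ is exactly the paper's argument: \cref{lem:sim} applied to $\pi$ and to an optimal $\pi^\star$, combined with the rationalization inequality and the failure hypothesis. The two pieces you add around it both have genuine gaps.

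\textbf{The rationalizing model.} Your collapsed model does not rationalize $\pi$ on goals that constrain actions. Making all rows at $s$ equal makes the law of the \emph{state} sequence policy-independent, but not the law of the trajectory. For a Now goal $[A=a]$, or for $\Diamond([S=s,A=a])$ (which the paper's own probe goals use), $\max_{\pi'}$ still takes the required action while $\pi$ need not. So the reason you give for such goals being ``unaffected'' is precisely why they are not handled, and the fallback fails for the same reason.

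No repair is possible at this generality. Take $\hat\Psi_1=\{\langle[A=a]\rangle\}$ and let $\pi$ play $b\neq a$ at $s_0$. Every model then gives success $0$ against an optimum of $1$. So the existence half of the statement is false unless $\hat\Psi_n$ is restricted, e.g.\ to state-only goals with $\boldsymbol g=\boldsymbol G\times\boldsymbol A$. The paper's proof does not supply existence either; it simply assumes the rationalization inequality holds in $\widehat{\mathcal M}$.

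\textbf{The entry-wise step.} You correctly flag this as the crux, and the paper's handling is the very move you propose. It posits a model equal to $\mathcal M$ except for a two-point shift in one row, but never checks that this model rationalizes $\pi$. In general it does not. Consider:
\begin{itemize}
\item deterministic branches $s_0\xrightarrow{a}x\xrightarrow{a}g_1$ and $s_0\xrightarrow{b}y\xrightarrow{a}g_2$, with $b$ at $x$ or $y$ leading to a sink;
\item goals $\langle\bigcirc[S=x],\bigcirc[S=g_1]\rangle$ and $\langle\bigcirc[S=y],\bigcirc[S=g_2]\rangle$;
\item $\pi$ picking the right first action but then $b$.
\end{itemize}
Every single-row change leaves one goal's optimal path and $\pi$'s failing path intact, so that goal is not rationalized.

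What does work, for state-only goals, is to collapse onto point masses. Pick any $\sigma:\boldsymbol S\to\boldsymbol S$ and set $P_{\widehat{\mathcal M}}(\cdot\mid s,b)=e_{\sigma(s)}$, the point mass at $\sigma(s)$, for every $b$. The state sequence is then deterministic and policy-independent, so $\pi$ is optimal in $\widehat{\mathcal M}$. Since $\sigma$ need not be tied to $\pi$, your history-dependence worry disappears. Moreover, for every row,
\[
\mathrm{TV}\big(P(\cdot\mid s,b),e_{\sigma(s)}\big)=1-P_{s\sigma(s)}(b)=\big|P_{\widehat{\mathcal M}s\sigma(s)}(b)-P_{s\sigma(s)}(b)\big|.
\]
Hence the row attaining $\Delta$ contains a single entry whose gap is exactly $\Delta$. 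Your lower bound on $\Delta$ then transfers to that entry with no $|\boldsymbol S|$ factor (taking \cref{lem:sim} as given, as the paper does). Your $2/|\boldsymbol S|$ fallback, by contrast, would not prove the statement as written.
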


\begin{proof}
Fix any $\psi \in \hat{\Psi}_n$.

By \cref{lem:sim}, for any policy $\pi$,
\begin{equation}
    \begin{aligned}
        &P_{\widehat{\mathcal{M}}}(\tau \models \psi \mid \pi, s_0)
\;\le\;
P(\tau \models \psi \mid \pi, s_0) + n\Delta,
\\
&P_{\widehat{\mathcal{M}}}(\tau \models \psi \mid \pi, s_0)
\;\ge\;
P(\tau \models \psi \mid \pi, s_0) - n\Delta.
    \end{aligned}
\end{equation}
Taking $\max_{\pi}$ on both sides gives
\begin{equation}
    \label{eq:opt-gap-both}
    \begin{aligned}
        \max_{\pi}P_{\widehat{\mathcal{M}}}(\tau \models \psi \mid \pi, s_0)
\;\le\;
\max_{\pi}P(\tau \models \psi \mid \pi, s_0) + n\Delta, \\
\qquad
\max_{\pi}P_{\widehat{\mathcal{M}}}(\tau \models \psi \mid \pi, s_0)
\;\ge\;
\max_{\pi}P(\tau \models \psi \mid \pi, s_0) - n\Delta.
    \end{aligned}
\end{equation}

Using the assumption of $(1-\delta)$-bounded goal-conditioning of $\pi$ in $\widehat{\mathcal{M}}$, we have
\[
P_{\widehat{\mathcal{M}}}(\tau \models \psi \mid \pi, s_0)
\;\ge\; 
(1-\delta)\max_{\pi}P_{\widehat{\mathcal{M}}}(\tau \models \psi \mid \pi, s_0).
\]
Combining and then using \eqref{eq:opt-gap-both},
\begin{align*}
P(\tau \models \psi \mid \pi, s_0)
&\ge (1-\delta)\max_{\pi}P_{\widehat{\mathcal{M}}}(\tau \models \psi \mid \pi, s_0) - n\Delta \\
&\ge (1-\delta)\big(\max_{\pi}P(\tau \models \psi \mid \pi, s_0) - n \Delta \big) - n\Delta \\
&= (1-\delta) \max_{\pi} P(\tau \models \psi \mid \pi, s_0) - (2-\delta) n \Delta.
\end{align*}

By the assumed non goal-conditioning gap in the true environment,
\[
P(\tau \models \psi \mid \pi, s_0) \;<\; (1-\gamma)\max_{\pi}P(\tau \models \psi \mid \pi, s_0).
\]
Therefore,
\[
(1-\gamma)\max_{\pi}P(\tau \models \psi \mid \pi, s_0) \;>\; (1-\delta)\max_{\pi}P(\tau \models \psi \mid \pi, s_0) - (2-\delta)n\Delta,
\]
which rearranges to
\begin{equation}
\label{eq:Delta-lb}
\Delta \;>\; \frac{(\gamma-\delta)\max_{\pi}P(\tau \models \psi \mid \pi, s_0)}{(2-\delta)n}.
\end{equation}
where we recall that $\Delta \;=\; \max_{s\in\boldsymbol S,\ a\in\boldsymbol A}
\mathrm{TV}\!\Big(P(\cdot\mid s,a),\,P_{\widehat{\mathcal{M}}}(\cdot\mid s,a)\Big).$ Without additional structural assumptions on how the recovery error is distributed, a worst case can
concentrate all discrepancy on a single state-action pair $(s^\dagger,a^\dagger)$. Concretely, we construct
\[
P_{\widehat{\mathcal M}}(\cdot\mid s,a)=P(\cdot\mid s,a)\quad\forall (s,a)\neq (s^\dagger,a^\dagger),
\qquad
\mathrm{TV}\!\big(P(\cdot\mid s^\dagger,a^\dagger),P_{\widehat{\mathcal M}}(\cdot\mid s^\dagger,a^\dagger)\big)=\Delta,
\]
and the discrepancy inside $(s^\dagger,a^\dagger)$ is a two-point shift. In particular, there exists some
$s'^\dagger\in\boldsymbol S$ such that
\[
\big|\hat P_{s^\dagger s'^\dagger}(a^\dagger)-P_{s^\dagger s'^\dagger}(a^\dagger)\big|=\Delta .
\]

Thus \eqref{eq:Delta-lb}
implies the stated lower bound:
\[
\left| \widehat{P}_{ss'}(a) - P_{ss'}(a) \right|
\;>\;
\frac{(\gamma - \delta) \max_{\pi}P(\tau \models \psi \mid \pi, s_0)}{(2 - \delta)n}.
\]
This concludes the proof.
\end{proof}

\section{Algorithm}
\label{app:algorithm}

In this section, we first present the pseudocode of algorithms that implement the constructive certification in \cref{thm:one_algorithm and upper bound}. \cref{alg:filter_nontrivial} handles the non-trivial regime that $P_{ss'}(a) \in (0, 1)$ and \cref{alg:filter_trivial} is introduced to specifically handle trivial cases when $P_{ss'}(a) \in \{0, 1\}$. Both algorithms perform $\mathcal{O}(n)$ iterations with $\mathcal{O}(1)$ work per iteration, so the overall computational complexity is $\mathcal{O}(n)$.

\begin{algorithm}[ht!]
\caption{ Non-trivial Filter and Recover for $(s,a,s')$ if $P_{ss'}(a)\in(0,1)$ }
\label{alg:filter_nontrivial}
\begin{algorithmic}[1]
\REQUIRE Deterministic goal-conditioned policy $\pi(a_t \mid h_t;\psi)$

\REQUIRE Specific transition $(s,a,s')$

\REQUIRE Fixed horizon $n\in\mathbb N$

\REQUIRE Failure rate $\delta\in[0,0.5)$.

\REQUIRE Alternative action $b \ne a$

\REQUIRE Anchor $P_{ss'}(a) \in (0,1)$

\STATE Define certificate flag $\mathrm{Cert}\in\{\mathrm{True},\mathrm{False}\}$

\STATE Define predictive world model $\hat{P}_{ss'}(a)$

\STATE Define for $r= \{1, 2 ,\ldots,n-1\}$:
\[
p_{max}(r) \gets \frac{r+1}{\,n+1-\delta(n-r)\,},\qquad
p_{min}(r)\gets \frac{(r+1)(1-\delta)}{\,n+1-(r+1)\delta\,}.
\]
\STATE Initialize $p_{min} \gets 0$, $p_{max}\gets 1$, $k \gets \text{Null}$, $\mathrm{Cert} = \mathrm{False}$, $\hat{P}_{ss'}(a) \gets \text{Null}$

\FOR{$r=1$ \textbf{to} $n-1$}
    \STATE Define LTL goals:\\
    $\varphi_0 \gets [A = a]$; $\varphi_0' \gets [A = b]$ \\
    \algbluetxt{Require to return to state $s$ and take action $a$}\\
    $\varphi_1 \gets \Diamond[A = a, S = s]$\\
    \algbluetxt{Require to return to state $s$ and take action $a$} \\
    $\varphi_2 \gets \bigcirc[S = s']$; $\varphi_2' \gets \bigcirc[S \ne s']$\\
    \algbluetxt{Require to reach state $s'$ or not to reach $s'$} \\
    $\psi_1 = \langle \varphi_1, \varphi_2 \rangle$ \\
    \algbluetxt{Require to return to state $s$, take action $a$ and reach $s'$}
    \STATE $\psi_2 = \langle \varphi_1, \varphi_2' \rangle$ \\
    \algbluetxt{Require to return to state $s$, take action $a$ and not to reach $s'$}    
    \STATE Construct $\psi_a\{r, n\} \coloneqq \langle \varphi_0, (\psi_1)_{\times r}, (\psi_2)_{\times (n - r)} \rangle$; \\$\psi_b\{r + 1, n\} \coloneqq \langle \varphi_0, (\psi_1)_{\times (r + 1)}, (\psi_2)_{\times (n - r - 1)} \rangle$ \\
    \algbluetxt{Only require $\psi_1$ to occur $r$ or $r + 1$ times and $\psi_2$ to occur $n - r - 1$ times}
    
    \STATE Construct $\psi_{a,b}(r,n)=\psi_a(r,n)\vee\psi_b(r+1,n)$.
    
    \STATE Query the deterministic initial action:
    \[
    a_r \gets \arg\max_{x\in\{a,b\}} \pi(x\mid s_0;\psi_{a,b}(r,n)).
    \]
    \IF{$a_r=a$}
        \STATE Update upper constraint: $p_{max} \gets \min\{p_{max} ,\; p_{max}(r)\}$.
        \IF {$k = \text{Null}$}
            \STATE $k \gets r-1$
        \ENDIF
    \ELSE
        \STATE Update lower constraint: $p_{min} \gets \max\{p_{min},\; p_{min}(r)\}$.
    \ENDIF
\ENDFOR

\IF{$(p_{min} \le P_{ss'}(a) \le p_{max}) \ \AND \ (k \neq \text{Null})$}
    \STATE $\mathrm{Cert}\gets \mathrm{True}$.
    \STATE Give the approximation $\hat{P}_{ss'}(a)$ as:
    \[
    \hat{P}_{ss'}(a) \gets \frac{k + 0.5}{n + 1}.
    \]
\ENDIF

\STATE \textbf{return} $\big(\mathrm{Cert},\hat{P}_{ss'}(a)\big)$.
\end{algorithmic}
\end{algorithm}

\begin{algorithm}[ht!]
\caption{Trivial Filter and Recover for $(s,a,s')$ when $P_{ss'}(a)\in\{0,1\}$}
\label{alg:filter_trivial}
\begin{algorithmic}[1]
\REQUIRE Deterministic goal-conditioned policy $\pi(a_t \mid h_t;\psi)$
\REQUIRE Specific transition $(s,a,s')$

\REQUIRE Alternative action $b\neq a$

\REQUIRE Desired horizon $n\in\mathbb N$

\REQUIRE Anchor $P_{ss'}(a) \in \{0,1\}$

\STATE Define certificate flag $\mathrm{Cert}\gets \mathrm\{True, False\}$

\STATE Define predictive world model $\hat{P}_{ss'}(a)\in\{0,1\}$

\STATE Initialize $\mathrm{Cert} = \mathrm{True}$ and $\hat{P}_{ss'}(a) = \mathrm{Null}$

\FOR{$r=0$ \textbf{to} $n-1$}
    \STATE Define $k \coloneqq r$
    \STATE Define LTL goals:\\
    $\varphi_0 \gets [A = a]$; $\varphi_0' \gets [A = b]$ \\
    \algbluetxt{Require to take action $a$ or action $b$ at initial time step}\\
    $\varphi_1 \gets \Diamond[A = a, S = s]$\\
    \algbluetxt{Require to return to state $s$ and take action $a$} \\
    $\varphi_2 \gets \bigcirc[S = s']$; $\varphi_2' \gets \bigcirc[S \ne s']$\\
    \algbluetxt{Require to reach state $s'$ or not to reach $s'$} \\
    $\psi_1 = \langle \varphi_1, \varphi_2 \rangle$ \\
    \algbluetxt{Require to return to state $s$, take action $a$ and reach $s'$}
    \STATE $\psi_2 = \langle \varphi_1, \varphi_2' \rangle$ \\
    \algbluetxt{Require to return to state $s$, take action $a$ and not to reach $s'$}
    
    \STATE Construct $\widetilde{\psi}_a\{k, n\} \coloneqq \bigvee_{\text{sequences with } r \text{ }\le \text{ } k 
     \text{ success}} \langle \varphi_0, (\psi_1)_{\times r}(\psi_2)_{\times (n - r)} \rangle$; \\
     
     $\widetilde{\psi}_b\{r, n\} \coloneqq \bigvee_{\text{sequences with } r \text{ } > \text{ } k 
     \text{ success}} \langle \varphi_0, (\psi_1)_{\times (r + 1)}, (\psi_2)_{\times (n - r + 1)} \rangle$\\
     \algbluetxt{Only require $\psi_1$ to occur $r$ or $r + 1$ times and $\psi_2$ to occur $n - r - 1$ times}
    
    \STATE Construct $\widetilde{\psi}_{a,b}(r,n)=\widetilde{\psi}_a(r,n) \vee \widetilde{\psi}_b(r+1,n)$.

    \STATE Query the deterministic initial action:
    \[
    a_r \gets \arg\max_{x\in\{a,b\}} \pi(x\mid s_0;\widetilde{\psi}_{a,b}(r,n)).
    \]
    \IF {$P_{ss'}(a) = 0$ \textbf{and} $a_r=b$}
        \STATE $\mathrm{Cert}\gets \mathrm{False}$.
        \STATE \textbf{break}
    \ENDIF
    \IF{$P_{ss'}(a) = 1$ \textbf{and} $a_r=a$}
        \STATE $\mathrm{Cert}\gets \mathrm{False}$.
        \STATE \textbf{break}
    \ENDIF
\ENDFOR

\IF{$\mathrm{Cert}=\mathrm{True}$}
    \STATE $\hat{P}_{ss'}(a) \gets P_{ss'}(a)$ \\
    \algbluetxt{Output exactly $0$ or $1$}
\ENDIF
\STATE \textbf{return} $(\mathrm{Cert},\hat{P}_{ss'}(a))$.
\end{algorithmic}
\end{algorithm}

\section{Experiments}
\label{app:experiment}

Having discussed the detailed setup of our numerical experiments in \cref{sec:exp}, We now introduce the detailed results and our case study of our experiments.

\paragraph{Effectiveness of filtering algorithms.}

We first provide \cref{tab:tightness_gap_ours_vs_richens_0_400_compact} to present a quantitative comparison of the bound tightness. We define tightness gap as the difference between the calculated theoretical upper bound and the actual empirical estimation error (i.e., $\text{Gap} = \text{Bound} - |\hat{P}_{ss'}(a)-P_{ss'}(a)|$). A smaller gap indicates a more precise upper bound. As shown in \cref{tab:tightness_gap_ours_vs_richens_0_400_compact}, our algebraic bound achieves a significantly narrower gap across all goal depths. For instance, at $n=100$ with $\delta=0.01$, our tightness gap is approximately $0.35 \%$, whereas the baseline statistical bound exhibits a gap of $5.31 \%$. This represents a nearly 15-times improvement in precision, confirming that our method avoids the looseness typical of universal guarantees.  Additionally, while the pass rate naturally decreases with $n$, the filter consistently retains a high-precision subset of transitions that are strictly verified to be accurate.

\begin{table*}[t!]
\scriptsize
  \centering
  \renewcommand{\arraystretch}{1.10}
  \setlength{\tabcolsep}{2.1pt}
  \definecolor{bgblue}{RGB}{225,250,255}

  \caption{Recovered Error Tightness Gap Comparison (Lower is Better), $n\le 400$}
  \label{tab:tightness_gap_ours_vs_richens_0_400_compact}

  \resizebox{\textwidth}{!}{%
  \begin{tabular}{@{}c|cc|cc|cc|cc|cc|cc|cc|cc|cc@{}}
    \toprule
    \multirow{2}{*}{\textbf{$\Delta_f$}}
      & \multicolumn{2}{c|}{\textbf{25}}
      & \multicolumn{2}{c|}{\textbf{50}}
      & \multicolumn{2}{c|}{\textbf{75}}
      & \multicolumn{2}{c|}{\textbf{100}}
      & \multicolumn{2}{c|}{\textbf{125}}
      & \multicolumn{2}{c|}{\textbf{150}}
      & \multicolumn{2}{c|}{\textbf{200}}
      & \multicolumn{2}{c|}{\textbf{300}}
      & \multicolumn{2}{c}{\textbf{400}} \\
    \cmidrule(lr){2-3}\cmidrule(lr){4-5}\cmidrule(lr){6-7}\cmidrule(lr){8-9}
    \cmidrule(lr){10-11}\cmidrule(lr){12-13}\cmidrule(lr){14-15}\cmidrule(lr){16-17}
    \cmidrule(lr){18-19}
      & \textbf{O} & \textbf{R}
      & \textbf{O} & \textbf{R}
      & \textbf{O} & \textbf{R}
      & \textbf{O} & \textbf{R}
      & \textbf{O} & \textbf{R}
      & \textbf{O} & \textbf{R}
      & \textbf{O} & \textbf{R}
      & \textbf{O} & \textbf{R}
      & \textbf{O} & \textbf{R} \\
    \midrule

    0.01
      & \cellcolor{bgblue}\textbf{\textcolor{red}{0.0124}} & \textcolor{blue}{0.0977}
      & \cellcolor{bgblue}\textbf{\textcolor{red}{0.0059}} & \textcolor{blue}{0.0706}
      & \cellcolor{bgblue}\textbf{\textcolor{red}{0.0042}} & \textcolor{blue}{0.0573}
      & \cellcolor{bgblue}\textbf{\textcolor{red}{0.0035}} & \textcolor{blue}{0.0531}
      & \cellcolor{bgblue}\textbf{\textcolor{red}{0.0021}} & \textcolor{blue}{0.0454}
      & \cellcolor{bgblue}\textbf{\textcolor{red}{0.0023}} & \textcolor{blue}{0.0429}
      & \cellcolor{bgblue}\textbf{\textcolor{red}{0.0020}} & \textcolor{blue}{0.0369}
      & \cellcolor{bgblue}\textbf{\textcolor{red}{0.0020}} & \textcolor{blue}{0.0301}
      & \cellcolor{bgblue}\textbf{\textcolor{red}{0.0016}} & \textcolor{blue}{0.0268} \\

    0.03
      & \cellcolor{bgblue}\textbf{\textcolor{red}{0.0144}} & \textcolor{blue}{0.0996}
      & \cellcolor{bgblue}\textbf{\textcolor{red}{0.0080}} & \textcolor{blue}{0.0730}
      & \cellcolor{bgblue}\textbf{\textcolor{red}{0.0064}} & \textcolor{blue}{0.0591}
      & \cellcolor{bgblue}\textbf{\textcolor{red}{0.0053}} & \textcolor{blue}{0.0532}
      & \cellcolor{bgblue}\textbf{\textcolor{red}{0.0047}} & \textcolor{blue}{0.0463}
      & \cellcolor{bgblue}\textbf{\textcolor{red}{0.0048}} & \textcolor{blue}{0.0427}
      & \cellcolor{bgblue}\textbf{\textcolor{red}{0.0041}} & \textcolor{blue}{0.0371}
      & \cellcolor{bgblue}\textbf{\textcolor{red}{0.0036}} & \textcolor{blue}{0.0297}
      & \cellcolor{bgblue}\textbf{\textcolor{red}{0.0032}} & \textcolor{blue}{0.0254} \\

    0.05
      & \cellcolor{bgblue}\textbf{\textcolor{red}{0.0168}} & \textcolor{blue}{0.1006}
      & \cellcolor{bgblue}\textbf{\textcolor{red}{0.0113}} & \textcolor{blue}{0.0727}
      & \cellcolor{bgblue}\textbf{\textcolor{red}{0.0091}} & \textcolor{blue}{0.0586}
      & \cellcolor{bgblue}\textbf{\textcolor{red}{0.0075}} & \textcolor{blue}{0.0517}
      & \cellcolor{bgblue}\textbf{\textcolor{red}{0.0074}} & \textcolor{blue}{0.0461}
      & \cellcolor{bgblue}\textbf{\textcolor{red}{0.0070}} & \textcolor{blue}{0.0419}
      & \cellcolor{bgblue}\textbf{\textcolor{red}{0.0067}} & \textcolor{blue}{0.0363}
      & \cellcolor{bgblue}\textbf{\textcolor{red}{0.0058}} & \textcolor{blue}{0.0284}
      & \cellcolor{bgblue}\textbf{\textcolor{red}{0.0056}} & \textcolor{blue}{0.0244} \\

    0.08
      & \cellcolor{bgblue}\textbf{\textcolor{red}{0.0217}} & \textcolor{blue}{0.1017}
      & \cellcolor{bgblue}\textbf{\textcolor{red}{0.0156}} & \textcolor{blue}{0.0716}
      & \cellcolor{bgblue}\textbf{\textcolor{red}{0.0134}} & \textcolor{blue}{0.0577}
      & \cellcolor{bgblue}\textbf{\textcolor{red}{0.0121}} & \textcolor{blue}{0.0497}
      & \cellcolor{bgblue}\textbf{\textcolor{red}{0.0126}} & \textcolor{blue}{0.0453}
      & \cellcolor{bgblue}\textbf{\textcolor{red}{0.0116}} & \textcolor{blue}{0.0401}
      & \cellcolor{bgblue}\textbf{\textcolor{red}{0.0111}} & \textcolor{blue}{0.0342}
      & \cellcolor{bgblue}\textbf{\textcolor{red}{0.0107}} & \textcolor{blue}{0.0270}
      & \cellcolor{bgblue}\textbf{\textcolor{red}{0.0101}} & \textcolor{blue}{0.0223} \\

    0.10
      & \cellcolor{bgblue}\textbf{\textcolor{red}{0.0268}} & \textcolor{blue}{0.1030}
      & \cellcolor{bgblue}\textbf{\textcolor{red}{0.0197}} & \textcolor{blue}{0.0716}
      & \cellcolor{bgblue}\textbf{\textcolor{red}{0.0177}} & \textcolor{blue}{0.0576}
      & \cellcolor{bgblue}\textbf{\textcolor{red}{0.0159}} & \textcolor{blue}{0.0485}
      & \cellcolor{bgblue}\textbf{\textcolor{red}{0.0162}} & \textcolor{blue}{0.0442}
      & \cellcolor{bgblue}\textbf{\textcolor{red}{0.0154}} & \textcolor{blue}{0.0392}
      & \cellcolor{bgblue}\textbf{\textcolor{red}{0.0144}} & \textcolor{blue}{0.0327}
      & \cellcolor{bgblue}\textbf{\textcolor{red}{0.0140}} & \textcolor{blue}{0.0254}
      & \cellcolor{bgblue}\textbf{\textcolor{red}{0.0137}} & \textcolor{blue}{0.0208} \\

    0.12
      & \cellcolor{bgblue}\textbf{\textcolor{red}{0.0322}} & \textcolor{blue}{0.1038}
      & \cellcolor{bgblue}\textbf{\textcolor{red}{0.0247}} & \textcolor{blue}{0.0716}
      & \cellcolor{bgblue}\textbf{\textcolor{red}{0.0224}} & \textcolor{blue}{0.0574}
      & \cellcolor{bgblue}\textbf{\textcolor{red}{0.0200}} & \textcolor{blue}{0.0476}
      & \cellcolor{bgblue}\textbf{\textcolor{red}{0.0206}} & \textcolor{blue}{0.0432}
      & \cellcolor{bgblue}\textbf{\textcolor{red}{0.0195}} & \textcolor{blue}{0.0382}
      & \cellcolor{bgblue}\textbf{\textcolor{red}{0.0190}} & \textcolor{blue}{0.0320}
      & \cellcolor{bgblue}\textbf{\textcolor{red}{0.0188}} & \textcolor{blue}{0.0248}
      & \cellcolor{bgblue}\textbf{\textcolor{red}{0.0184}} & \textcolor{blue}{0.0201} \\

    0.15
      & \cellcolor{bgblue}\textbf{\textcolor{red}{0.0422}} & \textcolor{blue}{0.1044}
      & \cellcolor{bgblue}\textbf{\textcolor{red}{0.0324}} & \textcolor{blue}{0.0704}
      & \cellcolor{bgblue}\textbf{\textcolor{red}{0.0300}} & \textcolor{blue}{0.0561}
      & \cellcolor{bgblue}\textbf{\textcolor{red}{0.0275}} & \textcolor{blue}{0.0461}
      & \cellcolor{bgblue}\textbf{\textcolor{red}{0.0275}} & \textcolor{blue}{0.0408}
      & \cellcolor{bgblue}\textbf{\textcolor{red}{0.0268}} & \textcolor{blue}{0.0362}
      & \cellcolor{bgblue}\textbf{\textcolor{red}{0.0261}} & \textcolor{blue}{0.0298}
      & \textcolor{blue}{0.0255} & \cellcolor{bgblue}\textbf{\textcolor{red}{0.0220}}
      & \textcolor{blue}{0.0249} & \cellcolor{bgblue}\textbf{\textcolor{red}{0.0170}} \\

    \bottomrule
  \end{tabular}%
  } 

  \parbox{\textwidth}{%
    \footnotesize
    \flushleft
    \makebox[1.5em][l]{$1$}\textbf{Legend:} O = Ours; R = \citet{Richens2025General}.\par
    \makebox[1.5em][l]{$2$}Tightness gap is defined as $\mathrm{Gap}=\mathrm{Bound}-\mathbb{E}[|\hat{P}_{ss'}(a)-P_{ss'}(a)|]$. Lower values indicate tighter upper bounds.\par
    \makebox[1.5em][l]{$3$}All values are shown with 4 decimal places.\par
    \makebox[1.5em][l]{$4$}\colorbox{bgblue}{\textbf{\textcolor{red}{Red highlights}}} denote the smaller value within each $(\Delta_f,n)$ pair; the other value is shown in \textcolor{blue}{blue}.\par
  }
\end{table*}

To understand the source of the improvement by smaller failure rates $\delta$ shown in \cref{fig:filter_result_line}, \cref{fig:appendix_error_distribution} visualizes the error distributions. The histograms reveal that the uncertified transitions suffers from a heavy tail of high error transition estimates, where a significant portion of transitions exhibit large estimation errors. Our filtering algorithms systematically identify and excise these high error outliers. Consequently, the distribution of the certified transitions is tightly concentrated near zero. This confirms that the reduction in mean error observed previously is not an artifact of random fluctuation, but the result of structurally removing statistically unreliable estimates.

\begin{figure*}[!htbp]
    \centering
    \includegraphics[width=\textwidth]{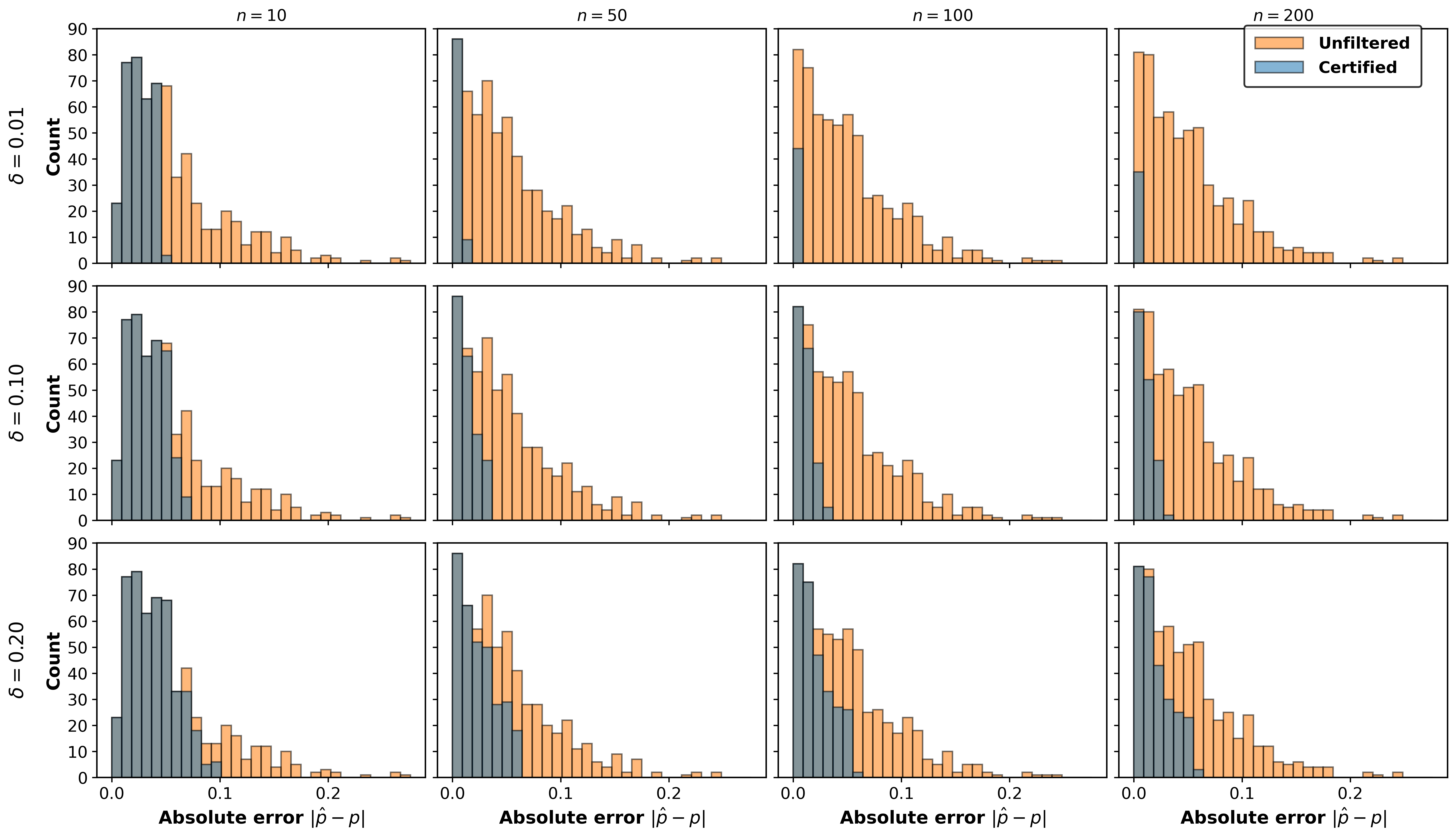} 
    \caption{\textbf{Distribution of absolute estimation errors}. We compare the histograms of $|\hat{P}_{ss'}(a) - P_{ss'}(a)|$ for uncertified (orange) versus certified (blue) transitions across varying goal depths $n$ (columns) and failure rates $\delta$ (rows).}
    \label{fig:appendix_error_distribution}
\end{figure*}

\begin{table*}[!htbp]
\small
  \centering
  \renewcommand{\arraystretch}{1.10}

  \definecolor{bgblue}{RGB}{225,250,255}

  \caption{Pass Rate of Filtering Algorithms, $n\le 400$}
  \label{tab:pass_rate_0_400_compact}

  \begin{tabular*}{\textwidth}{@{\extracolsep{\fill}}c|ccccccccc@{}}
    \toprule
    \textbf{$\Delta_f$}
      & \cellcolor{bgblue}\textbf{25}
      & \cellcolor{bgblue}\textbf{50}
      & \textbf{75}
      & \textbf{100}
      & \textbf{125}
      & \textbf{150}
      & \textbf{200}
      & \textbf{300}
      & \textbf{400} \\
    \midrule

    0.01
      & \cellcolor{bgblue}\textbf{26.0\%}
      & \cellcolor{bgblue}\textbf{15.8\%}
      & 12.5\%
      & \cellcolor{red!12}\textbf{\textcolor{red}{7.3\%$^{\dagger}$}}
      & \cellcolor{red!12}\textbf{\textcolor{red}{9.7\%$^{\dagger}$}}
      & \cellcolor{red!12}\textbf{\textcolor{red}{8.2\%$^{\dagger}$}}
      & \cellcolor{red!12}\textbf{\textcolor{red}{5.8\%$^{\dagger}$}}
      & \cellcolor{red!12}\textbf{\textcolor{red}{3.2\%$^{\dagger}$}}
      & \cellcolor{red!12}\textbf{\textcolor{red}{3.2\%$^{\dagger}$}} \\

    0.03
      & \cellcolor{bgblue}\textbf{30.3\%}
      & \cellcolor{bgblue}\textbf{20.7\%}
      & 17.0\%
      & 12.3\%
      & 13.3\%
      & 11.3\%
      & 10.0\%
      & \cellcolor{red!12}\textbf{\textcolor{red}{9.0\%$^{\dagger}$}}
      & \cellcolor{red!12}\textbf{\textcolor{red}{8.7\%$^{\dagger}$}} \\

    0.05
      & \cellcolor{bgblue}\textbf{35.7\%}
      & \cellcolor{bgblue}\textbf{23.2\%}
      & 21.0\%
      & 18.0\%
      & 17.8\%
      & 16.5\%
      & 14.3\%
      & 14.2\%
      & 13.0\% \\

    0.08
      & \cellcolor{bgblue}\textbf{43.5\%}
      & \cellcolor{bgblue}\textbf{30.3\%}
      & 28.7\%
      & 25.2\%
      & 22.8\%
      & 23.2\%
      & 21.5\%
      & 20.0\%
      & 20.0\% \\

    0.10
      & \cellcolor{bgblue}\textbf{46.0\%}
      & \cellcolor{bgblue}\textbf{34.2\%}
      & 31.3\%
      & 29.2\%
      & 27.0\%
      & 27.0\%
      & 26.5\%
      & 25.0\%
      & 24.7\% \\

    0.12
      & \cellcolor{bgblue}\textbf{49.7\%}
      & \cellcolor{bgblue}\textbf{37.0\%}
      & 34.3\%
      & 33.5\%
      & 30.5\%
      & 31.0\%
      & 29.3\%
      & 27.5\%
      & 27.2\% \\

    0.15
      & \cellcolor{bgblue}\textbf{53.5\%}
      & \cellcolor{bgblue}\textbf{43.8\%}
      & 40.7\%
      & 40.2\%
      & 37.8\%
      & 37.7\%
      & 36.0\%
      & 35.0\%
      & 35.0\% \\

    \bottomrule
  \end{tabular*}

  \parbox{\textwidth}{
    \footnotesize
    \vspace{1mm}
    \flushleft
    \makebox[1.5em][l]{$1$}\textbf{Legend:} Entries are pass rates (in \%); higher is better.\par
    \makebox[1.5em][l]{$2$}\colorbox{bgblue}{\textbf{Blue cells}} highlight short horizons ($n\in\{25,50\}$) to emphasize short-horizon strength.\par
    \makebox[1.5em][l]{$3$}\colorbox{red!12}{\textbf{\textcolor{red}{Red cells}}} marked with $^{\dagger}$ indicate pass rate $<10\%$.\par
    \makebox[1.5em][l]{$4$}All values are shown with 1 decimal place.\par
  }
\end{table*}

Furthermore, we evaluate our approach in a $100 \times 100$ maze environment and an Atari-based setting with goal depths ranging from $n = 10$ to $n = 100$. These results continue to support our main qualitative claims.

\begin{table}[!htbp]
  \centering
  \small
  \renewcommand{\arraystretch}{1.10}

  \newcommand{\TableW}{\columnwidth}
  \newcommand{\MetricW}{8.6em}
  \newcommand{\NumW}{3.6em}
  \newcommand{\mcell}[1]{\makebox[\MetricW][c]{#1}}
  \newcommand{\ncell}[1]{\makebox[\NumW][c]{#1}}
  \newcommand{\bncell}[1]{\cellcolor{blue!12}\textbf{\makebox[\NumW][c]{#1}}}

  \caption{
  More detailed results of Atari-based environment ($|S|=100,\ |A|=5,\ T=20000$). 
  We report Pass Rate, Mean Error, Tightness (Ours), and Tightness (Richens) across filter thresholds $\Delta_f$ and horizons $n\in\{10,20,50,100\}$.
  }
  \label{tab:tabular_large_detailed}
  \vspace{1mm}

  \begin{tabular*}{\TableW}{@{\extracolsep{\fill}}c|cccc@{}}
    \toprule
    \textbf{\mcell{Pass Rate}}
      & \bncell{10}
      & \bncell{20}
      & \textbf{\ncell{50}}
      & \textbf{\ncell{100}} \\
    \midrule
    \mcell{0.01} & \bncell{31.2\%} & \bncell{22.5\%} & \ncell{14.8\%} & \ncell{9.6\%} \\
    \mcell{0.03} & \bncell{38.6\%} & \bncell{28.1\%} & \ncell{19.7\%} & \ncell{13.4\%} \\
    \mcell{0.05} & \bncell{44.9\%} & \bncell{33.8\%} & \ncell{24.2\%} & \ncell{17.6\%} \\
    \mcell{0.08} & \bncell{52.7\%} & \bncell{40.5\%} & \ncell{30.1\%} & \ncell{22.8\%} \\
    \mcell{0.10} & \bncell{57.3\%} & \bncell{45.2\%} & \ncell{34.5\%} & \ncell{26.1\%} \\
    \bottomrule
  \end{tabular*}

  \vspace{2mm}

  \begin{tabular*}{\TableW}{@{\extracolsep{\fill}}c|cccc@{}}
    \toprule
    \textbf{\mcell{Mean Error}}
      & \bncell{10}
      & \bncell{20}
      & \textbf{\ncell{50}}
      & \textbf{\ncell{100}} \\
    \midrule
    \mcell{0.01} & \bncell{0.028} & \bncell{0.032} & \ncell{0.041} & \ncell{0.053} \\
    \mcell{0.03} & \bncell{0.026} & \bncell{0.030} & \ncell{0.039} & \ncell{0.050} \\
    \mcell{0.05} & \bncell{0.025} & \bncell{0.029} & \ncell{0.037} & \ncell{0.048} \\
    \mcell{0.08} & \bncell{0.024} & \bncell{0.028} & \ncell{0.036} & \ncell{0.046} \\
    \mcell{0.10} & \bncell{0.024} & \bncell{0.027} & \ncell{0.035} & \ncell{0.045} \\
    \bottomrule
  \end{tabular*}

  \vspace{2mm}

  \begin{tabular*}{\TableW}{@{\extracolsep{\fill}}c|cccc@{}}
    \toprule
    \textbf{\mcell{Tightness (Ours)}}
      & \bncell{10}
      & \bncell{20}
      & \textbf{\ncell{50}}
      & \textbf{\ncell{100}} \\
    \midrule
    \mcell{0.01} & \bncell{0.061} & \bncell{0.069} & \ncell{0.083} & \ncell{0.098} \\
    \mcell{0.03} & \bncell{0.060} & \bncell{0.067} & \ncell{0.081} & \ncell{0.095} \\
    \mcell{0.05} & \bncell{0.058} & \bncell{0.066} & \ncell{0.079} & \ncell{0.093} \\
    \mcell{0.08} & \bncell{0.057} & \bncell{0.064} & \ncell{0.077} & \ncell{0.091} \\
    \mcell{0.10} & \bncell{0.056} & \bncell{0.063} & \ncell{0.076} & \ncell{0.089} \\
    \bottomrule
  \end{tabular*}

  \vspace{2mm}

  \begin{tabular*}{\TableW}{@{\extracolsep{\fill}}c|cccc@{}}
    \toprule
    \textbf{\mcell{Tightness (Richens)}}
      & \bncell{10}
      & \bncell{20}
      & \textbf{\ncell{50}}
      & \textbf{\ncell{100}} \\
    \midrule
    \mcell{0.01} & \bncell{0.101} & \bncell{0.113} & \ncell{0.132} & \ncell{0.151} \\
    \mcell{0.03} & \bncell{0.099} & \bncell{0.111} & \ncell{0.129} & \ncell{0.148} \\
    \mcell{0.05} & \bncell{0.097} & \bncell{0.109} & \ncell{0.127} & \ncell{0.145} \\
    \mcell{0.08} & \bncell{0.095} & \bncell{0.107} & \ncell{0.124} & \ncell{0.143} \\
    \mcell{0.10} & \bncell{0.094} & \bncell{0.106} & \ncell{0.123} & \ncell{0.141} \\
    \bottomrule
  \end{tabular*}

  \vspace{1mm}
  \parbox{\TableW}{
    \footnotesize
    \raggedright
    \makebox[1.5em][l]{$1$}\textbf{Legend:} Each panel reports one metric as a function of filter threshold $\Delta_f$ and horizon $n$.\par
    \makebox[1.5em][l]{$2$}\colorbox{blue!12}{\textbf{Blue columns}} highlight shorter horizons ($n=10,20$).\par
    \makebox[1.5em][l]{$3$}Higher is better for Pass Rate; lower is better for Mean Error and both Tightness metrics.\par
    \makebox[1.5em][l]{$4$}Our bound is consistently tighter than the Richens baseline across all settings.\par
  }
\end{table}

Finally, we verify the convergence rate of the certified error in \cref{fig:scaling_fit} to verify our upper bounds. By fitting the empirical data to decay models, we compare our approach upper bounds provided by \citet{Richens2025General}. The results show that the certified error closely follows an $\mathcal{O}(\frac{1}{n}) + \mathcal{O}(\delta)$ scaling law (red curves), matching the theoretical fast rates derived in our main theorems. This is significantly faster than the $\mathcal{O}(\frac{1}{\sqrt{n}})$ rate (blue curves) typically associated with standard uniform convergence, providing strong empirical evidence that our method leverages the local structure of the dynamics to achieve superior sample efficiency.

\begin{figure*}[ht!]
    \centering
    \includegraphics[width=0.9\textwidth]{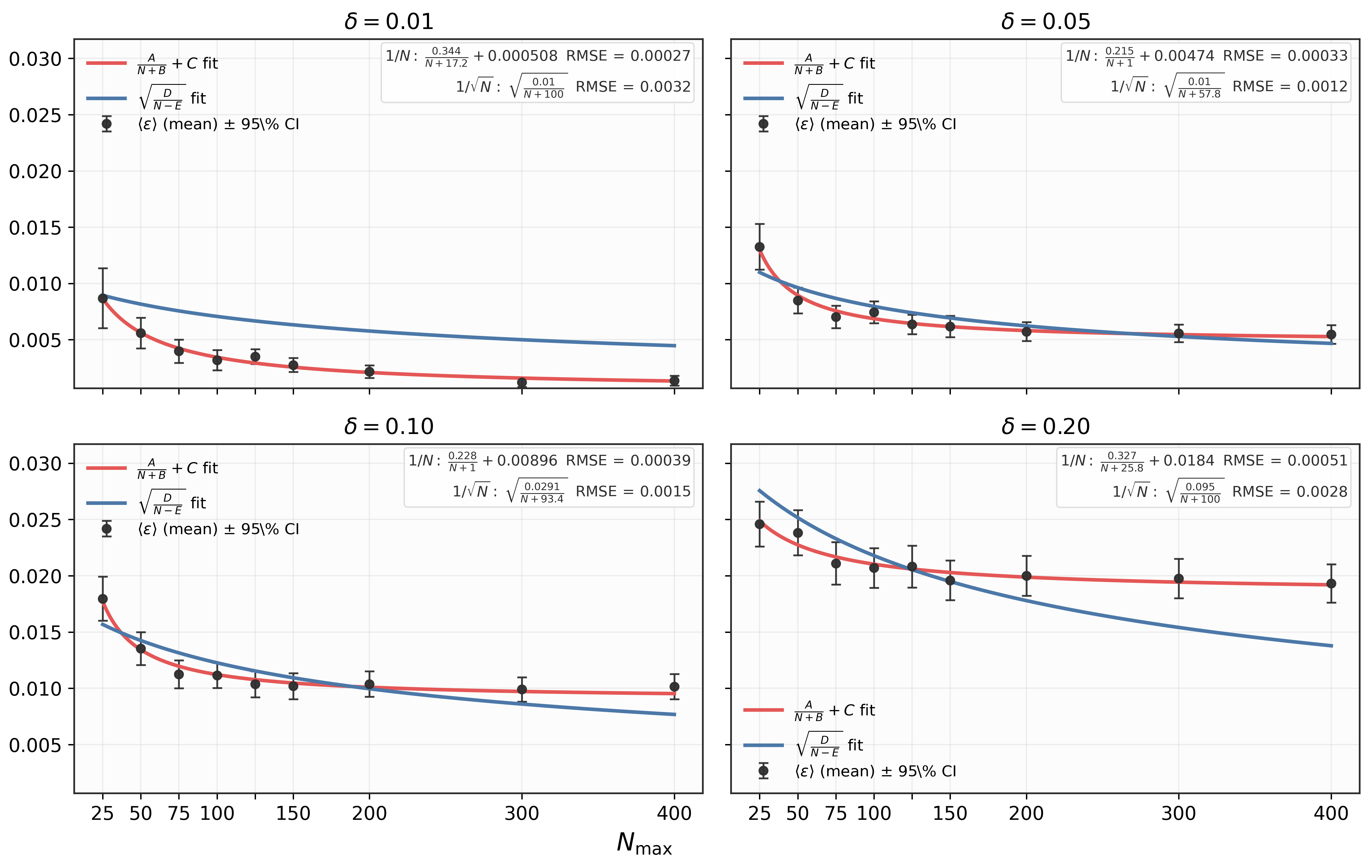} 
    \caption{\textbf{Empirical scaling of certified estimation error}. We plot the mean estimation error $\langle \epsilon \rangle$ (black dots) with 95\% confidence intervals against the goal depth $N_{\max}$ for varying $\delta$. We fit the data to two decay models: an $\mathcal{O}(1/n) + \mathcal{O}(\delta)$ (red curves) and $\mathcal{O}(1/\sqrt{n})$ (blue curves). The fitted equations and Root Mean Square Error (RMSE) are reported in each panel. The $\mathcal{O}(1/n) + \mathcal{O}(\delta)$ model yields a consistently lower RMSE, empirically verifying the fast convergence rates predicted by our theory.}
    \label{fig:scaling_fit}
\end{figure*}

\paragraph{Case Study}

We now provide the details for our case study. We consider a stochastic maze with $|\boldsymbol{S}| = 625$ states arranged on a $25\times 25$ grid and $|\boldsymbol{A}| = 5$. The action set is $\boldsymbol{A}= \{\textsc{Up},\textsc{Down},\textsc{Left},\textsc{Right},\textsc{Stay}\}$, where each action attempts to move to the corresponding adjacent cell (or remain in place for \textsc{Stay}). If the intended transition is blocked by a wall or exits the grid, the agent remains at the current state. Moreover, for any state $s$ and action $a\in\{\textsc{Up},\textsc{Down},\textsc{Left},\textsc{Right}\}$, the transition is local. That is, $P_{ss'}(a)=0$ unless $s'$ is the intended adjacent cell of $s$ under $a$ or $s'=s$. We collect $N_{\text{samples}}=30000$ transitions using a random walk exploration policy to build an empirical model $\widetilde{P}_{ss'}(a)$ for the agent. The agent then performs optimal model-based planning under $\widetilde{P}_{ss'}(a)$ for two compositional tasks (one is to pick up a key to unlock a door, and another is to pick up a pen to write a paper).

The concrete maze instances and the placement of task objects are shown in \cref{fig:maze2_left} and \cref{fig:maze2_right}. Black cells denote walls (or blocked cells). Using the ground-truth transition probabilities $P_{ss'}(a)$, we compute the transition estimate error $|\hat{P}_{ss'}(a)-P_{ss'}(a)|$ where $\hat{P}_{ss'}(a)$ is obtained through \cref{alg:filter_nontrivial} and \cref{alg:filter_trivial}. To visualize both the recovery error and the effect of filtering algorithms, we use a complete recovery variant that outputs an estimate $\hat P_{ss'}(a)$ for every transition, regardless of whether it is certified. We use \cref{alg:filter_nontrivial} to recover all non-trivial transitions ($P_{ss'}(a) \in (0, 1)$) and we use the extended version of \cref{alg:filter_trivial} provided in \citet{Richens2025General} to recover all trivial transitions ($P_{ss'}(a) \in \{0, 1\}$). The error $\epsilon(s)$ for each state $s$ is the total error of $5$ actions:
\[\epsilon(s) \coloneqq \sum_{a\in\boldsymbol{A}} \sum_{s'\in \mathcal{N}(s)\cup\{s\}}
\bigl|\hat P_{ss'}(a)-P_{ss'}(a)\bigr|
\] 
where $\mathcal{N}(s)$ denotes the four-neighborhood of $s$ on the grid. For each cell representing a state $s$, blue indicates low total error between predictive world models and true dynamics and red indicates high total error.

As visualized in \cref{fig:maze2_left}, when the task-relevant goals (the key and door) lie along trajectories whose constituent transitions are largely certified by our algorithms, the agent reliably reaches them with stable, directed behavior. In contrast, as shown in \cref{fig:maze2_right}, when the goals (the pen and paper) require traversing regions that are largely absent from the filtered transition set, the agent’s behavior becomes brittle. It repeatedly attempts blocked moves (colliding with walls), and fails to make consistent progress toward the objective.

\begin{figure*}[ht!]
    \centering
    \begin{subfigure}[t]{0.4\textwidth}
        \centering
        \includegraphics[width = \textwidth]{figures/maze.png}
        \caption{}
        \label{fig:maze2_left}
    \end{subfigure}
    \hfill
    \begin{subfigure}[t]{0.4\textwidth}
        \centering
        \includegraphics[width = \textwidth]{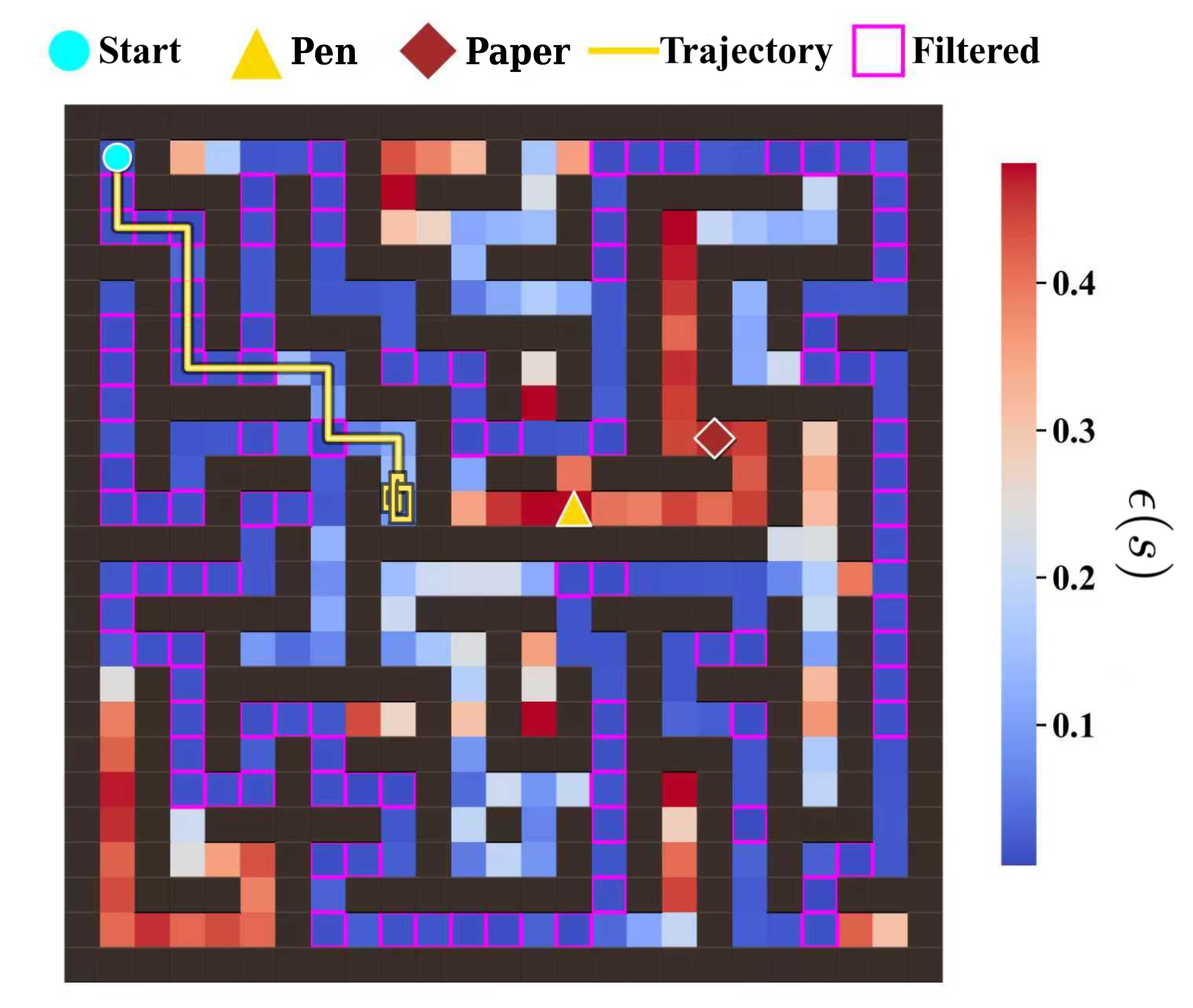}
        \caption{}
        \label{fig:maze2_right}
    \end{subfigure}

    \caption{\textbf{Certified regions depend on the task: key-door vs. pen-paper.}
    \emph{(a)} Key-door task. \emph{(b)} Pen-paper task. In both panels, cell color encodes a per-state aggregated dynamics recovery error $\epsilon(s)$, obtained by summing $|\hat P_{ss'}(a)-P_{ss'}(a)|$ over the five actions and the local next-state neighborhood. Magenta outlines mark certified transitions, and the yellow curve shows a representative trajectory from the start (circle) to the task objects (key/star and door/square in \emph{a}; pen/triangle and paper/diamond in \emph{b}).}
    \label{fig:twomazeinapp}
\end{figure*}

\bibliographystyle{icml2026}
\bibliography{example_paper}

@inproceedings{
richens2025general,
title={General agents need world models},
author={Jonathan Richens and Tom Everitt and David Abel},
booktitle={Forty-second International Conference on Machine Learning},
year={2025},
url={https://openreview.net/forum?id=dlIoumNiXt}
}

@book{puterman2014mdp,
  title     = {{Markov} Decision Processes: Discrete Stochastic Dynamic Programming},
  author    = {Puterman, Martin L.},
  year      = {2014},
  publisher = {John Wiley \& Sons}
}

@book{sutton2018rl,
  title     = {{Reinforcement} Learning: An Introduction},
  author    = {Sutton, Richard S. and Barto, Andrew G.},
  year      = {2018},
  edition   = {2},
  publisher = {The MIT Press}
}

@book{meyn2009markov,
  title     = {{Markov} Chains and Stochastic Stability},
  author    = {Meyn, Sean P. and Tweedie, Richard L.},
  year      = {2009},
  edition   = {2},
  publisher = {Cambridge University Press}
}

@inproceedings{pnueli77,
  author    = {Pnueli, Amir},
  title     = {The Temporal Logic of Programs},
  booktitle = {Proceedings of the 18th Annual Symposium on Foundations of Computer Science ({FOCS})},
  pages     = {46--57},
  publisher = {IEEE Computer Society},
  year      = {1977},
  doi       = {10.1109/SFCS.1977.32}
}

@book{baierkatoen08,
  author    = {Baier, Christel and Katoen, Joost{-}Pieter},
  title     = {Principles of Model Checking},
  publisher = {MIT Press},
  year      = {2008}
}

@article{kressgazit09,
  author  = {Kress{-}Gazit, Hadas and Fainekos, Georgios E. and Pappas, George J.},
  title   = {Temporal-Logic-Based Reactive Mission and Motion Planning},
  journal = {{IEEE} Transactions on Robotics},
  volume  = {25},
  number  = {6},
  pages   = {1370--1381},
  year    = {2009}
}

@inproceedings{schaul15,
  author    = {Schaul, Tom and Horgan, Daniel and Gregor, Karol and Silver, David},
  title     = {Universal Value Function Approximators},
  booktitle = {Proceedings of the 32nd International Conference on Machine Learning ({ICML})},
  series    = {Proceedings of Machine Learning Research},
  volume    = {37},
  pages     = {1312--1320},
  year      = {2015},
  editor    = {Bach, Francis and Blei, David},
  publisher = {PMLR}
}

@inproceedings{vardi_wolper86,
  author    = {Vardi, Moshe Y. and Wolper, Pierre},
  title     = {An Automata-Theoretic Approach to Automatic Program Verification},
  booktitle = {Proceedings of the First Annual {IEEE} Symposium on Logic in Computer Science ({LICS})},
  pages     = {332--344},
  year      = {1986},
  publisher = {IEEE Computer Society}
}

@article{nilim05robustmdp,
  author  = {Nilim, Arnab and El Ghaoui, Laurent},
  title   = {Robust Control of {Markov} Decision Processes with Uncertain Transition Matrices},
  journal = {Operations Research},
  year    = {2005},
  volume  = {53},
  number  = {5},
  pages   = {780--798}
}

@inproceedings{xu10drmdp,
 author = {Xu, Huan and Mannor, Shie},
 booktitle = {Advances in Neural Information Processing Systems},
 editor = {J. Lafferty and C. Williams and J. Shawe-Taylor and R. Zemel and A. Culotta},
 pages = {},
 publisher = {Curran Associates, Inc.},
 title = {Distributionally Robust Markov Decision Processes},
 url = {https://proceedings.neurips.cc/paper_files/paper/2010/file/19f3cd308f1455b3fa09a282e0d496f4-Paper.pdf},
 volume = {23},
 year = {2010}
}

@inproceedings{alshiekh18shielding,
  author = {Mohammed Alshiekh and
                  Roderick Bloem and
                  R{\"{u}}diger Ehlers and
                  Bettina K{\"{o}}nighofer and
                  Scott Niekum and
                  Ufuk Topcu},
  editor       = {Sheila A. McIlraith and
                  Kilian Q. Weinberger},
  title        = {Safe Reinforcement Learning via Shielding},
  booktitle    = {Proceedings of the Thirty-Second {AAAI} Conference on Artificial Intelligence},
  pages        = {2669--2678},
  publisher    = {{AAAI} Press},
  year         = {2018},
  url          = {https://doi.org/10.1609/aaai.v32i1.11797},
}

@inproceedings{hasanbeig19cdc,
  author    = {Hasanbeig, Mohammadhosein and Kantaros, Yiannis and Abate, Alessandro and Kroening, Daniel and Pappas, George J. and Lee, Insup},
  title     = {Reinforcement Learning for Temporal Logic Control Synthesis with Probabilistic Satisfaction Guarantees},
  booktitle = {2019 {IEEE} 58th Conference on Decision and Control ({CDC})},
  pages     = {5338--5343},
  year      = {2019},
  publisher = {IEEE}
}

@inproceedings{voloshin22neurips,
title={Policy Optimization with Linear Temporal Logic Constraints},
author={Cameron Voloshin and Hoang Minh Le and Swarat Chaudhuri and Yisong Yue},
booktitle={Advances in Neural Information Processing Systems},
editor={Alice H. Oh and Alekh Agarwal and Danielle Belgrave and Kyunghyun Cho},
year={2022},
url={https://openreview.net/forum?id=yZcPRIZEwOG}
}

@inproceedings{shao23ijcai,
  author    = {Shao, Daqian and Kwiatkowska, Marta},
  title     = {Sample Efficient Model-free Reinforcement Learning from {LTL} Specifications with Optimality Guarantees},
  booktitle = {Proceedings of the Thirty-Second International Joint Conference on Artificial Intelligence ({IJCAI}-23)},
  pages     = {4180--4189},
  year      = {2023}
}

@inproceedings{gross24icaart,
  author    = {Gross, Dennis and Spieker, Helge},
  title     = {Probabilistic Model Checking of Stochastic Reinforcement Learning Policies},
  booktitle = {Proceedings of the 16th International Conference on Agents and Artificial Intelligence ({ICAART} 2024) - Volume 3},
  year      = {2024},
  volume    = {3},
  pages     = {438--445},
  publisher = {SciTePress}
}

@article{iyengar05robustdp,
  author  = {Iyengar, Garud N.},
  title   = {Robust Dynamic Programming},
  journal = {Mathematics of Operations Research},
  year    = {2005},
  volume  = {30},
  number  = {2},
  pages   = {257--280}
}

@article{liu22gcrl,
  author        = {Liu, M. and Zhu, M. and Zhang, W.},
  title         = {Goal-conditioned reinforcement learning: Problems and solutions},
  journal       = {arXiv preprint arXiv:2201.08299},
  year          = {2022},
  eprint        = {2201.08299},
  archivePrefix = {arXiv},
  primaryClass  = {cs.LG}
}

@article{zhu2023offlinegcrl,
  title   = {Provably Efficient Offline Goal-Conditioned Reinforcement Learning with General Function Approximation and Single-Policy Concentrability},
  author  = {Zhu, Hanlin and Zhang, Amy},
  journal = {arXiv preprint arXiv:2302.03770},
  year    = {2023},
}

@article{reichlin2024suboptimalgcrl,
  title   = {Learning Goal-Conditioned Policies from Sub-Optimal Datasets},
  author  = {Reichlin, Alfredo and Vasco, Mateo and Yin, Hongjie and Kragic, Danica},
  journal = {arXiv preprint arXiv:2402.10820},
  year    = {2024},
}

@misc{li2025word,
      title={From Word to World: Can Large Language Models be Implicit Text-based World Models?}, 
      author={Yixia Li and Hongru Wang and Jiahao Qiu and Zhenfei Yin and Dongdong Zhang and Cheng Qian and Zeping Li and Pony Ma and Guanhua Chen and Heng Ji and Mengdi Wang},
      year={2025},
      eprint={2512.18832},
      archivePrefix={arXiv},
      primaryClass={cs.CL},
      url={https://arxiv.org/abs/2512.18832}, 
}

@misc{xi2024agentgymevolvinglargelanguage,
      title={AgentGym: Evolving Large Language Model-based Agents across Diverse Environments}, 
      author={Zhiheng Xi and Yiwen Ding and Wenxiang Chen and Boyang Hong and Honglin Guo and Junzhe Wang and Dingwen Yang and Chenyang Liao and Xin Guo and Wei He and Songyang Gao and Lu Chen and Rui Zheng and Yicheng Zou and Tao Gui and Qi Zhang and Xipeng Qiu and Xuanjing Huang and Zuxuan Wu and Yu-Gang Jiang},
      year={2024},
      eprint={2406.04151},
      archivePrefix={arXiv},
      primaryClass={cs.AI},
      url={https://arxiv.org/abs/2406.04151}, 
}

@inproceedings{richens2024robust,
  title     = {Robust agents learn causal world models},
  author    = {Richens, Jonathan and Everitt, Tom},
  booktitle = {The Twelfth International Conference on Learning Representations ({ICLR})},
  year      = {2024},
  url       = {https://openreview.net/forum?id=POO0WvWb5j}
}

@inproceedings{sutton1990integrated,
author = {Sutton, Richard S.},
title = {Integrated architecture for learning, planning, and reacting based on approximating dynamic programming},
year = {1990},
isbn = {1558601414},
publisher = {Morgan Kaufmann Publishers Inc.},
address = {San Francisco, CA, USA},
booktitle = {Proceedings of the Seventh International Conference (1990) on Machine Learning},
pages = {216–224},
numpages = {9},
location = {Austin, Texas, USA}
}

@misc{janner2019when,
      title={When to Trust Your Model: Model-Based Policy Optimization}, 
      author={Michael Janner and Justin Fu and Marvin Zhang and Sergey Levine},
      year={2021},
      eprint={1906.08253},
      archivePrefix={arXiv},
      primaryClass={cs.LG},
      url={https://arxiv.org/abs/1906.08253}, 
}

@inproceedings{lambert2020objective,
  title={Objective Mismatch in Model-based Reinforcement Learning},
  author={Lambert, Nathan O and Posa, Michael andvq Kahn, Gregory and Calandra, Roberto and Levine, Sergey},
  booktitle={PMLR},
  year={2020}
}

@inproceedings{ha2018worldmodels,
 author = {Ha, David and Schmidhuber, J\"{u}rgen},
 booktitle = {Advances in Neural Information Processing Systems},
 editor = {S. Bengio and H. Wallach and H. Larochelle and K. Grauman and N. Cesa-Bianchi and R. Garnett},
 pages = {},
 publisher = {Curran Associates, Inc.},
 title = {Recurrent World Models Facilitate Policy Evolution},
 url = {https://proceedings.neurips.cc/paper_files/paper/2018/file/2de5d16682c3c35007e4e92982f1a2ba-Paper.pdf},
 volume = {31},
 year = {2018}
}

@inproceedings{hafner2020dreamer,
title={Dream to Control: Learning Behaviors by Latent Imagination},
author={Danijar Hafner and Timothy Lillicrap and Jimmy Ba and Mohammad Norouzi},
booktitle={International Conference on Learning Representations},
year={2020},
url={https://openreview.net/forum?id=S1lOTC4tDS}
}

@misc{levine2020offline,
      title={Offline Reinforcement Learning: Tutorial, Review, and Perspectives on Open Problems}, 
      author={Sergey Levine and Aviral Kumar and George Tucker and Justin Fu},
      year={2020},
      eprint={2005.01643},
      archivePrefix={arXiv},
      primaryClass={cs.LG},
      url={https://arxiv.org/abs/2005.01643}, 
}

@article{ljung1999sysid,
  author={Simpkins, Alex},
  journal={IEEE Robotics \& Automation Magazine}, 
  title={System Identification: Theory for the User, 2nd Edition (Ljung, L.; 1999) [On the Shelf]}, 
  year={2012},
  volume={19},
  number={2},
  pages={95-96},
  doi={10.1109/MRA.2012.2192817}}

@book{savage1972foundations,
  title     = {The Foundations of Statistics},
  author    = {Savage, Leonard J.},
  year      = {1972},
  edition   = {2},
  publisher = {Dover Publications}
}

@inproceedings{ng2000irl,
  title     = {Algorithms for Inverse Reinforcement Learning},
  author    = {Ng, Andrew Y. and Russell, Stuart},
  booktitle = {Proceedings of the Seventeenth International Conference on Machine Learning (ICML)},
  pages     = {663--670},
  year      = {2000}
}

@article{wolpert97nfl,
  author  = {Wolpert, David H. and Macready, William G.},
  title   = {No {F}ree {L}unch Theorems for Optimization},
  journal = {IEEE Transactions on Evolutionary Computation},
  year    = {1997},
  volume  = {1},
  number  = {1},
  pages   = {67--82},
  doi     = {10.1109/4235.585893},
}

@misc{halpern_piermont24,
      title={Subjective Causality}, 
      author={Joseph Y. Halpern and Evan Piermont},
      year={2024},
      eprint={2401.10937},
      archivePrefix={arXiv},
      primaryClass={econ.TH},
      url={https://arxiv.org/abs/2401.10937}, 
}

@inproceedings{li22emergent,
title={Emergent World Representations: Exploring a Sequence Model Trained on a Synthetic Task},
author={Kenneth Li and Aspen K Hopkins and David Bau and Fernanda Vi{\'e}gas and Hanspeter Pfister and Martin Wattenberg},
booktitle={The Eleventh International Conference on Learning Representations },
year={2023},
url={https://openreview.net/forum?id=DeG07_TcZvT}
}

@inproceedings{bush2025interpreting,
  title     = {Interpreting Emergent Planning in Model-Free Reinforcement Learning},
  author    = {Bush, Thomas and Chung, Stella and Anwar, Usman and Garriga-Alonso, Adri{\`a} and Krueger, David},
  booktitle = {International Conference on Learning Representations (ICLR)},
  year      = {2025},
  url={https://openreview.net/forum?id=DzGe40glxs}
}

@inproceedings{
hao2023reasoning,
title={Reasoning with Language Model is Planning with World Model},
author={Shibo Hao and Yi Gu and Haodi Ma and Joshua Jiahua Hong and Zhen Wang and Daisy Zhe Wang and Zhiting Hu},
booktitle={The 2023 Conference on Empirical Methods in Natural Language Processing},
year={2023},
url={https://openreview.net/forum?id=VTWWvYtF1R}
}

@inproceedings{
bellot25limits,
title={The Limits of Predicting Agents from Behaviour},
author={Alexis Bellot and Jonathan Richens and Tom Everitt},
booktitle={Forty-second International Conference on Machine Learning},
year={2025},
url={https://openreview.net/forum?id=a3swNuXTxI}
}

@article{sutton99options,
  title        = {Between {MDPs} and Semi-{MDPs}: A Framework for Temporal Abstraction in Reinforcement Learning},
  author       = {Sutton, Richard S. and Precup, Doina and Singh, Satinder},
  journal      = {Artificial Intelligence},
  volume       = {112},
  number       = {1-2},
  pages        = {181--211},
  year         = {1999},
}

@inproceedings{mcgovern01subgoals,
author = {McGovern, Amy and Barto, Andrew G.},
title = {Automatic Discovery of Subgoals in Reinforcement Learning using Diverse Density},
year = {2001},
publisher = {Morgan Kaufmann Publishers Inc.},
address = {San Francisco, CA, USA},
booktitle = {Proceedings of the Eighteenth International Conference on Machine Learning},
pages = {361–368}
}

@inproceedings{simsek08betweenness,
  title        = {Skill Characterization Based on Betweenness},
  author       = {{\c{S}}im{\c{s}}ek, {\"O}zg{\"u}r and Barto, Andrew G.},
  booktitle    = {Advances in Neural Information Processing Systems},
  year         = {2008},
}

@article{lecun2022path,
  title        = {A Path Towards Autonomous Machine Intelligence},
  author       = {LeCun, Yann},
  journal      = {OpenReview},
  year         = {2022},
  note         = {Version 0.9.2, 2022-06-27},
  url          = {https://openreview.net/forum?id=BZ5a1r-kVsf}
}

@article{DBLP:journals/iandc/DemriS02,
  author    = {St{\'{e}}phane Demri and Philippe Schnoebelen},
  title     = {The Complexity of Propositional Linear Temporal Logics in Simple Cases},
  journal   = {Inf. Comput.},
  volume    = {174},
  number    = {1},
  pages     = {84--103},
  year      = {2002},
  url       = {https://doi.org/10.1006/inco.2001.3094},
  doi       = {10.1006/INCO.2001.3094}
}

@inproceedings{javed2024bigworld,
  title     = {The Big World Hypothesis and its Ramifications for Artificial Intelligence},
  author    = {Javed, Khurram and Sutton, Richard S.},
  booktitle = {Finding the Frame Workshop at the Reinforcement Learning Conference (RLC 2024)},
  year      = {2024},
  month     = jun,
  note      = {Poster},
  url       = {https://openreview.net/forum?id=Sv7DazuCn8}
}

@inproceedings{lin2024bilinear,
title={{BECAUSE}: Bilinear Causal Representation for Generalizable Offline Model-based Reinforcement Learning},
author={Haohong Lin and Wenhao Ding and Jian Chen and Laixi Shi and Jiacheng Zhu and Bo Li and Ding Zhao},
booktitle={The Thirty-eighth Annual Conference on Neural Information Processing Systems},
year={2024},
url={https://openreview.net/forum?id=4i9xuPEu9w}
}

@inproceedings{park2024offlinebottleneck,
title={Is Value Learning Really the Main Bottleneck in Offline {RL}?},
author={Seohong Park and Kevin Frans and Sergey Levine and Aviral Kumar},
booktitle={The Thirty-eighth Annual Conference on Neural Information Processing Systems},
year={2024},
url={https://openreview.net/forum?id=nyp59a31Ju}
}

@inproceedings{
he2025whowhen,
title={Which Agent Causes Task Failures and When? On Automated Failure Attribution of {LLM} Multi-Agent Systems},
author={Shaokun Zhang and Ming Yin and Jieyu Zhang and Jiale Liu and Zhiguang Han and Jingyang Zhang and Beibin Li and Chi Wang and Huazheng Wang and Yiran Chen and Qingyun Wu},
booktitle={Forty-second International Conference on Machine Learning},
year={2025},
url={https://openreview.net/forum?id=GazlTYxZss}
}

@inproceedings{zhou2024webarena,
  title     = {WebArena: A Realistic Web Environment for Building Autonomous Agents},
  author    = {Zhou, Shuyan and Xu, Frank F. and Zhu, Hao and Zhou, Xuhui and Lo, Robert and
              Sridhar, Abishek and Cheng, Xianyi and Ou, Tianyue and Bisk, Yonatan and
              Fried, Daniel and Alon, Uri and Neubig, Graham},
  booktitle = {International Conference on Learning Representations (ICLR)},
  year      = {2024},
  url       = {https://openreview.net/forum?id=oKn9c6ytLx}
}

@inproceedings{koh2024visualwebarena,
  title     = {VisualWebArena: Evaluating Multimodal Agents on Realistic Visual Web Tasks},
  author    = {Koh, Jing Yu and Lo, Robert and Jang, Lawrence and Duvvur, Vikram and Lim, Ming Chong and
              Huang, Po-Yu and Neubig, Graham and Zhou, Shuyan and Salakhutdinov, Ruslan and Fried, Daniel},
  booktitle = {Annual Meeting of the Association for Computational Linguistics (ACL)},
  year      = {2024},
  url       = {https://aclanthology.org/2024.acl-long.50/}
}

@inproceedings{xie2024osworld,
  title     = {OSWorld: Benchmarking Multimodal Agents for Open-Ended Tasks in Real Computer Environments},
  author    = {Xie, Tianbao and Zhang, Danyang and Chen, Jixuan and Li, Xiaochuan and Zhao, Siheng and
              Cao, Ruisheng and Hua, Toh Jing and Cheng, Zhoujun and Shin, Dongchan and Lei, Fangyu and
              Liu, Yitao and Xu, Yiheng and Zhou, Shuyan and Savarese, Silvio and Xiong, Caiming and
              Zhong, Victor and Yu, Tao},
  booktitle = {Advances in Neural Information Processing Systems (NeurIPS), Datasets and Benchmarks Track},
  year      = {2024},
  url       = {https://proceedings.neurips.cc/paper_files/paper/2024/hash/5d413e48f84dc61244b6be550f1cd8f5-Abstract-Datasets_and_Benchmarks_Track.html}
}

@inproceedings{wang2024pessimism,
title={A Unified Principle of Pessimism for Offline Reinforcement Learning under Model Mismatch},
author={Yue Wang and Zhongchang Sun and Shaofeng Zou},
booktitle={The Thirty-eighth Annual Conference on Neural Information Processing Systems},
year={2024},
url={https://openreview.net/forum?id=cBY66CKEbq}
}

@inproceedings{frauenknecht2024trustmodeltrusts,
title={Trust the Model Where It Trusts Itself - Model-Based Actor-Critic with Uncertainty-Aware Rollout Adaption},
author={Bernd Frauenknecht and Artur Eisele and Devdutt Subhasish and Friedrich Solowjow and Sebastian Trimpe},
booktitle={Forty-first International Conference on Machine Learning},
year={2024},
url={https://openreview.net/forum?id=N0ntTjTfHb}
}

@misc{elelimy2025rethinkingfoundationscontinualreinforcement,
      title={Rethinking the Foundations for Continual Reinforcement Learning}, 
      author={Esraa Elelimy and David Szepesvari and Martha White and Michael Bowling},
      year={2025},
      eprint={2504.08161},
      archivePrefix={arXiv},
      primaryClass={cs.LG},
      url={https://arxiv.org/abs/2504.08161}, 
}

@inproceedings{
gadot2024worstkernel,
title={Bring Your Own (Non-Robust) Algorithm to Solve Robust {MDP}s by Estimating The Worst Kernel},
author={Uri Gadot and Kaixin Wang and Navdeep Kumar and Kfir Yehuda Levy and Shie Mannor},
booktitle={Forty-first International Conference on Machine Learning},
year={2024},
url={https://openreview.net/forum?id=UqoG0YRfQx}
}

@inproceedings{wang2025wm3c,
  title     = {Modeling Unseen Environments with Language-guided Composable Causal Components in Reinforcement Learning},
  author    = {Wang, Xinyue and Huang, Biwei},
  booktitle = {International Conference on Learning Representations (ICLR)},
  year      = {2025},
  url       = {https://openreview.net/forum?id=XMgpnZ2ET7}
}

@article{hart1968,
  title={A Formal Basis for the Heuristic Determination of Minimum Cost Paths},
  author={Hart, Peter E. and Nilsson, Nils J. and Raphael, Bertram},
  journal={IEEE Transactions on Systems Science and Cybernetics},
  volume={4},
  number={2},
  pages={100--107},
  year={1968}
}

@inproceedings{amortila2024reinforcement,
title={Reinforcement Learning Under Latent Dynamics: Toward Statistical and Algorithmic Modularity},
author={Philip Amortila and Dylan J Foster and Nan Jiang and Akshay Krishnamurthy and Zakaria Mhammedi},
booktitle={The Thirty-eighth Annual Conference on Neural Information Processing Systems},
year={2024},
url={https://openreview.net/forum?id=qf2uZAdy1N}
}

@article{pinon2025a,
title={A limitation on black-box dynamics approaches to Reinforcement Learning},
author={Brieuc Pinon and Raphael Jungers and Jean-Charles Delvenne},
journal={Transactions on Machine Learning Research},
issn={2835-8856},
year={2025},
url={https://openreview.net/forum?id=wPHVijYksq},
note={}
}

@InProceedings{pmlr-v267-liu25p,
  title = 	 {Continual Reinforcement Learning by Planning with Online World Models},
  author =       {Liu, Zichen and Fu, Guoji and Du, Chao and Lee, Wee Sun and Lin, Min},
  booktitle = 	 {Proceedings of the 42nd International Conference on Machine Learning},
  pages = 	 {38397--38423},
  year = 	 {2025},
  editor = 	 {Singh, Aarti and Fazel, Maryam and Hsu, Daniel and Lacoste-Julien, Simon and Berkenkamp, Felix and Maharaj, Tegan and Wagstaff, Kiri and Zhu, Jerry},
  volume = 	 {267},
  series = 	 {Proceedings of Machine Learning Research},
  month = 	 {13--19 Jul},
  publisher =    {PMLR},
  url = 	 {https://proceedings.mlr.press/v267/liu25p.html},
}

@misc{saanum2024simplifyinglatentdynamicssoftly,
      title={Simplifying Latent Dynamics with Softly State-Invariant World Models}, 
      author={Tankred Saanum and Peter Dayan and Eric Schulz},
      year={2024},
      eprint={2401.17835},
      archivePrefix={arXiv},
      primaryClass={cs.LG},
      url={https://arxiv.org/abs/2401.17835}, 
}

@misc{skalse2024partialidentifiabilityinversereinforcement,
      title={Partial Identifiability in Inverse Reinforcement Learning For Agents With Non-Exponential Discounting}, 
      author={Joar Skalse and Alessandro Abate},
      year={2024},
      eprint={2412.11155},
      archivePrefix={arXiv},
      primaryClass={cs.LG},
      url={https://arxiv.org/abs/2412.11155}, 
}

@misc{wang2024probabilisticsubgoalrepresentationshierarchical,
      title={Probabilistic Subgoal Representations for Hierarchical Reinforcement learning}, 
      author={Vivienne Huiling Wang and Tinghuai Wang and Wenyan Yang and Joni-Kristian Kämäräinen and Joni Pajarinen},
      year={2024},
      eprint={2406.16707},
      archivePrefix={arXiv},
      primaryClass={cs.LG},
      url={https://arxiv.org/abs/2406.16707}, 
}

@misc{banerjee2024dynamicmodelpredictiveshielding,
      title={Dynamic Model Predictive Shielding for Provably Safe Reinforcement Learning}, 
      author={Arko Banerjee and Kia Rahmani and Joydeep Biswas and Isil Dillig},
      year={2024},
      eprint={2405.13863},
      archivePrefix={arXiv},
      primaryClass={cs.AI},
      url={https://arxiv.org/abs/2405.13863}, 
}

@inproceedings{wu2024verifiedsafereinforcementlearning,
title={Verified Safe Reinforcement Learning  for Neural Network Dynamic Models},
author={Junlin Wu and Huan Zhang and Yevgeniy Vorobeychik},
booktitle={The Thirty-eighth Annual Conference on Neural Information Processing Systems},
year={2024},
url={https://openreview.net/forum?id=tGDUDKirAy}
}

@INPROCEEDINGS{10571622,
  author={Miller, Kristina and Zeitler, Christopher K. and Shen, William and Hobbs, Kerianne and Schierman, John and Viswanathan, Mahesh and Mitra, Sayan},
  booktitle={2024 ACM/IEEE 15th International Conference on Cyber-Physical Systems (ICCPS)}, 
  title={Optimal Runtime Assurance via Reinforcement Learning}, 
  year={2024},
  volume={},
  number={},
  pages={67-76},
  doi={10.1109/ICCPS61052.2024.00013}}

@inproceedings{hansen2024tdmpc2,
  author       = {Nicklas Hansen and
                  Hao Su and
                  Xiaolong Wang},
  title        = {{TD-MPC2:} Scalable, Robust World Models for Continuous Control},
  booktitle    = {The Twelfth International Conference on Learning Representations,
                  {ICLR} 2024, Vienna, Austria, May 7-11, 2024},
  publisher    = {OpenReview.net},
  year         = {2024},
  url          = {https://openreview.net/forum?id=Oxh5CstDJU}
}

@article{hafner2025worldmodels,
  author       = {Danijar Hafner and
                  Jurgis Pasukonis and
                  Jimmy Ba and
                  Timothy P. Lillicrap},
  title        = {Mastering diverse control tasks through world models},
  journal      = {Nature},
  volume       = {640},
  number       = {8059},
  pages        = {647--653},
  year         = {2025},
  url          = {https://doi.org/10.1038/s41586-025-08744-2},
  doi          = {10.1038/S41586-025-08744-2}
}

@inproceedings{samsami2024mastering,
  title     = {Mastering Memory Tasks with World Models},
  author    = {Samsami, Mohammad Reza and Zholus, Artem and Rajendran, Janarthanan and Chandar, Sarath},
  booktitle = {International Conference on Learning Representations},
  year      = {2024},
  url       = {https://openreview.net/forum?id=1vDArHJ68h}
}

@inproceedings{jackermeier2025deepltl,
  title     = {DeepLTL: Learning to Efficiently Satisfy Complex LTL Specifications for Multi-Task RL},
  author    = {Mathias Jackermeier and Alessandro Abate},
  booktitle = {International Conference on Learning Representations (ICLR)},
  year      = {2025},
  url       = {https://proceedings.iclr.cc/paper_files/paper/2025/hash/24c523085d10743633f9964e0623dbe0-Abstract-Conference.html},
  eprint    = {2410.04631},
  archivePrefix = {arXiv}
}

@misc{luo2025llm_agent_survey,
      title={Large Language Model Agent: A Survey on Methodology, Applications and Challenges}, 
      author={Junyu Luo and Weizhi Zhang and Ye Yuan and Yusheng Zhao and Junwei Yang and Yiyang Gu and Bohan Wu and Binqi Chen and Ziyue Qiao and Qingqing Long and Rongcheng Tu and Xiao Luo and Wei Ju and Zhiping Xiao and Yifan Wang and Meng Xiao and Chenwu Liu and Jingyang Yuan and Shichang Zhang and Yiqiao Jin and Fan Zhang and Xian Wu and Hanqing Zhao and Dacheng Tao and Philip S. Yu and Ming Zhang},
      year={2025},
      eprint={2503.21460},
      archivePrefix={arXiv},
      primaryClass={cs.CL},
      url={https://arxiv.org/abs/2503.21460}, 
}

@misc{he2024webvoyager,
  title         = {WebVoyager: Building an End-to-End Web Agent with Large Multimodal Models},
  author        = {He, Hongliang and Yao, Wenlin and Ma, Kaixin and Yu, Wenhao and Dai, Yong and Zhang, Hongming and Lan, Zhenzhong and Yu, Dong},
  year          = {2024},
  eprint        = {2401.13919},
  archivePrefix = {arXiv},
  primaryClass  = {cs.CL},
  url={https://arxiv.org/abs/2401.13919}
}

@inproceedings{wang2024codeact,
title={Executable Code Actions Elicit Better {LLM} Agents},
author={Xingyao Wang and Yangyi Chen and Lifan Yuan and Yizhe Zhang and Yunzhu Li and Hao Peng and Heng Ji},
booktitle={Forty-first International Conference on Machine Learning},
year={2024},
url={https://openreview.net/forum?id=jJ9BoXAfFa}
}

@misc{abhyankar2025efficiency_computer_use,
      title={OSWorld-Human: Benchmarking the Efficiency of Computer-Use Agents}, 
      author={Reyna Abhyankar and Qi Qi and Yiying Zhang},
      year={2025},
      eprint={2506.16042},
      archivePrefix={arXiv},
      primaryClass={cs.AI},
      url={https://arxiv.org/abs/2506.16042}, 
}

@misc{ma2025critical_step,
      title={Automatic Failure Attribution and Critical Step Prediction Method for Multi-Agent Systems Based on Causal Inference}, 
      author={Guoqing Ma and Jia Zhu and Hanghui Guo and Weijie Shi and Jiawei Shen and Jingjiang Liu and Yidan Liang},
      year={2025},
      eprint={2509.08682},
      archivePrefix={arXiv},
      primaryClass={cs.AI},
      url={https://arxiv.org/abs/2509.08682}, 
}

@misc{english2025graft,
title={Grammar-Forced Translation of Natural Language to Temporal Logic using {LLM}s},
author={William H English and Dominic Simon and Sumit Kumar Jha and Rickard Ewetz},
booktitle={Forty-second International Conference on Machine Learning},
year={2025},
url={https://openreview.net/forum?id=p411a7WHox}
}

@inproceedings{chen2025atlas,
  title     = {ATLaS: Agent Tuning via Learning Critical Steps},
  author    = {Chen, Zhixun and Li, Ming and Huang, Yuxuan and Du, Yali and Fang, Meng and Zhou, Tianyi},
  booktitle = {Findings of the Association for Computational Linguistics: ACL 2025},
  pages     = {25334--25349},
  year      = {2025},
  publisher = {Association for Computational Linguistics},
  url       = {https://aclanthology.org/2025.findings-acl.1299/}
}

@inproceedings{jin2024linearcrl,
title={Learning Linear Causal Representations from General Environments: Identifiability and Intrinsic Ambiguity},
author={Jikai Jin and Vasilis Syrgkanis},
booktitle={The Thirty-eighth Annual Conference on Neural Information Processing Systems},
year={2024},
url={https://openreview.net/forum?id=dB99jjwx3h}
}

@article{
zhang2024identifiabledyn,
author = {Congxi Zhang  and Yongchun Xie },
title = {Identifiable Representation and Model Learning for Latent Dynamic Systems},
journal = {Space: Science \&amp; Technology},
volume = {5},
number = {},
pages = {0267},
year = {2025},
doi = {10.34133/space.0267},
URL = {https://spj.science.org/doi/abs/10.34133/space.0267},
eprint = {https://spj.science.org/doi/pdf/10.34133/space.0267},
}

@article{yu2024physicsguided,
  title   = {Learning dynamical systems from data: An introduction to physics-guided deep learning},
  author  = {Yu, Rose and Wang, Rui},
  journal = {Proceedings of the National Academy of Sciences},
  year    = {2024},
  pages   = {e2311808121},
  doi     = {10.1073/pnas.2311808121}
}

@article{zolman2025sindy,
  title     = {SINDy-RL for interpretable and efficient model-based reinforcement learning},
  author    = {Zolman, Nicholas and Lagemann, Christian and Fasel, Urban and Kutz, J Nathan and Brunton, Steven L},
  journal   = {Nature Communications},
  volume    = {16},
  number    = {1},
  pages     = {10714},
  year      = {2025}
}

@misc{ding2024worldmodels,
      title={Understanding World or Predicting Future? A Comprehensive Survey of World Models}, 
      author={Jingtao Ding and Yunke Zhang and Yu Shang and Jie Feng and Yuheng Zhang and Zefang Zong and Yuan Yuan and Hongyuan Su and Nian Li and Jinghua Piao and Yucheng Deng and Nicholas Sukiennik and Chen Gao and Fengli Xu and Yong Li},
      year={2025},
      eprint={2411.14499},
      archivePrefix={arXiv},
      primaryClass={cs.CL},
      url={https://arxiv.org/abs/2411.14499}, 
}

@inproceedings{
frauenknecht2025rollouts,
title={On Rollouts in Model-Based Reinforcement Learning},
author={Bernd Frauenknecht and Devdutt Subhasish and Friedrich Solowjow and Sebastian Trimpe},
booktitle={The Thirteenth International Conference on Learning Representations},
year={2025},
url={https://openreview.net/forum?id=Uh5GRmLlvt}
}

@misc{liu2025lookbeforeleap,
      title={Look Before Leap: Look-Ahead Planning with Uncertainty in Reinforcement Learning}, 
      author={Yongshuai Liu and Xin Liu},
      year={2025},
      eprint={2503.20139},
      archivePrefix={arXiv},
      primaryClass={cs.LG},
      url={https://arxiv.org/abs/2503.20139}, 
}

@misc{ghasemi2024survey,
      title={A Comprehensive Survey of Reinforcement Learning: From Algorithms to Practical Challenges}, 
      author={Majid Ghasemi and Amir Hossein Moosavi and Dariush Ebrahimi},
      year={2025},
      eprint={2411.18892},
      archivePrefix={arXiv},
      primaryClass={cs.AI},
      url={https://arxiv.org/abs/2411.18892}, 
}

@article{sunel2024faster,
  title   = {Faster MIL-based Subgoal Identification for Reinforcement Learning by Tuning Fewer Hyperparameters},
  author  = {Sunel, Saim and {\c{C}}ilden, Erkin and Polat, Faruk},
  journal = {ACM Transactions on Autonomous and Adaptive Systems},
  volume  = {19},
  number  = {2},
  articleno= {10},
  pages   = {1--29},
  year    = {2024},
  month   = apr,
  doi     = {10.1145/3643852}
}

@article{machado2023temporal,
  title   = {Temporal Abstraction in Reinforcement Learning with the Successor Representation},
  author  = {Machado, Marlos C. and Barreto, Andre and Precup, Doina and Bowling, Michael},
  journal = {Journal of Machine Learning Research},
  volume  = {24},
  number  = {80},
  pages   = {1--69},
  year    = {2023},
  url     = {https://jmlr.org/papers/v24/21-1213.html}
}

\end{document}